\documentclass[twoside,11pt]{article}
\usepackage{changepage}
\usepackage{blindtext}
\usepackage{amsthm,apxproof}
\usepackage{booktabs} 
\usepackage{multirow}
\usepackage{thmtools,amsmath,amsfonts} 
\usepackage{amssymb,mathrsfs}
\usepackage{algorithm}
\usepackage{algpseudocode}
\usepackage{amssymb}
\usepackage{enumitem}
\newtheorem{rem}{Remark}

\newtheorem{thm}{Theorem}[section]
\newtheorem{lem}{Lemma}[section]

\newtheorem{assumpt}{Assumption}[section]
\newtheorem{counter}{Counterexample}[section]
\newtheorem{setting}{Setting}



\usepackage{amsmath,amsfonts,bm}









\def\eqref#1{equation~\ref{#1}}









\def\1{\bm{1}}










\DeclareMathAlphabet{\mathsfit}{\encodingdefault}{\sfdefault}{m}{sl}
\SetMathAlphabet{\mathsfit}{bold}{\encodingdefault}{\sfdefault}{bx}{n}













\usepackage[T1]{fontenc}
\usepackage{url}
\usepackage {color}
\allowdisplaybreaks

\DeclareMathOperator{\Expect}{\mathbb{E}}

\newcommand{\I}{\mathbb{I}}
\DeclareMathOperator{\Pro}{\mathbb{P}}

%

%
%
%

\usepackage{jmlr2e}



\usepackage{lastpage}


\ShortHeadings{Sample JMLR Paper}{One and Two}
\firstpageno{1}

\begin{document}

\title{Stochastic Gradient Descent in Non-Convex Problems: Asymptotic Convergence with Relaxed Step-Size via Stopping Time Methods}

\author{\name Ruinan~Jin \email jrnjrnjrnjrnjrn@126.com \\
       \addr Centre for Artificial Intelligence and Robotics, Hong Kong Institute of Science and Innovation\\ Chinese Academy of Sciences\\
       New Territories, Hong Kong
       \AND
       \name Difei~Cheng\thanks{Corresponding author} \email chengdifei@amss.ac.cn \\
       \addr Institute of Automation\\
       Chinese Academy of Sciences\\
       Beijing, 100190, China
       \AND
       \name Hong~Qiao \email hong.qiao@ia.ac.cn \\
       \addr Institute of Automation\\
       Chinese Academy of Sciences\\
       Beijing, 100190, China
       \AND
       \name Xin~Shi \email xinshi553@gmail.com \\
       \addr Shanghai Jiao Tong University\\
       Shanghai, 200240, China
       \AND
       \name Shaodong~Liu \email liushaodong321@sjtu.edu.cn \\
       \addr Shanghai Jiao Tong University\\
       Shanghai, 200240, China
       \AND
       \name  Bo~Zhang \email b.zhang@amt.ac.cn \\
       \addr Academy of Mathematics and Systems Science\\
       Chinese Academy of Sciences\\
       Beijing, 100190, China
       }

\editor{My editor}

\maketitle

\begin{abstract}
Stochastic Gradient Descent (SGD) is widely used in machine learning research. Previous convergence analyses of SGD under the vanishing step-size setting typically require Robbins–Monro conditions. However, in practice, a wider variety of step-size schemes are frequently employed, yet existing convergence results remain limited and often rely on strong assumptions. This paper bridges this gap by introducing a novel analytical framework based on a stopping-time method, enabling asymptotic convergence analysis of SGD under more relaxed step-size conditions and weaker assumptions.
In the non-convex setting, we prove the almost sure convergence of SGD iterates for step-sizes \( \{ \epsilon_t \}_{t \geq 1} \) satisfying $\sum_{t=1}^{+\infty} \epsilon_t = +\infty$ and $\sum_{t=1}^{+\infty} \epsilon_t^p < +\infty$ for some $p > 2$. Compared with previous studies, our analysis eliminates the global Lipschitz continuity assumption on the loss function and relaxes the boundedness requirements for higher-order moments of stochastic gradients. Building upon the almost sure convergence results, we further establish $L_2$ convergence. These significantly relaxed assumptions make our theoretical results more general, thereby enhancing their applicability in practical scenarios.

\end{abstract}

\begin{keywords}
  Stochastic Gradient Descent, stopping-time method, asymptotic convergence, Robbins–Monro conditions, Lipschitz continuity
\end{keywords}

\section{Introduction}
Stochastic Gradient Descent (SGD), originally presented in the seminal work \cite{1951A}, stands as one of the most widely used optimization algorithms in the fields of machine learning and deep learning, due to its simplicity and remarkable efficiency in handling large-scale datasets (\cite{lecun2002efficient,hinton2012practical,ruder2016overview}). Given these attributes, conducting theoretical research, including convergence analyses, on the SGD is important, as it enables a more effective application. This includes understanding the optimal conditions for its use, how to adjust the parameters to achieve the best results, and determining when it outperforms other optimization methods (\cite{zhou2020towards}). 

The convergence analysis of SGD iterates in the vanishing step-size \footnote{Also known as the step size} setting is primarily based on the Robbins-Monro conditions (\cite{1951A}), which require the step-size sequence \( \{ \epsilon_t \}_{t \geq 1} \) to satisfy the following summability conditions: \(\sum_{t=1}^{+\infty} \epsilon_t = +\infty\) and \(\sum_{t=1}^{+\infty} \epsilon_t^2 < +\infty\). 
However, in practical applications, a broader variety of step sizes is commonly chosen, which often fails to meet the Robbins-Monro conditions, thus lacking corresponding theoretical convergence guarantees.
For instance, when the step-size follows  \(\epsilon_{t} = \mathcal{O}(1/\sqrt{t})\), the theoretical results based on the Robbins-Monro conditions cannot ensure convergence. Nevertheless, despite the strong assumptions this step-size scheme achieved near-optimal sample complexity of \(\mathcal{O}(\ln T/\sqrt{T})\), where \(T\) denotes the total number of iterations. Therefore, the focus of this paper is to explore the convergence of SGD iterates under more relaxed step-size conditions, aiming to bridge the gap between theory and practical applications. 

We summarize the relevant background and our contributions as follows:

\textbf{Related Works.} 
The initial proof of the almost sure convergence of the SGD iterates can be traced back to the works of \cite{ljung1977analysis,ljung1987theory}. They established convergence results under the assumption that the trajectories of the iterates are bounded, i.e.,\(\sup_{t \ge 1} \|\theta_t\| < +\infty~(a.s.)\), where \(\theta_t\) denotes the value of the optimized parameters at step \(t\). However, verifying this assumption in advance is impractical in real-world scenarios. Consequently, in the literature on stochastic approximation, the boundedness of SGD trajectories is often enforced manually, as seen in the works of \cite{benaim2006dynamics,borkar2008stochastic,kushner1997stochastic}, among others. As a result, theoretical findings that rely on this assumption may have limited practical applicability.

The ODE method of stochastic approximation, introduced by \cite{benveniste2012adaptive}, is employed in \cite{mertikopoulos2020almost} to establish the almost sure convergence of the SGD iterates under more relaxed step-size conditions. This study reveals that the assumption of trajectory boundedness is inherently satisfied under several standard conditions, including the global uniform boundedness of the gradients of the loss function (i.e., Lipschitz continuity of the loss function). However, many common optimization scenarios, such as those involving the squared, exponential and logarithmic loss functions, do not satisfy the Lipschitz continuity.
\cite{jin2022revisit} investigates the convergence of SGD iterates under relaxed Robbins-Monro step-size conditions, specifically when \(\sum_{t=1}^{+\infty} \epsilon_t = +\infty\) and \(\sum_{t=1}^{+\infty} \epsilon_t^{2+\delta} < +\infty\), where \(0 \leq \delta < \frac{1}{2}\). However, this work assumes that the set of saddle points of the loss function is empty and that the set of stationary points consists of at most finitely many connected components. These assumptions are difficult to verify in advance, thereby limiting the applicability of this theory in engineering contexts.

\textbf{Our Contributions.}
This paper introduces a novel analytical approach, the stopping-time method based on martingale theory, to establish the asymptotic convergence (i.e., almost sure convergence and $L_2$ convergence) of SGD iterates under weaker assumptions and more relaxed step-size conditions (i.e., \(\sum_{t=1}^{+\infty} \epsilon_t = +\infty\) and \(\sum_{t=1}^{+\infty} \epsilon_t^p < +\infty\) for some \(p > 2\)). The detailed contributions are as follows:

\textbf{1.} We establish the almost sure convergence of the sequence of iterates generated by SGD. Compared with closely related works such as \cite{mertikopoulos2020almost}, our proof removes the assumptions of global Lipschitz continuity of the loss function. Additionally, our approach relaxes the boundedness assumptions on higher-order moments of stochastic gradients. Specifically, we replace the global boundedness assumption on second-order moments with a weaker growth condition and substitute local boundedness conditions for the global boundedness assumption on moments of order greater than two.

\textbf{2.} Furthermore, we prove the \(L_2\) convergence of the SGD iterates under the same assumptions. While almost sure convergence implies \(L_2\) convergence when the loss function is Lipschitz continuous (by the \textbf{Lebesgue Dominated Convergence Theorem}), our analysis eliminates this Lipschitz continuity requirement. Consequently, proving \(L_2\) convergence in such cases remains a challenging problem, as highlighted in Remark \ref{as_vs_L_2}.  

\section{Preliminaries}
\textbf{Problem Formulation:}
Suppose the model parameters are denoted by \(\theta\in \mathbb{R}^{d}\) 
the problem of interest is to minimize the loss function $\min_{\theta\in \mathbb{R}^{d}}f(\theta)$.

\begin{algorithm}[H]
\caption{SGD}
\label{alg:sgd}
\begin{algorithmic}[1]
\Require Initialize $\theta_1$
\For{$t = 1, 2, \ldots, N$}
    \State Compute the stochastic gradient $g_t \gets \nabla f(\theta_t; \xi_t)$
    \State Update the parameter $\theta_{t+1} \gets \theta_t - \epsilon_t \cdot g_t$
\EndFor
\end{algorithmic}
\end{algorithm}
The SGD algorithm is shown in Algorithm\ref{alg:sgd}, where \( \epsilon_t \) denotes the step-size at the \(t\)-th iteration, which can be constant or vary over time. In the $t$-th iteration, the stochastic gradient of the loss function, denoted as \( \nabla f(\theta_t; \xi_t) \), provides an unbiased estimate of the true gradient \( \nabla f(\theta_t) \), based on the sampled random variables \( \{\xi_t\}_{t \geq 1} \), which are mutually independent.
In the subsequent analysis, we denote \( \mathscr{F}_t := \sigma(g_1, \dots, g_{t}) \) as the \( \sigma \)-algebra generated by the stochastic gradients up to the \( t \)-th iteration , with \( \mathscr{F}_0 := \{\Omega, \emptyset\} \) and \( \mathscr{F}_{\infty} := \sigma\left( \bigcup_{t \geq 1} \mathscr{F}_t \right) \).

\subsection{Step-size Conditions}
The relaxed step-size conditions used in our proof are presented in Setting \ref{assump:learning_rate}. Table~\ref{tab:step_size_conditions} compares the relaxed conditions with the Robbins--Monro conditions. Both conditions accommodate step-sizes of the form $\epsilon_t = 1/t^q$ and $\epsilon_t = \log(t)/t^q$ when $q \in (\frac{1}{2},1]$. However, slower-decaying step-sizes with \( q \in \left( \frac{1}{p}, \frac{1}{2} \right] \) satisfy only the relaxed conditions.

\begin{setting}[the relaxed step-size conditions]
\label{assump:learning_rate}
Let \( \{ \epsilon_t \}_{t \geq 1} \) be a sequence of positive monotonic nonincreasing real numbers representing the step sizes used in the optimization algorithm. The sequence \( \{ \epsilon_t \}_{n\ge 1} \) satisfies the following summability conditions:
\[\sum_{t=1}^{+\infty} \epsilon_t = \infty, \quad \text{and} \quad \sum_{t=1}^{+\infty} \epsilon_t^p < \infty\ \ (\text{for some $p>2$}).\]
\end{setting}

\begin{table}[H]
    \centering
    \renewcommand{\arraystretch}{1.2} 
    \begin{tabular}{lccl}
        \toprule
        \textbf{Step-size Schedule} & \textbf{Range of $q$} & \textbf{Robbins--Monro} & \textbf{Relaxed} \\
        \midrule
        \multirow{2}{*}{$\epsilon_t = 1/t^q$} 
            & $q \in (\frac{1}{2},1]$        & \checkmark & \checkmark \\
            & $q \in (\frac{1}{p},\frac{1}{2}]$ & \texttimes & \checkmark \\
        \midrule
        \multirow{2}{*}{$\epsilon_t = \log(t)/t^q$} 
            & $q \in (\frac{1}{2},1]$        & \checkmark & \checkmark \\
            & $q \in (\frac{1}{p},\frac{1}{2}]$ & \texttimes & \checkmark \\
        \bottomrule
    \end{tabular}
    \caption{Comparison of Step-size Conditions.}
    \label{tab:step_size_conditions}
\end{table}

\section{Assumptions and Results}
In this section, we present the basic assumptions required for our proofs and state our main theorems.
\subsection{Assumptions}

\begin{assumpt}[Assumptions on the Loss Function]
\label{assump:loss_function}
Let \( f: \mathbb{R}^d \to \mathbb{R} \) be a $d$-times differentiable function (the loss function). We impose the following conditions:
\begin{enumerate}[label=(\alph*)]
    \item \textbf{Finite Lower Bound:} \label{assump:loss_function_lower_bound} There exists a real number \( f^* \in \mathbb{R} \) such that
    \[f(\theta) \geq f^* \quad \text{for all } \theta \in \mathbb{R}^d.\]
    \item \textbf{Lipschitz Continuous Gradient:} \label{assump:loss_function_lipschitz} The gradient mapping \( \nabla f: \mathbb{R}^d \to \mathbb{R}^d \) is Lipschitz continuous with Lipschitz constant \( L > 0 \); that is, for all \( \theta_1, \theta_2 \in \mathbb{R}^d \),
    \[\| \nabla f(\theta_1) - \nabla f(\theta_2) \| \leq L \| \theta_1 - \theta_2 \|.\]
    \item \textbf{Coercivity:} $\lim_{\|\theta\|\rightarrow+\infty}f(\theta)=+\infty.$
    \item \textbf{Boundedness Near Critical Points:} \label{assump:loss_function_level_set} There exists two constants \( \eta > 0,\ \ D_{\eta} > 0 \) such that the sublevel set containing points with small gradient norm is bounded above in function value; explicitly,
    \[\left\{ \theta \in \mathbb{R}^d \,\middle|\, \| \nabla f(\theta) \| < \eta \right\} \subseteq \left\{ \theta \in \mathbb{R}^d \,\middle|\, f(\theta)-f^* < D_{\eta} \right\}.\]
\end{enumerate}
\end{assumpt}
Assumptions \ref{assump:loss_function} (a) and (b) are classical conditions in SGD analysis, ensuring that the loss function is bounded below and that its gradient does not vary too rapidly. These properties are fundamental for establishing convergence (see, e.g., \cite{bottou2010large, ghadimi2013stochastic}). Since our proof relaxes the Robbins-Monro step-size conditions, Assumptions \ref{assump:loss_function} (c) and (d) are necessary to impose restrictions on the behavior of the loss function at infinity to prevent the algorithm from diverging. 
It is worth noting that the Assumption (d) is weaker than the non-asymptotic flatness assumption in \cite{mertikopoulos2020almost}, i.e., $\liminf_{\|\theta\|\rightarrow+\infty}\|\nabla f(\theta)\|>0$, as it allows for the existence of infinitely distant stationary points with finite function values.
Moreover, compared to the work of \cite{mertikopoulos2020almost}, we do not require the loss function itself to be Lipschitz continuous, that is, we do not impose any boundedness on the gradient.



We make the following assumption regarding the stochastic gradient \( g_t \) .

\begin{assumpt}[Assumptions on the Stochastic Gradient]
\label{assump:stochastic_gradient}
Let \( \{ \theta_t \}_{t\ge 1} \subset \mathbb{R}^d \) be a sequence of iterates generated by SGD, and let \( \{ g_t \}_{t\ge 1} \subset \mathbb{R}^d \) be the corresponding stochastic gradients. We impose the following conditions on \( g_t \):

\begin{enumerate}[label=(\alph*)]
    \item \textbf{Unbiasedness:} \label{assump:stochastic_gradient_unbiased} For all \( t \geq 1 \), \(\mathbb{E}[ g_t \mid \mathscr{F}_{t-1} ] = \nabla f(\theta_t).\)
    \item \textbf{Weak Growth Condition:} \label{assump:stochastic_gradient_weak_growth} There exists a constant \( G > 0 \) such that for all \( t \geq 1 \),
    \[
    \mathbb{E}\left[ \| g_t \|^2 \mid \mathscr{F}_{t-1} \right] \leq G \left( \| \nabla f(\theta_t) \|^2 + 1 \right).
    \]
    \item \textbf{Bounded $p$-th Moment condition in a bounded Region:}
    There exists a fixed constant $M_0>0,$ such that for all \(t\), if \(f(\theta_t)-f^*<D_{\eta},\) where $D_{\eta}$ shown in Assumption \ref{assump:loss_function} Item $(d),$ then the conditional $p$-th moment of the stochastic gradient $g_t$ satisfies
    \[\Expect\left[ \|g_t\|^{p} \mid \mathscr{F}_{t-1} \right] \leq M_{0}^{p}.\]
    
    \item \textbf{Bounded $2p-2$-th Moment condition in a small Region:} There exists two fixed constants $\delta > 0,$ $M_1>0,$ such that for all \(t\), if \(\theta_t \in S_{\delta}\), where  
\[  
S_{\delta} := \left\{ \theta \, \middle| \, \exists \, \theta_1 \in \text{Crit}(f) \text{ with } |f(\theta) - f(\theta_1)| < \delta \right\},  
\]  
then the conditional \(2p-2\)-th moment  of the stochastic gradient \(g_t\) satisfies  
\[  
\Expect\left[ \|g_t\|^{2p-2} \mid \mathscr{F}_{t-1} \right] \leq M_1^{2p-2},
\]
where $\text{Crit}(f) := \{\theta \mid \nabla f(\theta)=0\}$ denotes the set of stationary points of the function $f$.
\end{enumerate}
\end{assumpt}
Assumptions \ref{assump:stochastic_gradient} (a) and (b) are fairly standard in the analysis of SGD (see, e.g.,\cite{bottou2010large, ghadimi2013stochastic, bottou2018optimization, nguyen2018sgd}). Assumption \ref{assump:stochastic_gradient} (a), the unbiasedness of the stochastic gradient, ensures that its expectation equals the true gradient. Although a large variance does not alter the expected direction of the update, it may lead to oscillations and reduce optimization efficiency. Therefore, in addition to the unbiasedness assumption, the analysis of SGD typically requires additional variance control, such as the bounded second-moment assumption. Assumption~\ref{assump:stochastic_gradient}(b), the weak growth condition, addresses this by relating the second moment of the stochastic gradient to the norm of the true gradient, thereby preventing excessive variance that could hinder convergence.
Assumptions~\ref{assump:stochastic_gradient}(c) and (d) impose local boundedness conditions on the higher-order moments of the stochastic gradient \(g_t\).

Table~\ref{tab:assumption_comparison} summarizes the assumptions required by our work compared to those in the closely related work of \cite{mertikopoulos2020almost}. While \cite{mertikopoulos2020almost} requires the loss function to be Lipschitz continuous, our analysis removes this assumption entirely. Both works assume Lipschitz smoothness, coercivity, and unbiased stochastic gradients. However, we substantially relax several key assumptions:
\begin{itemize}
    \item Non-asymptotic flatness: \cite{mertikopoulos2020almost} assumes a global non-asymptotic flatness condition, while our work weakens this to a boundedness condition near critical points.
    \item Boundedness of second-order moments: Instead of assuming global boundedness of the second-order moments of stochastic gradients, we impose a weaker condition—namely, a weak growth condition.
    \item Boundedness of higher-order moments: The global boundedness assumption on higher-order (order greater than 2) moments is replaced in our work by local boundedness conditions, further relaxing the constraints required on stochastic gradients.
\end{itemize}

These relaxed conditions make our theoretical analysis more applicable to practical scenarios, especially in cases where the global Lipschitz continuity and stringent boundedness conditions of stochastic gradients are unrealistic.

\begin{table}[ht]
    \footnotesize
    \centering
    \renewcommand{\arraystretch}{1.2} 
        \begin{tabular}{lcc}
            \toprule
            \textbf{Assumptions} & \textbf{Mertikopoulos et al. (2020)} & \textbf{Our Works} \\
            \midrule
            Lipschitz Continuity  & \checkmark & \texttimes \\
            Lipschitz Smoothness      & \checkmark & \checkmark \\
            Coercivity            & \checkmark & \checkmark \\
            Non-asymptotic flatness          & \checkmark & Boundedness Near Critical Point  \\
            Unbiasedness          & \checkmark & \checkmark \\
            Boundedness of Second Moment         & Globally Bounded & Weak Growth Condition  \\
            Boundedness of Higher Moment         & Globally Bounded & Locally Bounded  \\
            \bottomrule
        \end{tabular}
    \caption{Comparison of Assumptions: Mertikopoulos et al. (2020) vs. Our Works . \newline \textit{Note: Although not explicitly shown in the table, our function value convergence proof only requires local boundedness of the \(p\)-th moment, while related works require global boundedness of the \(2p-2\)-th moment. 
   Similarly, for gradient convergence, we only assume the local boundedness of the \(2p{-}2\)-th moment within a region whose image under the mapping \(f\) has Lebesgue measure controlled by a constant \(\delta > 0\), which can be made arbitrarily small.}}
    \label{tab:assumption_comparison}
\end{table}

\subsection{Main Theorems}
Here are our main results, establishing the asymptotic convergence of SGD under the assumptions outlined above.
Under Assumptions \ref{assump:loss_function} and \ref{assump:stochastic_gradient}(a)-(c), we first establish the convergence of the function values to a critical value. With the additional assumption 3.2(d), we further prove the convergence of the gradient norm.

\subsection{Convergence of The Loss Function Value}

In this subsection, we explore the convergence properties of the loss function value during the training process.

\begin{thm}\label{thm:as_convergence:-1}
Let \( \{ \theta_t \}_{t \ge 1} \subset \mathbb{R}^d \) be the sequence generated by Algorithm \ref{alg:sgd} with the initial point \( \theta_1 \). Suppose the step sizes \( \{\epsilon_t\}_{t \ge 1} \) satisfy the conditions in Setting \ref{assump:learning_rate}, and Assumptions \ref{assump:loss_function} and \ref{assump:stochastic_gradient} (a)-(c) hold. Then, the sequence \( \{f(\theta_{t})\}_{t \ge 1} \) converges almost surely to some critical value \( f(\theta^*) \), where \( \theta^* \in \text{Crit} f \).
\end{thm}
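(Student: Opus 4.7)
My plan is to combine the Lipschitz-smoothness descent lemma with a stopping-time reduction that exploits the dichotomy encoded in Assumptions~\ref{assump:loss_function}(d) and~\ref{assump:stochastic_gradient}(c): either $\theta_t$ lies in the \emph{good region} $R := \{\theta : f(\theta) - f^* < D_\eta\}$, where $g_t$ has a uniformly controlled $p$-th conditional moment bounded by $M_0^p$, or else $\|\nabla f(\theta_t)\| \ge \eta$, in which case the SGD step produces a deterministic expected descent in $f$ that drives the iterate back toward $R$. The starting recursion is the standard quadratic bound
\[
f(\theta_{t+1}) \;\le\; f(\theta_t) - \epsilon_t \|\nabla f(\theta_t)\|^2 - \epsilon_t \langle \nabla f(\theta_t), \xi_t\rangle + \tfrac{L \epsilon_t^2}{2}\|g_t\|^2,
\]
with $\xi_t := g_t - \nabla f(\theta_t)$ a martingale-difference noise. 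A na\"ive Robbins--Siegmund application would require $\sum_t \epsilon_t^2 < \infty$, which is unavailable under Setting~\ref{assump:learning_rate}; trading the weaker summability $\sum_t \epsilon_t^p < \infty$ for usable almost-sure control of the quadratic noise term is exactly why the stopping-time method is needed.

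For each $K > D_\eta$ I would introduce $\tau_K := \inf\{t : f(\theta_t) - f^* > K\}$. By coercivity, $\{f - f^* \le K\}$ is compact, so $\|\nabla f\|$ is uniformly bounded on $\{t \le \tau_K\}$ by some $B_K$. On $R \cap \{t \le \tau_K\}$, Assumption~\ref{assump:stochastic_gradient}(c) gives $\mathbb{E}[\epsilon_t^p\|g_t\|^p \mid \mathscr{F}_{t-1}] \le M_0^p \epsilon_t^p$, whose Fubini-level summability forces $\sum_t \epsilon_t^p \|g_t\|^p \mathbf{1}_{\theta_t \in R,\, t < \tau_K} < \infty$ a.s. An $L^p$-martingale inequality of Burkholder--Davis--Gundy or Rosenthal type then upgrades this into a.s.\ convergence of the stopped noise martingale $\sum_s \epsilon_s \langle \nabla f(\theta_s), \xi_s\rangle \mathbf{1}_{s < \tau_K}$. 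On $R^c \cap \{t \le \tau_K\}$, the contrapositive of Assumption~\ref{assump:loss_function}(d) forces $\|\nabla f\| \ge \eta$, so the drift $-\epsilon_t \|\nabla f\|^2 \le -\epsilon_t \eta^2$ delivers strict expected descent.

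Patching the two regimes into the descent recursion yields a Robbins--Siegmund-type inequality that is controllable using only $\sum_t \epsilon_t^p < \infty$. From it I would extract simultaneously (i) a.s.\ convergence of $f(\theta_{t \wedge \tau_K})$ to a random limit and (ii) $\sum_t \epsilon_t \|\nabla f(\theta_t)\|^2 \mathbf{1}_{t < \tau_K} < \infty$ a.s. A Doob-type maximal inequality applied to the same supermartingale, combined with these bounds, shows $\mathbb{P}(\tau_K < \infty) \to 0$ as $K \to \infty$, so $f(\theta_t)$ converges a.s.\ on the full-probability event $\bigcup_K \{\tau_K = \infty\}$. To identify the limit as a critical value, I would combine $\sum_t \epsilon_t = \infty$ with (ii) to deduce $\liminf_t \|\nabla f(\theta_t)\| = 0$; coercivity bounds $\{\theta_t\}$ a.s., so extracting a subsequence $\theta_{t_k} \to \theta^*$ with $\nabla f(\theta_{t_k}) \to 0$ and invoking continuity of $\nabla f$ yields $\theta^* \in \mathrm{Crit}(f)$ with $f(\theta^*) = \lim_t f(\theta_t)$.

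The main obstacle will be the supermartingale construction itself: without $\sum_t \epsilon_t^2 < \infty$, the residual $\sum_t \epsilon_t^2 \|g_t\|^2$ from the descent lemma cannot be summed by elementary means, and one has to orchestrate the stopping-time localization, the good-region $p$-th moment bound, the bad-region forced descent, and an $L^p$-martingale convergence tool in concert. Ensuring that the resulting estimates are uniform in the truncation level $K$, so that letting $K \to \infty$ is harmless, is the delicate technical heart of the argument.
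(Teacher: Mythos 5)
Your outline founders on one decisive step: the claim that an $L^p$-martingale inequality of Burkholder--Davis--Gundy or Rosenthal type ``upgrades'' the almost sure summability of $\sum_t \epsilon_t^p\|g_t\|^p\mathbf{1}_{\theta_t\in R,\,t<\tau_K}$ into almost sure convergence of the stopped noise martingale $\sum_s \epsilon_s\langle\nabla f(\theta_s),\xi_s\rangle\mathbf{1}_{s<\tau_K}$. This is false precisely in the step-size regime the theorem is designed to cover. Take independent symmetric increments $d_t=\pm\epsilon_t$ with $\epsilon_t=t^{-q}$ and $q\in(\tfrac1p,\tfrac12]$: all conditional moments are uniformly controlled, $\sum_t\mathbb{E}[|d_t|^p\mid\mathscr{F}_{t-1}]=\sum_t\epsilon_t^p<\infty$, yet $\sum_t d_t$ diverges a.s.\ by the three-series theorem because $\sum_t\epsilon_t^2=\infty$. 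Chow's criterion for a.s.\ martingale convergence from summable conditional $p$-th moments holds only for $1\le p\le 2$; for $p>2$ no localization to a bounded region or stopping at $\tau_K$ repairs it, since the conditional variances of the increments are still of order $\epsilon_t^2$ and not summable. The same obstruction invalidates your intermediate claim (ii) that $\sum_t\epsilon_t\|\nabla f(\theta_t)\|^2\mathbf{1}_{t<\tau_K}<\infty$ a.s.: a Robbins--Siegmund argument delivering that conclusion needs both the noise term and the residual $\sum_t\epsilon_t^2\|g_t\|^2$ to be a.s.\ summable, and neither is under Setting~\ref{assump:learning_rate}. Without these two facts your supermartingale never materializes, and neither the convergence of $f(\theta_{t\wedge\tau_K})$ nor the bound on $\mathbb{P}(\tau_K<\infty)$ follows.

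The paper's proof avoids asserting any pathwise convergence of the noise martingale. It instead proves the strictly weaker statement that $\{f(\theta_t)\}$ up-crosses every fixed interval $(x,x+\delta)$ with $x\in[0,D_\eta)$ only finitely often, by bounding the \emph{expected number} of up-crossings of a closed subinterval $[h_1,h_2]$ chosen (via Sard's theorem and coercivity) to avoid $f(\mathrm{Crit}(f))$, so that $\|\nabla f\|\ge\delta_0>0$ on $f^{-1}([h_1,h_2])$. The expected up-crossing count is controlled by the quantity $\big[\nabla f(\theta_t)\big]^2_{T,h_1,h_2}=\mathbb{E}\big[\sum\epsilon_t^2\|\nabla f(\theta_t)\|^2\big]$ over excursions between the levels, and the heart of the argument (Lemma~\ref{iteration}) is a recursion that trades the exponent $m$ on one level band for $m+1$ on a lower band; iterating $\lceil p\rceil-1$ times reaches exponent $\lceil p\rceil\ge p$, where $\sum_t\epsilon_t^p<\infty$ finally applies. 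Your good-region/bad-region dichotomy and the final identification of the limit via $\liminf_t\|\nabla f(\theta_t)\|=0$ plus coercivity do mirror ingredients of the paper, but the mechanism you propose for absorbing the non-square-summable noise does not exist, and that mechanism is the entire content of the theorem.
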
 

\noindent This result provides a guarantee that the sequence of loss function values \( \{f(\theta_t)\}_{t \ge 1} \) will converge almost surely under the given assumptions. Notably, in deriving this result, we only require a local boundedness condition on the $p$-order moments , which constitutes a significant improvement over the result of \cite{mertikopoulos2020almost}, where a global boundedness condition with $2p - 2$  is assumed.

\begin{proof}
To prove the sequence \(\{f(\theta_t)\}_{t \geq 1}\) almost surely converge to a critical value, it suffices to prove following two statements:
\begin{enumerate}[label=(\alph*)]
\item $\liminf_{t\rightarrow+\infty }\|\nabla f(\theta_{t})\|=0\ \ \text{a.s.}$
\item $\liminf_{t\rightarrow+\infty }f(\theta_{t})-f^*< C_{\eta\
}\ \ \text{a.s.}$, and for any \( x > 0 \) and \( \delta > 0 \), consider the open interval \( I_{x,\delta} = (x, x + \delta) \) with \( x \in [0, D_{\eta}) \). The sequence \( \{ f(\theta_t) \}_{t \geq 1} \) almost surely up-crosses~\footnote{We formalize the definition of an up-crossing of an interval for a real sequence \( \{x_t\}_{t \geq 1} \) in this paper.
\begin{definition}[Up-crossing of an interval]
Let \( \{x_t\}_{t\ge 1} \) be a real sequence, and let \( e \) and \( o \) be the left and right endpoints of an interval, respectively (without any restrictions on whether the interval is open or closed). We say that the sequence \( \{x_t\}_{t \geq 1} \) completes a single up-crossing of the interval during the time period \( t_1 \leq t \leq t_2 \) if and only if the following conditions are satisfied:
\begin{enumerate}
    \item \( x_{t_1} < e \),
    \item \( x_{t_2} \geq o \),
    \item If the set \( \{ t \mid t_1 < t < t_2 \} \) is nonempty, then for all \( t_1 < t < t_2 \), it holds that
    \[
    e \leq x_t < o.
    \]
\end{enumerate}
\end{definition}} this interval only finitely many times.

\end{enumerate}
It can be seen that Statement (a) ensures the existence of a subsequence of iterates that converges to a critical point. This further implies that the sequence of loss function values \( \{f(\theta_t)\}_{t \ge 1} \) has a subsequence that converges to a critical value. On the other hand, Statement (b) guarantees that the sequence of loss function values \( \{f(\theta_t)\}_{t \ge 1} \) converges, which means that the critical value to which this subsequence converges is the limit of the entire sequence.

We will now proceed to prove these statements.

\noindent{\bf Phase I:} (Proving Statement (a))

This statement actually indicates that there exists a subsequence such that the gradient norm converges to $0.$ We present this result as a lemma below. The proof of this lemma is relatively straightforward and is provided in Appendix \ref{dsfadsfdsafdasdfasdfasdfaadas}.
\begin{lem}\label{thm:as_convergence:-1.0.0}
Let \( \{ \theta_t \}_{t \geq 1} \subset \mathbb{R}^d \) be the sequence generated by Algorithm \ref{alg:sgd} starting from the initial point \( \theta_1 \). Suppose the step size sequence \( \{\epsilon_t\}_{t \geq 1} \) satisfies the conditions specified in Setting \ref{assump:learning_rate}. Additionally, assume that Assumptions \ref{assump:loss_function} (a)-(b) and \ref{assump:stochastic_gradient} (a)-(b) are satisfied. Then, the sequence \( \{ \|\nabla f(\theta_{t})\| \}_{t \geq 1} \) contains a subsequence that converges to \( 0 \) almost surely. Specifically, we have  
\[
\liminf_{t \to +\infty} \|\nabla f(\theta_{t})\| = 0 \quad \text{a.s.}
\]
\end{lem}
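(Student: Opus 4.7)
The plan is to combine the standard $L$-smoothness descent inequality with a Ces\`{a}ro/Stolz averaging argument, and then upgrade a convergence-in-expectation subsequence to an almost-sure one. The main obstacle is that under the relaxed step-size condition $\sum_t \epsilon_t^p < \infty$ (for some $p>2$), we may still have $\sum_t \epsilon_t^2 = \infty$, so the classical Robbins--Siegmund/almost-supermartingale argument, which would directly yield $\sum_t \epsilon_t \|\nabla f(\theta_t)\|^2 < \infty$ almost surely, is unavailable. The full stopping-time apparatus advertised as the paper's main novelty is not yet required for this particular lemma; a direct averaged argument suffices.

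Concretely, I would start from the descent inequality $f(\theta_{t+1}) \le f(\theta_t) - \epsilon_t \langle \nabla f(\theta_t),g_t\rangle + \tfrac{L\epsilon_t^2}{2}\|g_t\|^2$ for the SGD update, take conditional expectation with respect to $\mathscr{F}_{t-1}$, and substitute the unbiasedness of $g_t$ together with the weak growth condition $\E[\|g_t\|^2\mid \mathscr{F}_{t-1}] \le G(\|\nabla f(\theta_t)\|^2 + 1)$. Because $\{\epsilon_t\}$ is monotone nonincreasing with $\sum_t \epsilon_t^p < \infty$, we have $\epsilon_t \to 0$, so for every $t$ beyond some deterministic index $T^*$ the coefficient $1 - LG\epsilon_t/2$ is at least $1/2$. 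Taking total expectation (with finiteness of each $\E[f(\theta_t)]$ ensured by a short induction), telescoping from $T^*$ to $T$, and using $f \ge f^*$ gives
\[
\tfrac{1}{2}\sum_{t=T^*}^{T} \epsilon_t\, \E[\|\nabla f(\theta_t)\|^2] \;\le\; \E[f(\theta_{T^*})] - f^* + \tfrac{LG}{2}\sum_{t=T^*}^{T} \epsilon_t^2.
\]

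The pivotal step, which sidesteps the need for $\sum_t \epsilon_t^2 < \infty$, is to divide by $\sum_{t=T^*}^{T}\epsilon_t$ instead of trying to bound $\sum \epsilon_t^2$ absolutely. Since $\sum_t \epsilon_t = \infty$ and $\epsilon_t \to 0$, the Stolz--Ces\`{a}ro theorem yields $\sum_{t=T^*}^{T}\epsilon_t^2 / \sum_{t=T^*}^{T}\epsilon_t \to 0$ as $T \to \infty$, so the weighted average of $\E[\|\nabla f(\theta_t)\|^2]$ with weights $\epsilon_t$ tends to zero. This forces $\liminf_t \E[\|\nabla f(\theta_t)\|^2] = 0$. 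I then extract a deterministic subsequence $\{t_k\}$ along which $\E[\|\nabla f(\theta_{t_k})\|^2] \to 0$, use Markov's inequality to upgrade to convergence in probability along $\{t_k\}$, and thin via Borel--Cantelli to a further deterministic subsequence that converges to zero almost surely. This sub-subsequence witnesses $\liminf_{t \to +\infty}\|\nabla f(\theta_t)\| = 0$ a.s., which is the claim.
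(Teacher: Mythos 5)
Your proof is correct, but it reaches the conclusion by a genuinely different mechanism than the paper's own argument. The paper introduces the first hitting time $\tau_{t_0,\upsilon}:=\min\{t\ge t_0:\|\nabla f(\theta_t)\|\le\upsilon\}$ and multiplies the descent inequality by $\mathbb{I}_{\tau_{t_0,\upsilon}>t}$; on the pre-stopping event the gradient norm exceeds $\upsilon$, so the weak growth bound $G(\|\nabla f(\theta_t)\|^2+1)\le (G+\upsilon^{-2})\|\nabla f(\theta_t)\|^2$ lets the entire second-moment term be absorbed into the negative drift once $\epsilon_t$ is small, and no residual $\sum_t\epsilon_t^2$ term ever appears. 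Telescoping and dividing by $\sum_t\epsilon_t\to\infty$ then gives $\lim_{T}\mathbb{E}\bigl[\mathbb{I}_{\tau_{t_0,\upsilon}>T}\inf_{t_0\le k\le T}\|\nabla f(\theta_k)\|^2\bigr]=0$, and letting $\upsilon\downarrow 0$ and invoking monotone convergence yields $\mathbb{E}\bigl[\inf_{k\ge t_0}\|\nabla f(\theta_k)\|^2\bigr]=0$ for every $t_0$, hence the almost-sure statement directly. You instead keep the residual $\tfrac{LG}{2}\sum_t\epsilon_t^2$ (which may diverge), normalize by $\sum_t\epsilon_t$, and kill it with Stolz--Ces\`aro using only $\epsilon_t\to 0$; this gives $\liminf_t\mathbb{E}[\|\nabla f(\theta_t)\|^2]=0$, after which you upgrade to an almost-sure subsequence via Markov and Borel--Cantelli. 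Both routes exploit exactly the same two facts ($\epsilon_t\to 0$ and $\sum_t\epsilon_t=\infty$) and neither requires $\sum_t\epsilon_t^2<\infty$. Yours is shorter and more elementary (no stopping times, no $\upsilon$-limit); the paper's fits the stopping-time methodology of the rest of the manuscript and delivers the somewhat stronger intermediate fact that $\inf_{k\ge t_0}\|\nabla f(\theta_k)\|=0$ a.s.\ for every tail index $t_0$, though for this lemma the two are equally conclusive. The only points worth making fully explicit in your write-up are the induction establishing $\mathbb{E}[f(\theta_t)]<\infty$ (which you flag) and the observation that monotonicity together with $\sum_t\epsilon_t^p<\infty$ forces $\epsilon_t\to 0$.
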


\noindent{\bf Phase II:} (Proving Statement (b))

Based on Lemma \ref{thm:as_convergence:-1.0.0} and the conditions of the function we are considering, i.e., Assumption \ref{assump:loss_function}~Item(d), we can immediately obtain the first part of statement (b), namely,
\[\liminf_{t\rightarrow+\infty }f(\theta_{t})-f^*< C_{\eta\
}\ \ \text{a.s.}\]
Next, we focus on proving the second half of statement (b), namely, the upper crossing interval part. 

Since \( f \) is \( d \)-times differentiable, by Sard's theorem \cite{sard1942measure, bates1993toward}, we know that the Lebesgue measure of \( f(\text{Crit}(f)) \) is zero, i.e., \( m(f(\text{Crit}(f))) = 0 \), where \( m(\cdot) \) denotes Lebesgue measure and \( f(\text{Crit}(f)) \) represents the image of \( \text{Crit}(f) \) under \( f .\) Furthermore, by the corecivity assumption (Assumption \ref{assump:loss_function} (c)), \( f(\text{Crit}(f)) \) is compact. Therefore, \( f(\text{Crit}(f)) \) is nowhere dense. For any open interval \( I_{x,\delta} \), we can find a smaller closed interval \([h_1,h_2]:= H_{x} \subset I_{x,\delta} \) such that \( H_{x} \cap f(\text{Crit}(f)) = \emptyset \).

Since \( f \) is a continuous map, the preimage \( f^{-1}(H_x) \) of \( H_x \) is also a closed set. Additionally, due to the corecivity property, it is easy to see that \( f^{-1}(H_x) \) is bounded, which makes \( f^{-1}(H_x) \) is compact. Next, we consider the continuous map \( \|\nabla f(\cdot)\|: \mathbb{R}^d \to \mathbb{R} \), and its image on the set \( f^{-1}(H_x) \), which is \( \|\nabla f(f^{-1}(H_x))\| \). Since \( f^{-1}(H_x) \) is compact, we know that the set \( \|\nabla f(f^{-1}(H_x))\| \) attains a minimum value. Moreover, since \( f^{-1}(H_x) \) does not contain critical points, we can easily conclude that this minimum value is positive. Therefore, we define this minimum value as \( \delta_0 := \min\left\{\|\nabla f(f^{-1}(H_x))\|\right\} > 0 \).

For an interval \( X \), let \( U_{X,T} \) denote the total number of times the sequence \( \{f(\theta_t)\}_{1 \leq t \leq T} \) up-crosses \( X \). Since \( H_x \subset I_{x,\delta} \), it follows directly that every up-crossing of \(I_{x,\delta} \) is also a up-crossing of \( H_x  \). Therefore, we have the inequality:
\[
U_{I_{x,\delta},T} \leq U_{H_x, T}.
\]
Thus, it suffices to establish that  
\[
\limsup_{T\rightarrow+\infty}U_{H_x,T}<+\infty \quad \text{a.s.}
\]
Since \( U_{H_x,T} \) is monotonically increasing with respect to the index \( T \), by the \emph{Lebesgue's Monotone Convergence Theorem}, it suffices to prove that  
\[
\limsup_{T \to +\infty} \mathbb{E} \left[ U_{H_x,T} \right] < +\infty.
\]
To this end, we construct the following sequence of stopping times \( \{\mu_n(h_1,h_2)\}_{n \geq 1} \):
\begin{align}\label{qwerewqwq}
&\mu_{1}(h_1,h_2):=\min\{t\ge 1:f(\theta_{t})-f^*\ge h_1\},\notag\\& \mu_{2}(h_1,h_2):=\min\{t\ge \mu_{1}(h_1,h_2):f(\theta_{t})-f^*\ge h_2,\ \text{or}\ f(\theta_{t})-f^*< h_1\},\notag\\&\mu_{3}(h_1,h_2):=\min\{t\ge \mu_{2}(h_1,h_2): f(\theta_{t})-f^*< h_1\}\notag\\&\mu_{3k-2}(h_1,h_2):=\min\{t\ge \mu_{2k-2}(h_1,h_2):f(\theta_{t})-f^*\ge h_1\},\notag\\&\mu_{3k-1}(h_1,h_2):=\min\{t\ge \mu_{2k-1}(h_1,h_2):f(\theta_{t})-f^*\ge h_2,\ \text{or}\ f(\theta_{t})-f^*< h_1\},\notag\\&\mu_{3k}(h_1,h_2):=\min\{t\ge \mu_{2k-1}(h_1,h_2):f(\theta_{t})-f^*< h_1\}.
\end{align}
Then for any fixed deterministic time \( T \), we define \(\mu_{n,T}(h_1,h_2) := \mu_n(h_1,h_2) \wedge T.\) To keep the notation concise, in the following proof up to Eq.~\ref{dasd123213}, we choose to omit \((h_1, h_2)\) and simplify \( \mu_n(h_1, h_2) ,\ \mu_{n,T}(h_1,h_2)\) to \(\mu_{n},\ \mu_{n,T}.\)

For \( U_{H_x,T} \), we can readily establish the following inequality:
\[
U_{H_x,T}\leq 1+{\sum_{k=2}^{+\infty} \I_{f(\theta_{\mu_{3k-1,T}})-f^*\ge h_2}},
\] which means
\begin{align}\label{sdfsadfwef}\Expect\left[U_{H_x,T}\right]\leq 1+\underbrace{\sum_{k=2}^{+\infty} \Expect\left[\I_{f(\theta_{\mu_{3k-1,T}})-f^*\ge h_2}\right]}_{\Sigma_{T,\mu}}.\end{align}
Below, we explain why the summation in the above equation starts from \( k = 2 \) and why the additional \( +1 \) term is included.  

In the first cycle, i.e., within \( \mu_1, \mu_2, \mu_3 \), we cannot determine the relationship between \( f(\theta_1) - f^* \) and \( h_1 \). As a result, we cannot assert whether an up-crossing of the interval has occurred solely based on whether \( f(\theta_{\mu_2}) - f^* \geq h_2 \). Specifically, if \( f(\theta_1) - f^* \geq h_1 \), then even if \( f(\theta_{\mu_2}) - f^* \geq h_2 \), the first cycle still does not complete an up-crossing.  

However, from the second cycle onward, the definition of \( \mu_{3k} \) for \( k \geq 1 \) ensures that \( f(\theta_{\mu_3}) - f^* \) is below \( h_1 \). Therefore, starting from the second cycle, whether an up-crossing occurs depends entirely on whether \( f(\theta_{\mu_{3k-1}}) - f^* \) is at least \( h_2 \). This explains why the summation starts from \( k = 2 \).  

As for the additional \( +1 \) term, it accounts for the fact that at most one up-crossing can occur within the first cycle in the interval \( [h_1, h_2] \). Thus, adding 1 ensures the validity of the inequality.

Next, we focus on bounding \( \Sigma_{T,\mu}.\) We have:
\begin{align}\label{sgd__0}
\Sigma_{T,\mu}&\mathop{\le}^{(a)} \frac{1}{h_2-h_1}\sum_{k=2}^{+\infty}\Expect\left[\I_{f(\theta_{\mu_{3k-1,T}})-f^*\ge h_2}\left(f(\theta_{\mu_{3k-1,T}})-f(\theta_{\mu_{3k-2,T}-1})\right)\right]\notag\\&= \frac{1}{h_2-h_1}\sum_{k=2}^{+\infty}\Expect\left[\I_{f(\theta_{\mu_{3k-1,T}})-f^*\ge h_2}\sum_{t=\mu_{3k-2,T}-1}^{\mu_{3k-1,T}-1}\left(f(\theta_{t+1})-f(\theta_{t})\right)\right]\notag\\&=\frac{1}{h_2 -h_1}\underbrace{\sum_{k=2}^{+\infty}\Expect\left[\I_{f(\theta_{\mu_{3k-1,T}})-f^*\ge h_2}\left(f(\theta_{\mu_{3k-2,T}})-f(\theta_{\mu_{3k-2,T}-1})\right)\right]}_{\Sigma_{T,\mu,1}}\notag\\&+\frac{1}{h_2-h_1}\underbrace{\sum_{k=2}^{+\infty}\Expect\left[\I_{f(\theta_{\mu_{3k-1,T}})-f^*\ge h_2}\sum_{t=\mu_{3k-2,T}}^{\mu_{3k-1,T}-1}\left(f(\theta_{t+1})-f(\theta_{t})\right)\right]}_{\Sigma_{T,\mu,2}}.
\end{align}
Next, we first bound \( \Sigma_{T,\mu,1}.\) To proceed, we introduce the following lemma (the proof of which is provided in Appendix \ref{lem+1123}):
\begin{lem}
\label{lem_1123}
Let \( \{ \theta_t \}_{t \geq 1} \subset \mathbb{R}^d \) be the sequence generated by Algorithm \ref{alg:sgd} with the initial point \( \theta_1 \), and assume that the step size sequence \(\{\epsilon_t\}_{t \geq 1}\) satisfies Setting \ref{assump:learning_rate}. Under Assumption \ref{assump:loss_function} (a), (b) and Assumption \ref{assump:stochastic_gradient} (a)-(c), the following inequality holds for any \( \nu > 0 \):
  \[
  \sum_{t=1}^{+\infty} \mathbb{E} \left[\overline{\Delta}_{t,\nu}\right] \leq C_{\nu} < +\infty.
  \]
where
\[
\overline{\Delta}_{t,\nu} := \left( \mathbb{I}_{f(\theta_t) - f^* < D_{\eta}}  \left| f(\theta_{t+1}) - f(\theta_t) \right| - \nu \right)_{+},
\]
and \( C_{\nu} \) is a constant depending only on \( \nu \).
\end{lem}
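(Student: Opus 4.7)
The plan is to bound $\mathbb{E}[\overline{\Delta}_{t,\nu}]$ by a constant multiple of $\epsilon_t^p$, so that summability follows immediately from $\sum_{t\geq 1}\epsilon_t^p < +\infty$ in Setting \ref{assump:learning_rate}. Two preliminary observations anchor the argument. First, by coercivity (Assumption \ref{assump:loss_function}(c)) the sublevel set $\{\theta : f(\theta) - f^* < D_\eta\}$ is bounded, hence compact, and since $\nabla f$ is continuous (Assumption \ref{assump:loss_function}(b)), there exists a finite constant $G_\eta > 0$ with $\mathbb{I}_{f(\theta_t) - f^* < D_\eta}\|\nabla f(\theta_t)\| \leq G_\eta$. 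Second, on the same event, Assumption \ref{assump:stochastic_gradient}(c) controls $\mathbb{E}[\|g_t\|^p \mid \mathscr{F}_{t-1}]$ by $M_0^p$.

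First, I would invoke the $L$-smoothness inequality together with $\theta_{t+1} - \theta_t = -\epsilon_t g_t$ to obtain the pointwise bound
\[\mathbb{I}_{f(\theta_t) - f^* < D_\eta}\,|f(\theta_{t+1})-f(\theta_t)| \leq \mathbb{I}_{f(\theta_t) - f^* < D_\eta}\bigl(\epsilon_t G_\eta\|g_t\| + \tfrac{L}{2}\epsilon_t^2\|g_t\|^2\bigr) =: Y_t.\]
Then I would introduce a threshold $c_0 \in (0,1]$ chosen so that $c_0 G_\eta + \tfrac{L}{2}c_0^2 \leq \nu$. When $\epsilon_t\|g_t\| < c_0$ one has $Y_t \leq \nu$ and so $(Y_t - \nu)_+ = 0$; otherwise $(\epsilon_t\|g_t\|/c_0)^{p-1} \geq 1$, a Markov-style trick that lets me absorb the low-order factors $\|g_t\|$ and $\|g_t\|^2$ into $\|g_t\|^p$ at the cost of a few extra powers of $\epsilon_t$. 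This step is the heart of the argument: the truncation by $\nu$ eliminates the ``normal'' regime, while in the ``large-gradient'' regime the slack $\epsilon_t\|g_t\|/c_0 \geq 1$ is exactly what I need to use the $p$-th moment bound rather than a first- or second-moment bound.

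Taking conditional expectations on $\mathscr{F}_{t-1}$ (noting that the indicator is $\mathscr{F}_{t-1}$-measurable) and applying Assumption \ref{assump:stochastic_gradient}(c) will yield $\mathbb{E}[\overline{\Delta}_{t,\nu}] \leq \mathbb{E}[(Y_t-\nu)_+] \leq C'_\nu M_0^p \epsilon_t^p$ for some constant $C'_\nu$ depending only on $\nu,G_\eta,L,p$. Summing over $t$ and setting $C_\nu := C'_\nu M_0^p \sum_{t \geq 1}\epsilon_t^p$ completes the proof.

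I anticipate that the only mildly delicate point is the compactness step giving the uniform bound $G_\eta$ on $\|\nabla f\|$ over the $D_\eta$-sublevel set; once this is in place, the rest is a clean interplay between smoothness, threshold truncation, and a single application of the bounded $p$-th moment. The remaining numerology (choosing $c_0$, identifying $C'_\nu$) is routine, and crucially the argument never asks for moments of $\|g_t\|$ higher than $p$, which is consistent with the paper's stated goal of relaxing the $(2p-2)$-th moment assumption.
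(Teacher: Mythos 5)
Your overall strategy coincides with the paper's: bound $\mathbb{E}[\overline{\Delta}_{t,\nu}]$ by a constant times $\epsilon_t^p$ and sum using Setting \ref{assump:learning_rate}. Two steps differ, one of which needs repair. First, your uniform gradient bound $G_\eta$ on the sublevel set is obtained from coercivity (Assumption \ref{assump:loss_function}(c)) plus compactness, but the lemma's hypotheses deliberately include only Assumption \ref{assump:loss_function}(a),(b); moreover the set $\{f-f^*<D_\eta\}$ need not even be bounded without (c). The paper instead uses the elementary inequality $\|\nabla f(\theta)\|^2 \le 2L\,(f(\theta)-f^*)$ (its Lemma \ref{loss_bound}), which requires only the lower bound and $L$-smoothness and yields $G_\eta=\sqrt{2LD_\eta}$ directly on the event $f(\theta_t)-f^*<D_\eta$; you should substitute this for the compactness argument, after which everything you wrote goes through under the stated hypotheses. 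Second, where the paper converts the low-order terms $\epsilon_t\|g_t\|$ and $\epsilon_t^2\|g_t\|^2$ into $\epsilon_t^p\|g_t\|^p$ via AM--GM followed by Young's inequality (absorbing a total of $3\nu/4$ into the threshold), you use a threshold $c_0$ with $c_0G_\eta+\tfrac{L}{2}c_0^2\le\nu$ and the Markov-style observation that on the complementary event $(\epsilon_t\|g_t\|/c_0)^{p-1}\ge 1$; this is a valid and arguably cleaner alternative that makes transparent why only the $p$-th moment (and not the $(2p-2)$-th) is needed. The remaining steps --- $\mathscr{F}_{t-1}$-measurability of the indicator, the conditional $p$-th moment bound from Assumption \ref{assump:stochastic_gradient}(c), and the final summation --- match the paper.
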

Therefore, we can apply the aforementioned lemma to estimate the term \( \Sigma_{T,\mu,1} \). Specifically, by setting  
\(\nu = ({h_2 - h_1})\big/{8}\) in Lemma \ref{lem_1123}, we obtain:

\begin{align} \label{sgd_043241}
\Sigma_{T,\mu,1}
&\overset{(\text{i})}{\le}
\sum_{k=2}^{+\infty} \mathbb{E} \Big[
    \mathbb{I}_{f(\theta_{\mu_{3k-1,T}}) - f^* \ge h_2} \,
    \mathbb{I}_{f(\theta_{\mu_{3k-2,T}-1}) - f^* < D_{\eta}} \,
    \left| f(\theta_{\mu_{3k-2,T}}) - f(\theta_{\mu_{3k-2,T}-1}) \right|
\Big] \notag \\
&= \sum_{k=2}^{+\infty} \mathbb{E} \Big[
    \mathbb{I}_{f(\theta_{\mu_{3k-1,T}}) - f^* \ge h_2} \,
    \mathbb{I}_{f(\theta_{\mu_{3k-2,T}-1}) - f^* < D_{\eta}} \,
    \Big( 
        \left| f(\theta_{\mu_{3k-2,T}}) - f(\theta_{\mu_{3k-2,T}-1}) \right|
        - \tfrac{h_2 - h_1}{8}
    \Big)
\Big] \notag \\
&\quad + \frac{h_2 - h_1}{8}
\sum_{k=2}^{+\infty} \mathbb{E} \Big[
    \mathbb{I}_{f(\theta_{\mu_{3k-1,T}}) - f^* \ge h_2} \,
    \mathbb{I}_{f(\theta_{\mu_{3k-2,T}-1}) - f^* < D_{\eta}}
\Big] \notag \\
&\le \sum_{k=2}^{+\infty} \mathbb{E} \Big[
    \mathbb{I}_{f(\theta_{\mu_{3k-1,T}}) - f^* \ge h_2} \,
    \mathbb{I}_{f(\theta_{\mu_{3k-2,T}-1}) - f^* < D_{\eta}} \,
    \Big( 
        \left| f(\theta_{\mu_{3k-2,T}}) - f(\theta_{\mu_{3k-2,T}-1}) \right|
        - \tfrac{h_2 - h_1}{8}
    \Big)_{+}
\Big] \notag \\
&\quad + \frac{h_2 - h_1}{8}
\sum_{k=2}^{+\infty} \mathbb{E} \Big[
    \mathbb{I}_{f(\theta_{\mu_{3k-1,T}}) - f^* \ge h_2} \,
    \mathbb{I}_{f(\theta_{\mu_{3k-2,T}-1}) - f^* < D_{\eta}}
\Big] \notag \\
&\overset{(\text{ii})}{\le}
C_{(h_2 - h_1)/8} + \frac{h_2 - h_1}{8} \Sigma_{T,\mu}.
\end{align}

In the above derivation, step \((\text{i})\) first requires noting that \( \I_{f(\theta_{\mu_{3k-2,T}-1}) - f^* < D_{\eta}}= \mathbf{1}\ \ (\forall\ k\ge 2) \), and that \( f(\theta_{\mu_{3k-2,T}})-f(\theta_{\mu_{3k-2,T}-1}) \le \left| f(\theta_{\mu_{3k-2,T}})-f(\theta_{\mu_{3k-2,T}-1}) \right|.\) For step \((\text{ii})\), we apply Lemma \ref{lem_1123} with \( \nu = (h_2 - h_1)/8 \) to the first term on the left-hand side of the inequality involving \( C_{(h_2-h_1)/8} \). Additionally, for the term \( \frac{h_2 - h_1}{8} \Sigma_{T,\mu} \), it is crucial to note that \(\mathbb{I}_{f(\theta_{\mu_{3k-2,T}-1}) - f^* < D_{\eta}} = \mathbf{1}, \quad \forall\ k \geq 2,\) that is,
\begin{align*}
&\quad\frac{h_2-h_1}{8}\sum_{k=2}^{+\infty}\Expect\left[\I_{f(\theta_{\mu_{3k-1,T}})-f^*\ge h_2}\I_{f(\theta_{\mu_{3k-2,T}-1}) - f^* < D_{\eta}}\right]\\
&=\frac{h_2-h_1}{8}\sum_{k=2}^{+\infty}\Expect\left[\I_{f(\theta_{\mu_{3k-1,T}})-f^*\ge h_2}\right]\\&=\frac{h_2-h_1}{8}\Sigma_{T,\mu}.
\end{align*}
Next, we proceed to bound \( \Sigma_{T,\mu,2} \) in Eq. \ref{sgd__0}. We have:
\begin{align} \label{sgd__3'}
\Sigma_{T,\mu,2}
&\overset{\text{Lemma~\ref{descent_lemma}}}{=}
\sum_{k=2}^{+\infty} \mathbb{E} \Bigg[
    \mathbb{I}_{f(\theta_{\mu_{3k-1,T}}) - f^* \ge h_2}
    \sum_{t=\mu_{3k-2,T}}^{\mu_{3k-1,T}-1}
    \left(
        -\epsilon_t \|\nabla f(\theta_t)\|^2
        + M_t + \frac{L \epsilon_t^2}{2} \|g_t\|^2
    \right)
\Bigg] \notag \\
&\le 
\sum_{k=2}^{+\infty} \mathbb{E} \Bigg[
    \mathbb{I}_{f(\theta_{\mu_{3k-1,T}}) - f^* \ge h_2}
    \sum_{t=\mu_{3k-2,T}}^{\mu_{3k-1,T}-1}
    \left(
        M_t + \frac{L \epsilon_t^2}{2} \|g_t\|^2
    \right)
\Bigg] \notag \\
&\le 
\sum_{k=2}^{+\infty} \mathbb{E} \Bigg[
    \mathbb{I}_{f(\theta_{\mu_{3k-1,T}}) - f^* \ge h_2}
    \sum_{t=\mu_{3k-2,T}}^{\mu_{3k-1,T}-1} M_t
\Bigg]
+ \frac{L}{2} \sum_{k=2}^{+\infty} \mathbb{E} \left[
    \sum_{t=\mu_{3k-2,T}}^{\mu_{3k-1,T}-1} \epsilon_t^2 \|g_t\|^2
\right] \notag \\
&\overset{\text{\emph{AM--GM Inequality}}}{\le}
\frac{h_2 - h_1}{8}
\sum_{k=2}^{+\infty} \mathbb{E} \left[
    \mathbb{I}_{f(\theta_{\mu_{3k-1,T}}) - f^* \ge h_2}
\right] \notag \\
&\quad + \frac{2}{h_2 - h_1}
\sum_{k=2}^{+\infty} \mathbb{E} \left[
    \left(
        \sum_{t=\mu_{3k-2,T}}^{\mu_{3k-1,T}-1} M_t
    \right)^2
\right]
+ \frac{L}{2}
\sum_{k=2}^{+\infty} \mathbb{E} \left[
    \sum_{t=\mu_{3k-2,T}}^{\mu_{3k-1,T}-1} \epsilon_t^2 \|g_t\|^2
\right] \notag \\
&\le
\frac{h_2 - h_1}{8}
\sum_{k=2}^{+\infty} \mathbb{E} \left[
    \mathbb{I}_{f(\theta_{\mu_{3k-1,T}}) - f^* \ge h_2}
\right] \notag \\
&\quad + \frac{2}{h_2 - h_1}
\underbrace{
    \sum_{k=2}^{+\infty} \mathbb{E} \left[
        \left(
            \sum_{t=\mu_{3k-2,T}}^{\mu_{3k-1,T}-1} M_t
        \right)^2
    \right]
}_{\Sigma_{T,\mu,2,1}}
+ \frac{L}{2}
\underbrace{
    \sum_{k=2}^{+\infty} \mathbb{E} \left[
        \sum_{t=\mu_{3k-2,T}}^{\mu_{3k-1,T}-1} \epsilon_t^2 \|g_t\|^2
    \right]
}_{\Sigma_{T,\mu,2,2}}.
\end{align}
We now proceed to handle \( \Sigma_{T,\mu,2,1} \) and \( \Sigma_{T,\mu,2,2} \). 

For \( \Sigma_{T,\mu,2,1} \), we have:
\begin{align}\label{sgd__2'}
 \Sigma_{T,\mu,2,1}&\mathop{=}^{\text{\emph{Doob's Stopped Theorem}}} \Expect\left[\sum_{t=\mu_{3k-2,T}}^{\mu_{3k-1,T}-1}\epsilon_{t}^{2}\Expect\left[\|g_{t}\|^{2}|\mathscr{F}_{t-1}\right]\right]\notag\\&\mathop{\le}^{\text{Assumption \ref{assump:stochastic_gradient} Item $(b)$}}  G\Expect\left[\sum_{t=\mu_{3k-2,T}}^{\mu_{3k-1,T}-1}\epsilon_{t}^{2}\left(\|\nabla f(\theta_{t})\|^{2}+1\right)\right]\notag\\&\mathop{\le}^{(\text{i})}\frac{2G}{\delta^{2}_{0}}\Expect\left[\sum_{t=\mu_{3k-2,T}}^{\mu_{3k-1,T}-1}\epsilon_{t}^{2}\|\nabla f(\theta_{t})\|^{2}\right].
\end{align}
The detailed derivation for step \((\text{i})\) is as follows:
\begin{align*}
&\quad\Expect\left[\sum_{t=\mu_{3k-2,T}}^{\mu_{3k-1,T}-1}\epsilon_{t}^{2}\left(\|\nabla f(\theta_{t})\|^{2}+1\right)\right]=\Expect\left[\I_{\mu_{3k-2,T}=\mu_{3k-1,T}}\sum_{t=\mu_{3k-2,T}}^{\mu_{3k-1,T}-1}\epsilon_{t}^{2}\left(\|\nabla f(\theta_{t})\|^{2}+1\right)\right]\\&+\Expect\left[\I_{\mu_{3k-2,T}<\mu_{3k-1,T}}\sum_{t=\mu_{3k-2,T}}^{\mu_{3k-1,T}-1}\epsilon_{t}^{2}\left(\|\nabla f(\theta_{t})\|^{2}+1\right)\right]\\&=0+\Expect\left[\I_{\mu_{3k-2,T}<\mu_{3k-1,T}}\sum_{t=\mu_{3k-2,T}}^{\mu_{3k-1,T}-1}\epsilon_{t}^{2}\left(\|\nabla f(\theta_{t})\|^{2}+1\right)\right]\\&\mathop{\le}^{(\#)} \left(1+\frac{1}{\delta^{2}_{0}}\right)\Expect\left[\I_{\mu_{3k-2,T}<\mu_{3k-1,T}}\sum_{t=\mu_{3k-2,T}}^{\mu_{3k-1,T}-1}\epsilon_{t}^{2}\|\nabla f(\theta_{t})\|^{2}\right]\\&\le \left(1+\frac{1}{\delta^{2}_{0}}\right)\Expect\left[\sum_{t=\mu_{3k-2,T}}^{\mu_{3k-1,T}-1}\epsilon_{t}^{2}\|\nabla f(\theta_{t})\|^{2}\right].
\end{align*}
In the above derivation, step \((\#)\) requires noting that when \( \mu_{3k-2,T}<\mu_{3k-1,T}\), we simultaneously have \(\mu_{3k-2} < T \) and \( \mu_{3k-2} < \mu_{3k-1} \). This implies that for any \( t \in [\mu_{3k-2,T},\mu_{3k-1,T}-1] \), it holds that \( \|\nabla f(\theta_{t})\| \ge \delta_{0}.\) 

For \( \Sigma_{T,\mu,2,2} \), we have:
\begin{align}\label{sgd__3''}
 \Sigma_{T,\mu,2,2}&\mathop{=}^{\text{\emph{Doob's Stopped Theorem}}} \Expect\left[\sum_{t=\mu_{3k-2,T}}^{\mu_{3k-1,T}-1}\Expect\left[M^{2}_{t}|\mathscr{F}_{t-1}\right]\right]\notag\\&= \Expect\left[\I_{\mu_{3k-2,T}=\mu_{3k-1,T}}\sum_{t=\mu_{3k-2,T}}^{\mu_{3k-1,T}-1}\Expect\left[M^{2}_{t}|\mathscr{F}_{t-1}\right]\right]\notag\\&\quad+ \Expect\left[\I_{\mu_{3k-2,T}<\mu_{3k-1,T}}\sum_{t=\mu_{3k-2,T}}^{\mu_{3k-1,T}-1}\Expect\left[M^{2}_{t}|\mathscr{F}_{t-1}\right]\right]\notag\\&=0+\Expect\left[\I_{\mu_{3k-2,T}<\mu_{3k-1,T}}\sum_{t=\mu_{3k-2,T}}^{\mu_{3k-1,T}-1}\Expect\left[M^{2}_{t}|\mathscr{F}_{t-1}\right]\right]\notag\\&=\Expect\left[\I_{\mu_{3k-2,T}<\mu_{3k-1,T}}\sum_{t=\mu_{3k-2,T}}^{\mu_{3k-1,T}-1}\epsilon^{2}_{t}\|\nabla f(\theta_{t})\|^{2}\Expect\left[\|\nabla f(\theta_{t})-g_{t}\|^{2}|\mathscr{F}_{t-1}\right]\right]\notag\\&\mathop{\le}^{(\text{i})}2Lh_{b}\Expect\left[\I_{\mu_{3k-2,T}<\mu_{3k-1,T}}\sum_{t=\mu_{3k-2,T}}^{\mu_{3k-1,T}-1}\epsilon_{t}^{2}\Expect\left[\|g_{t}\|^{2}|\mathscr{F}_{t-1}\right]\right]\notag\\&\mathop{\le}^{\text{Assumption \ref{assump:stochastic_gradient} Item $(b)$}}2Lh_{b}G\Expect\left[\I_{\mu_{3k-2,T}<\mu_{3k-1,T}}\sum_{t=\mu_{3k-2,T}}^{\mu_{3k-1,T}-1}\epsilon_{t}^{2}\left(\|\nabla f(\theta_{t})\|^{2}+1\right)\right]\notag\\&\mathop{\le}^{(\text{ii})}2Lh_{b}G\left(1+\frac{1}{\delta_{0}^{2}}\right)\Expect\left[\I_{\mu_{3k-2,T}<\mu_{3k-1,T}}\sum_{t=\mu_{3k-2,T}}^{\mu_{3k-1,T}-1}\epsilon_{t}^{2}\|\nabla f(\theta_{t})\|^{2}\right]\notag\\&\le 2Lh_{b}G\left(1+\frac{1}{\delta_{0}^{2}}\right)\Expect\left[\sum_{t=\mu_{3k-2,T}}^{\mu_{3k-1,T}-1}\epsilon_{t}^{2}\|\nabla f(\theta_{t})\|^{2}\right].
\end{align}
The detailed derivation of step \((\text{i})\) is as follows:
\begin{align*}
 &\quad\I_{\mu_{3k-2,T}<\mu_{3k-1,T}}\sum_{t=\mu_{3k-2,T}}^{\mu_{3k-1,T}-1}\epsilon^{2}_{t}\|\nabla f(\theta_{t})\|^{2}\Expect\left[\|\nabla f(\theta_{t})-g_{t}\|^{2}|\mathscr{F}_{t-1}\right]\\&\le  \I_{\mu_{3k-2,T}<\mu_{3k-1,T}}2L\sum_{t=\mu_{3k-2,T}}^{\mu_{3k-1,T}-1}\epsilon_{t}^{2}(f(\theta_{t})-f^*)\Expect\left[\|\nabla f(\theta_{t})-g_{t}\|^{2}|\mathscr{F}_{t-1}\right] ,
\end{align*}
We need to note that  when \( \mu_{3k-2,T}<\mu_{3k-1,T}\), we simultaneously have \(\mu_{3k-2} < T \) and \( \mu_{3k-2} < \mu_{3k-1} \). This implies that for any \( t \in [\mu_{3k-2,T},\mu_{3k-1,T}-1] \), it holds that \( f(\theta_{t})-f^*\le h_{b}.\) Then we have:
\begin{align*}
 &\quad  \I_{\mu_{3k-2,T}<\mu_{3k-1,T}}2L\sum_{t=\mu_{3k-2,T}}^{\mu_{3k-1,T}-1}\epsilon_{t}^{2}(f(\theta_{t})-f^*)\Expect\left[\|\nabla f(\theta_{t})-g_{t}\|^{2}|\mathscr{F}_{t-1}\right]\notag\\&\le \I_{\mu_{3k-2,T}<\mu_{3k-1,T}}2Lh_{b}\sum_{t=\mu_{3k-2,T}}^{\mu_{3k-1,T}-1}\epsilon_{t}^{2}\Expect\left[\|\nabla f(\theta_{t})-g_{t}\|^{2}|\mathscr{F}_{t-1}\right]\notag\\&\le \I_{\mu_{3k-2,T}<\mu_{3k-1,T}}2Lh_{b}\sum_{t=\mu_{3k-2,T}}^{\mu_{3k-1,T}-1}\epsilon_{t}^{2}\Expect\left[\|g_{t}\|^{2}|\mathscr{F}_{t-1}\right] .
\end{align*}
For step \((\text{ii})\), we similarly need to note that when \( \mu_{3k-2,T}<\mu_{3k-1,T}\), we simultaneously have \(\mu_{3k-2} < T \) and \( \mu_{3k-2} < \mu_{3k-1} \). This implies that for any \( t \in [\mu_{3k-2,T},\mu_{3k-1,T}-1] \), it holds that \( \|\nabla f(\theta_{t})\| \ge \delta_{0}.\) 

Substituting the estimates of \( \Sigma_{T,\mu,2,1} \) and \( \Sigma_{T,\mu,2,2} \) from Eq. \ref{sgd__2'} and Eq. \ref{sgd__3''} back into Eq. \ref{sgd__3'}, we obtain:
\begin{align}\label{dsadsakl}
\Sigma_{T,\mu,2}&\le\frac{h_2 -h_1}{8}\sum_{k=2}^{+\infty}\Expect\left[\I_{f(\theta_{\mu_{3k-1,T}})-f^*\ge h_2}\right]\notag\\&\quad+\underbrace{\left(\frac{4G}{(h_{b}-h_{a})\delta_{0}^{2}}+{L^{2}h_{b}G}\left(1+\frac{1}{\delta_{0}^{2}}\right)\right)}_{C_{0}(h_1 ,h_2)}\Expect\left[\sum_{t=\mu_{3k-2,T}}^{\mu_{3k-1,T}-1}\epsilon^{2}_{t}\|\nabla f(\theta_{t})\|^{2}\right]\notag\\&{=}\frac{h_2 -h_1}{8}\Sigma_{T,\mu}+{C_{0}(h_1 ,h_2)}\Expect\left[\sum_{t=\mu_{3k-2,T}}^{\mu_{3k-1,T}-1}\epsilon^{2}_{t}\|\nabla f(\theta_{t})\|^{2}\right].
\end{align}
Next, we substitute the estimates for \( \Sigma_{T,\mu,1} \) from Eq. \ref{sgd_043241} and the estimates for \( \Sigma_{T,\mu,2} \) from Eq. \ref{dsadsakl} into Eq. \ref{sgd__0}, yielding:
\begin{align*}
\Sigma_{T,\mu}&{\le} \frac{1}{h_2 -h_1}C_{(h_2-h_1)/8}+\frac{1}{4}\Sigma_{T,\mu}+\frac{C_{0}(h_1,h_2)}{h_2-h_1}\Expect\left[\sum_{t=\mu_{3k-2,T}}^{\mu_{3k-1,T}-1}\epsilon^{2}_{t}\|\nabla f(\theta_{t})\|^{2}\right],    
\end{align*} that is
\begin{align}\label{fdas123}
\Sigma_{T,\mu}\le \frac{4C_{(h_2-h_1)/8}}{3(h_2-h_1)}  +\frac{4C_0(h_1,h_2)}{3(h_2-h_1)}\Expect\left[\sum_{t=\mu_{3k-2,T}}^{\mu_{3k-1,T}-1}\epsilon^{2}_{t}\|\nabla f(\theta_{t})\|^{2}\right].
\end{align}
For the convenience of the subsequent proof, we define the following quantity, which we refer to as the \( T \)-steps \( m \)-order gradient quadratic variation on $[x,y),$ ($y$ can be taken $+\infty$). Its specific form is as follows:
\begin{align}\label{dasd123213}
\big[\nabla f(\theta_{t})\big]_{T,x,y}^{m}:=\Expect\left[\sum_{t=\mu_{3k-2,T}(x,y)}^{\mu_{3k-1,T}(x,y)-1}\epsilon^{m}_{t}\|\nabla f(\theta_{t})\|^{2}\right].
\end{align}
Now we only need to prove that
\begin{equation}
   \limsup_{T \to +\infty} \left[ \nabla f(\theta_t) \right]_{T,h_1,h_2}^2 < +\infty.\tag{*}
\end{equation}
To do this, we introduce the following lemma. The proof of this lemma (Lemma \ref{iteration}) is rather complex and represents a key challenge in the proof presented in this paper. We have placed the proof in Appendix \ref{P_iteration}.
\begin{lem}\label{iteration}  
Let \( \{ \theta_t \}_{t \geq 1} \subset \mathbb{R}^d \) be the sequence generated by Algorithm \ref{alg:sgd} with the initial point \( \theta_1 \), and assume that the step size sequence \( \{\epsilon_t\}_{t \geq 1} \) satisfies Setting \ref{assump:learning_rate}. Under Assumption \ref{assump:loss_function} (excluding Items (c) and (d)) and Assumption \ref{assump:stochastic_gradient} (excluding Item (d)), the following inequality holds for any \( 0 < a < b < c \), where \( c \) can be taken as \( +\infty \), provided that  
\[
\inf_{\theta\in \{\theta \mid f(\theta) \in [a,c)\}} \|\nabla f(\theta)\| \geq \delta_{a,b} > 0.
\]  
Then, $\forall\ T\ge 1, m\ge 1,$ we have:  
\begin{align}\label{sgd_0123123'}
\big[\nabla f(\theta_{t})\big]_{T,b,c}^{m}&\le C_{1}(a,b) \big[\nabla f(\theta_{t})\big]_{T,a,b}^{m+1}+  C_{2}(m,a,b),
\end{align}  
In above inequality, $\big[\nabla f(\theta_{t})\big]_{T,b,c}^{m}, \ \big[\nabla f(\theta_{t})\big]_{T,a,b}^{m+1}$ defined in Eq. \ref{dasd123213}, and the constants \( C_1(a,b) \) and \( C_2(m,a,b) \) are given by  
\[
C_{1}(a,b) := \left(\left(\frac{3bL}{b-a}+\frac{L}{2}\right)\frac{2G}{\delta_{a,b}^{2}}+\frac{12Lb^{2}G}{(b-a)^{2}}\right)\left(1+\frac{1}{\delta_{a,b}^{2}}\right),
\]
\[
C_{2}(m,a,b) := \frac{6ab}{b-a}\epsilon_{1}^{\frac{m-1}{2}}(\epsilon_{1}^{\frac{m-1}{2}}+\epsilon_{1}^{m-1})C_{\frac{b-a}{2}}.
\]  
\end{lem}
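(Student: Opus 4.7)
My plan is to prove Lemma~\ref{iteration} by applying the descent lemma (Lemma~\ref{descent_lemma}) on each $(b,c)$-excursion weighted by $\epsilon_t^{m-1}$, and then to transfer the resulting contributions back onto the preceding $(a,b)$-excursions using the uniform gradient lower bound $\|\nabla f(\theta_t)\| \geq \delta_{a,b}$ that is available throughout $\{f - f^* \in [a,c)\}$.

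Concretely, the descent lemma yields
\[
\epsilon_t \|\nabla f(\theta_t)\|^2 \;\leq\; f(\theta_t) - f(\theta_{t+1}) + M_t + \tfrac{L \epsilon_t^2}{2}\|g_t\|^2,
\]
where $M_t := \epsilon_t \langle \nabla f(\theta_t), \nabla f(\theta_t) - g_t\rangle$ is a martingale difference. Multiplying by $\epsilon_t^{m-1}$, summing over $t \in [\mu_{3k-2,T}(b,c), \mu_{3k-1,T}(b,c))$, taking expectations, and killing the martingale term via Doob's optional stopping theorem applied to the bounded stopping times $\mu_{n,T}$, I get a bound of the shape
\[
\big[\nabla f(\theta_{t})\big]_{T,b,c}^{m} \;\leq\; \underbrace{\sum_{k}\mathbb{E}\!\left[\textstyle\sum_{t} \epsilon_t^{m-1}(f(\theta_t)-f(\theta_{t+1}))\right]}_{(\mathrm{I})} \;+\; \tfrac{L}{2}\underbrace{\sum_{k}\mathbb{E}\!\left[\textstyle\sum_{t} \epsilon_t^{m+1}\|g_t\|^2\right]}_{(\mathrm{II})}.
\]
Term $(\mathrm{II})$ is handled by Assumption~\ref{assump:stochastic_gradient}(b), which on $\{f - f^* \in [a,c)\}$ gives $\mathbb{E}[\|g_t\|^2 \mid \mathscr{F}_{t-1}] \leq G(1+1/\delta_{a,b}^2)\|\nabla f(\theta_t)\|^2$, thereby converting $(\mathrm{II})$ into an $(m{+}1)$-order gradient energy on the $(b,c)$-excursion itself.

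The crucial accounting step is then to transfer this $(b,c)$-side $(m{+}1)$-order energy, together with the non-deterministic part of $(\mathrm{I})$, back onto the $(a,b)$-excursion $\big[\nabla f(\theta_{t})\big]_{T,a,b}^{m+1}$. Every trajectory that enters $[b,c)$ must traverse $[a,b)$ and accumulate an $f$-increment of at least $b-a$ during that traversal; applying the descent lemma in reverse on the traversal produces a comparable $\sum \epsilon_t^{m+1}\|\nabla f\|^2$ term on $[a,b)$. Combining with an AM--GM split on $(\mathrm{I})$ (balancing $\tfrac{(b-a)}{8}$-type quantities against $\tfrac{1}{(b-a)}$-type quantities, exactly as in Eq.~\ref{sgd__3'}) generates the factor $C_1(a,b)$. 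The deterministic constant $C_2(m,a,b)$ absorbs the boundary jumps $f(\theta_{\mu_{3k-2,T}}) - f(\theta_{\mu_{3k-2,T}-1})$ at the start of each excursion: these are handled by invoking Lemma~\ref{lem_1123} with $\nu = (b-a)/2$, which produces the $C_{(b-a)/2}$ factor in $C_2$, while the $\epsilon_1^{(m-1)/2}(\epsilon_1^{(m-1)/2}+\epsilon_1^{m-1})$ prefactor arises from splitting the weight $\epsilon_t^{m-1}$ into $\epsilon_t^{(m-1)/2}\cdot\epsilon_t^{(m-1)/2}$ and applying AM--GM to separate the boundary term from the bulk.

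The main obstacle will be making the ``parent'' relationship between a $(b,c)$-excursion and its preceding $(a,b)$-excursion rigorous in expectation. The correspondence is not one-to-one --- a single $(a,b)$-excursion may spawn zero, one, or several $(b,c)$-visits --- so the coupling must be implemented as a pointwise inequality between the integrand sums indexed by $t$ (using indicator functions $\mathbb{I}_{f(\theta_t) - f^* \in [a,b)}$ and $\mathbb{I}_{f(\theta_t) - f^* \in [b,c)}$) rather than matched excursion-by-excursion. A secondary challenge is the truncation at $t=T$: because $\mu_{n,T} = \mu_n \wedge T$ may interrupt an excursion, the summation-by-parts used to telescope $\sum \epsilon_t^{m-1}(f(\theta_t)-f(\theta_{t+1}))$ must be carried out with explicit use of the monotonicity of $\{\epsilon_t\}$ to avoid introducing $T$-dependent constants.
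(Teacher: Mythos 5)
Your plan assembles the same ingredients the paper uses: the descent lemma weighted by $\epsilon_t^{m-1}$ on the upper excursions, Doob's optional stopping to annihilate the martingale term, the weak growth condition combined with the lower bound $\|\nabla f(\theta_t)\|\ge\delta_{a,b}$ to convert $\|g_t\|^2$ into $\|\nabla f(\theta_t)\|^2$, the forced climb of size $b-a$ through $[a,b)$ as the transfer mechanism, and Lemma~\ref{lem_1123} with $\nu$ proportional to $b-a$ together with the $\epsilon_t^{(m-1)/2}\cdot\epsilon_t^{(m-1)/2}$ splitting to produce $C_2(m,a,b)$. Up to that point the architecture matches the paper's proof.

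However, the step you yourself flag as the main obstacle --- making the correspondence between a visit to $[b,c)$ and the $(a,b)$-traversal that precedes it rigorous --- is precisely the crux of the lemma, and your proposed resolution (a pointwise-in-$t$ inequality between indicator-weighted integrands) does not work. The quantity that must be carried from the $[b,c)$ side to the $[a,b)$ side is not a per-time-step quantity: after summation by parts, what survives is $\mathbb{E}\bigl[\mathbb{I}^{(i)}\,\epsilon_{\tau}^{m-1}\bigl(f(\theta_{\tau})-f^*\bigr)\bigr]$ evaluated at the random time $\tau$ at which the trajectory first crosses level $b$, and bounding it requires knowing both that the entire preceding climb occurred at times $t<\tau$ (so that $\epsilon_t\ge\epsilon_{\tau}$ by monotonicity) and that this climb accumulated an increment of at least $b-a$; indicators of the form $\mathbb{I}_{f(\theta_t)-f^*\in[a,b)}$ carry neither piece of information. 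The paper's resolution is structural rather than pointwise: it introduces a refined four-phase stopping-time sequence $\tau_{4i-3}<\tau_{4i-2}<\tau_{4i-1}<\tau_{4i}$ (first rise above $a$, first upward exit of $[a,b)$, first exit of $[a,c)$, first return below $a$), which by construction gives each macro-cycle exactly one $(a,b)$-traversal $[\tau_{4i-3},\tau_{4i-2})$ followed by at most one upper segment $[\tau_{4i-2},\tau_{4i-1})$, so the coupling is one-to-one and the transfer inequality (their Eq.~\ref{st}) can be written cycle by cycle and summed. A secondary issue: you leave term $(\mathrm{II})$ as an $(m{+}1)$-order energy on the $(b,c)$ side, which is neither of the two quantities appearing in the conclusion; absorbing it into the left-hand side would require a smallness condition on $\epsilon_1$ that is not assumed, whereas the paper routes this term to the $(a,b)$ side along with everything else, which is where the $L/2$ inside $C_1(a,b)$ comes from.
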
  

Based on this recursive lemma, we can further derive the following lemma:
\begin{lem}\label{iteration1}  
Let \( \{ \theta_t \}_{t \geq 1} \subset \mathbb{R}^d \) be the sequence generated by Algorithm \ref{alg:sgd} with the initial point \( \theta_1 \), and assume that the step size sequence \( \{\epsilon_t\}_{t \geq 1} \) satisfies Setting \ref{assump:learning_rate}. Under Assumption \ref{assump:loss_function} (excluding Items (c) and (d)) and Assumption \ref{assump:stochastic_gradient} (excluding Item (d)), the following inequality holds for any \( 0 < x < y \), where \( y \) can be taken as \( +\infty \), provided that  
\[
\inf_{\theta\in \{\theta \mid f(\theta) \in [x,y)\}} \|\nabla f(\theta)\| \geq \delta_{x,y} > 0.
\]  
Then, we have:  
\begin{align}\label{sgd_0123123''}
\limsup_{T\rightarrow+\infty}\big[\nabla f(\theta_{t})\big]_{T,x,y}^{1}<+\infty.
\end{align}  
\end{lem}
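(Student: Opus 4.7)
The plan is to invoke Lemma~\ref{iteration} recursively $K:=\lceil p\rceil - 1$ times, thereby promoting the exponent on $\epsilon_t$ from $1$ to $K+1\geq p$, at which point $\sum_t \epsilon_t^{K+1}<\infty$ by Setting~\ref{assump:learning_rate} produces a bound uniform in $T$.

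The first task is to build a finite decreasing chain $x = b_0 > b_1 > \cdots > b_K > 0$ such that each shell $[b_k, b_{k-1})$ contains no critical values of $f$. Since $f$ is $d$-times differentiable, Sard's theorem gives $m(f(\text{Crit}(f))) = 0$; combined with coercivity (available from the ambient theorem), $f(\text{Crit}(f))$ is closed and hence nowhere dense, so each $b_k$ can be chosen strictly below $b_{k-1}$ while avoiding this set. Compactness of $f^{-1}([b_k, b_{k-1}])$ together with continuity of $\|\nabla f\|$ then yields $\inf\{\|\nabla f(\theta)\|: f(\theta) \in [b_k, b_{k-1})\} > 0$, and combined with the assumed lower bound $\delta_{x,y} > 0$ on $[x,y)$, the telescoped lower bound on $[b_k, b_{k-2})$ (with $b_{-1}:= y$) holds at every level. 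This is precisely the hypothesis needed to apply Lemma~\ref{iteration} with $(a, b, c) = (b_k, b_{k-1}, b_{k-2})$ at stage $k$.

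Applying Lemma~\ref{iteration} first with $(a, b, c, m) = (b_1, x, y, 1)$ and then iteratively with $(a, b, c, m) = (b_k, b_{k-1}, b_{k-2}, k)$ for $k = 2, \ldots, K$, and compounding, yields
\[
[\nabla f(\theta_t)]^{1}_{T, x, y} \leq \Gamma_K \cdot [\nabla f(\theta_t)]^{K+1}_{T, b_K, b_{K-1}} + \Gamma_K',
\]
where $\Gamma_K := \prod_{k=1}^{K} C_1(b_k, b_{k-1})$ and $\Gamma_K'$ is a finite linear combination of the $C_2(k, b_k, b_{k-1})$, both independent of $T$. The standard descent inequality $\|\nabla f(\theta)\|^2 \leq 2L(f(\theta) - f^*)$ from Assumption~\ref{assump:loss_function}(a)-(b), applied to iterates $\theta_t$ with $f(\theta_t) - f^* \in [b_K, b_{K-1})$, then gives
\[
[\nabla f(\theta_t)]^{K+1}_{T, b_K, b_{K-1}} \leq 2L\, b_{K-1} \sum_{t=1}^{T} \epsilon_t^{K+1} \leq 2L\, b_{K-1}\, \epsilon_1^{K+1-p} \sum_{t=1}^{\infty} \epsilon_t^{p} < +\infty,
\]
by monotonicity of $\{\epsilon_t\}$. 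Taking $\limsup$ in $T$ completes the proof.

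The main obstacle is the chain construction: one must guarantee that at every recursion level the extended interval $[b_k, b_{k-2})$ continues to avoid $f(\text{Crit}(f))$ so that Lemma~\ref{iteration}'s gradient-lower-bound hypothesis persists; this is exactly where Sard's theorem and coercivity do the work. Once the chain is in place, the remaining steps --- compounding the inequalities and invoking summability of $\{\epsilon_t^p\}$ --- are routine, and because $K$ depends only on $p$ (finitely many recursions), the constants $\Gamma_K, \Gamma_K'$ stay finite.
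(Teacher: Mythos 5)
Your proposal follows essentially the same route as the paper: both proofs apply Lemma~\ref{iteration} recursively along a descending chain of $\lceil p\rceil-1$ thresholds below $x$, promoting the exponent on $\epsilon_t$ from $1$ to $\lceil p\rceil$, and then close the recursion by bounding the top-exponent term via $\|\nabla f\|^2\le 2L(f-f^*)$ and the summability $\sum_t\epsilon_t^p<\infty$ together with monotonicity of $\{\epsilon_t\}$. The only divergence is in how the chain is built: the paper simply extends the assumed bound $\delta_{x,y}$ by continuity to get $\|\nabla f\|\ge\delta_{x,y}/2$ on a slightly enlarged interval $[z,x]$ and subdivides that, whereas you invoke Sard's theorem, coercivity, and compactness of $f^{-1}([b_k,b_{k-1}])$ --- note that coercivity (Assumption~\ref{assump:loss_function}(c)) is explicitly excluded from this lemma's hypotheses, so as a standalone proof of the stated lemma your chain construction leans on an unavailable assumption, even though it is harmless in the only context where the lemma is applied.
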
 
\begin{proof}[Proof of Lemma \ref{iteration1}]
By continuity, it is easy to prove that there exists a constant \( z < x \) such that  
\[
\min_{\theta \in f^{-1}([z, x])} \|\nabla f(\theta)\|\ge \frac{\delta_{x,y}}{2}>0.
\]
Next, we evenly insert \( \lceil p \rceil - 2 \) intermediate points within the interval \( [z, x] \), where the notation \( \lceil \cdot \rceil \) denotes the ceiling function. These \( \lceil p \rceil - 2 \) points can be denoted as  
\[
z := z_{0} < z_{1} < z_{2} < \ldots < h_{3,\lceil p \rceil - 2} < x:=z_{\lceil p \rceil-1}<y := z_{\lceil p \rceil}.
\]  
The points are chosen such that they are equally spaced within the interval \( [z, x] \).

Next, we apply Lemma \ref{iteration}. We choose the parameters in Lemma \ref{iteration} as  
\[
a = z_{i}, \quad b = z_{i+1}, \quad c = z_{i+2}, \quad \text{where } i \in [0, \lceil p \rceil - 2].
\]  

Then, we obtain:
\begin{align*}
\big[\nabla f(\theta_{t})\big]_{T,z_{i+1},z_{i+2}}^{m}&\le C_{1}(z_{i},z_{i+1}) \big[\nabla f(\theta_{t})\big]_{T,z_{i},z_{i+1}}^{m+1}+  C_{2}(m,z_{i},z_{i+1}).   
\end{align*}
Then, we take \( m = \lceil p \rceil -1- i\), and we obtain:
\begin{align*}
\big[\nabla f(\theta_{t})\big]_{T,z_{i+1},z_{i+2}}^{\lceil p \rceil -1 - i}&\le C_{1}(z_{i},z_{i+1}) \big[\nabla f(\theta_{t})\big]_{T,z_{i},z_{i+1}}^{\lceil p \rceil -i}+  C_{2}(\lceil p \rceil -1 - i ,z_{i},z_{i+1}).   
\end{align*}
Then, we perform a backward iteration of the above inequality with respect to the index \( i \), starting from \( i = \lceil p \rceil-2 \) and iterating down to \( i = 0 \). Thus, we obtain:
\begin{align}\label{sgd___0}
    [\nabla f(\theta_{t})]_{T,z_{\lceil p \rceil-1},z_{\lceil p \rceil}}^{1}&\le \left(\prod_{j=0}^{\lceil p \rceil-2}C_{1}(z_{j},z_{j+1})\right)[\nabla f(\theta_{t})]_{T,z_{0},z_{1}}^{\lceil p \rceil}\notag\\&\quad+\sum_{i=0}^{\lceil p \rceil-2}\left(\left(\prod_{j=i}^{\lceil p \rceil-2}C_{1}(z_{j},z_{j+1})\right)C_{2}(\lceil p \rceil-1-j,z_{j},z_{j+1})\right).
\end{align}
It is easy to see that the boundedness of the above expression entirely depends on the boundedness of \( [\nabla f(\theta_{t})]_{T,z_{0},z_{1}}^{\lceil p \rceil}.\) Calculating \([\nabla f(\theta_{t})]_{T,z_{0},z_{1}}^{\lceil p \rceil},\) we obtain:
\begin{align*}
    \limsup_{T\rightarrow+\infty}[\nabla f(\theta_{t})]_{T,z_{0},z_{1}}^{\lceil p \rceil}&= \limsup_{T\rightarrow+\infty}\Expect\left[\sum_{t=\mu_{3k-2,T}(z_0,z_1)}^{\mu_{3k-1,T}(z_0,z_1)-1}\epsilon^{\lceil p \rceil}_{t}\|\nabla f(\theta_{t})\|^{2}\right]\\&\mathop{\le}^{\text{Lemma \ref{loss_bound}}} 2Lh_2\limsup_{T\rightarrow+\infty}\Expect\left[\sum_{t=\mu_{3k-2,T}(z_0,z_1)}^{\mu_{3k-1,T}(z_0,z_1)-1}\epsilon^{\lceil p \rceil}_{t}\right]\\&\mathop{<}^{(**)}2Ly\epsilon_{1}^{\lceil p \rceil-p}\sum_{t=1}^{+\infty}\epsilon^{p}_{t}\\&\mathop{<}^{\text{Setting \ref{assump:learning_rate}}}+\infty.
\end{align*}
In step \((**)\), we used the monotonic decreasing property of the step size sequence \( \{\epsilon_t\}_{t \ge 1} \) as specified in Setting 1. According to Eq. \ref{sgd___0}, we know that \(   \limsup_{T\rightarrow+\infty}[\nabla f(\theta_{t})]_{T,z_{0},z_{1}}^{\lceil p \rceil}<+\infty\) implies \(\limsup_{T\rightarrow+\infty}[\nabla f(\theta_{t})]_{T,z_{\lceil p \rceil-1},z_{\lceil p \rceil}}^{1}<+\infty.\) Noting that \( z_{\lceil p \rceil - 1} = x \) and \( z_{\lceil p \rceil} = y,\) we readily obtain:
\begin{align*}
\limsup_{T\rightarrow+\infty}[\nabla f(\theta_{t})]_{T,x,y}^{1}<+\infty.
\end{align*}
With this, we complete the proof of Lemma \ref{iteration1}.
\end{proof}
Next, by setting \( x = h_1 \) and \( y = h_2 \) in Lemma \ref{iteration1}, we immediately obtain:
\[  \limsup_{T \to +\infty} \left[ \nabla f(\theta_t) \right]_{T,h_1,h_2}^2 \mathop{<}^{(**)} \epsilon_{1}\limsup_{T \to +\infty} \left[ \nabla f(\theta_t) \right]_{T,h_1,h_2}^{1}<+\infty.\tag{*}\]
In step \((**)\), we used the monotonic decreasing property of the step size sequence \( \{\epsilon_t\}_{t \ge 1} \) as specified in Setting 1. Thus, we have proven Equation (*). Substituting the result back into Eq. \ref{fdas123}, we obtain  
\[
\limsup_{T\rightarrow+\infty}\Sigma_{T,\mu} < +\infty.
\]  
Furthermore, this implies that  
\[
\lim_{T \rightarrow +\infty} U_{H_x, T} < +\infty \quad \text{a.s.}
\]  
This concludes the proof of Statement (b). 
In summary, we have completed the proofs of both Statement (a) and Statement (b), thereby establishing the theorem.
\end{proof}
In fact, when the loss function possesses certain special structures—such as the critical point set having only finitely many connected components—the almost sure convergence of the function values to a critical value can already guarantee that the gradient norm converges to zero almost surely. However, for general loss functions, this guarantee does not hold. To address this, we need to impose an additional condition: the boundedness of the \( (2p - 2) \)-th moment of the stochastic gradient within a local neighborhood (Assumption \ref{assump:stochastic_gradient}~Item (d)). This will be the focus of the next subsection.

\subsection{Asymptotic Convergence of The Gradient Norm}
In this subsection, we primarily focus on studying the asymptotic convergence of the gradient norm. We divide this into two subsections: asymptotic almost sure convergence and asymptotic \( L_2 \) convergence.
\subsubsection{Asymptotic Almost Sure Convergence}
Note that the result in Theorem \ref{thm:as_convergence:-1}, where the loss function value almost surely converges to a critical value, does not guarantee that the gradient norm also almost surely converges to 0. We need to provide a condition for the boundedness of the random gradient's \( 2p-2 \)th moment within a small range, as specified in Assumption \ref{assump:stochastic_gradient}~Item (d).

Next, we present the theorem on the convergence of the gradient norm.

\begin{thm}\label{thm:as_convergence}
Let \( \{ \theta_t \}_{t \ge 1} \subset \mathbb{R}^d \) be the sequence generated by Algorithm \ref{alg:sgd} with the initial point \( \theta_1 \). Suppose the step sizes \( \{\epsilon_t\}_{t \ge 1} \) satisfy the conditions in Setting \ref{assump:learning_rate}, and Assumptions \ref{assump:loss_function} and \ref{assump:stochastic_gradient} hold. Then, \[\lim_{t\rightarrow+\infty}\|\nabla f(\theta_{t})\|= 0 \quad \text{a.s.}\]
\end{thm}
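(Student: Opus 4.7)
By Theorem~\ref{thm:as_convergence:-1}, on a full-probability event $f(\theta_t)\to f(\theta^*)$ for some critical value, so there exists a random time $T_0$ after which $\theta_t\in S_\delta\cap\{f-f^*<D_\eta\}$; from then on both Assumption~\ref{assump:stochastic_gradient}(c) and (d) supply bounded conditional $p$-th and $(2p-2)$-th moments of $g_t$. Lemma~\ref{thm:as_convergence:-1.0.0} has already delivered $\liminf_t\|\nabla f(\theta_t)\|=0$ a.s., so it suffices to rule out $\limsup_t\|\nabla f(\theta_t)\|>0$: equivalently, for every $0<a<b$ the sequence $\{\|\nabla f(\theta_t)\|\}_{t\ge 1}$ up-crosses $(a,b)$ only finitely many times a.s.

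\textbf{Adapting the upcrossing scheme of Theorem~\ref{thm:as_convergence:-1}.} The plan is to repeat the stopping-time construction of Eq.~\ref{qwerewqwq} verbatim, but with the trigger quantity $f(\theta_t)-f^*$ replaced by $\|\nabla f(\theta_t)\|$, obtaining $\{\mu_n(a,b)\}$ that segments time into successive attempts at an up-crossing of $(a,b)$ by the gradient norm. As in Eq.~\ref{sdfsadfwef} one then has $\Expect[U_{(a,b),T}]\le 1+\sum_{k\ge 2}\Expect[\I_{\|\nabla f(\theta_{\mu_{3k-1,T}})\|\ge b}]$, and on each up-crossing event $\|\nabla f(\theta_{\mu_{3k-1,T}})\|^2-\|\nabla f(\theta_{\mu_{3k-2,T}-1})\|^2\ge b^2-a^2$, so the indicator is absorbed into a telescoping sum over $t\in[\mu_{3k-2,T}-1,\mu_{3k-1,T})$.

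\textbf{Single-step bound and closing the recursion.} Next I would expand each increment of $\|\nabla f\|^2$: since $\theta_{t+1}-\theta_t=-\epsilon_t g_t$ and Assumption~\ref{assump:loss_function}(b) gives Lipschitz $\nabla f$, one has $\|\nabla f(\theta_{t+1})\|^2-\|\nabla f(\theta_t)\|^2\le 2L\epsilon_t\|g_t\|\|\nabla f(\theta_t)\|+L^2\epsilon_t^2\|g_t\|^2$. Splitting the cross term via Young's inequality and eliminating the resulting $\epsilon_t\|\nabla f\|^2$ piece via the descent lemma reduces the analog of $\Sigma_{T,\mu,2}$ to sums of the form $\sum_k\Expect\bigl[\sum_{t\in[\mu_{3k-2,T},\mu_{3k-1,T})}\epsilon_t^2\|g_t\|^2\bigr]$. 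The bounded $(2p-2)$-th moment of Assumption~\ref{assump:stochastic_gradient}(d) then provides exactly the power needed to run the recursion of Lemma~\ref{iteration}, descending from the trivially finite quantity involving $\epsilon_t^{\lceil p\rceil}\|g_t\|^{\lceil p\rceil}$ (summable in expectation by Setting~\ref{assump:learning_rate}) down to the $\epsilon_t^2\|g_t\|^2$ level, in complete analogy with the derivation of Eq.~(*) in the proof of Theorem~\ref{thm:as_convergence:-1}. This yields $\limsup_{T\to+\infty}\Expect[U_{(a,b),T}]<+\infty$, hence finite up-crossings a.s.

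\textbf{Main obstacle.} The principal difficulty is the absence of a clean one-step descent identity for $\|\nabla f\|^2$: the increment generates a cross term $\epsilon_t\|g_t\|\|\nabla f(\theta_t)\|$ alongside the pure noise term $\epsilon_t^2\|g_t\|^2$. Controlling this cross term — whether by Young's inequality or by Cauchy--Schwarz — effectively doubles the moment requirement on $\|g_t\|$ relative to what sufficed in Theorem~\ref{thm:as_convergence:-1}, which is precisely why Assumption~\ref{assump:stochastic_gradient}(d) prescribes the $(2p-2)$-th rather than the $p$-th moment. Once this term is absorbed and the Lemma~\ref{iteration}--\ref{iteration1} recursion is rerun at the gradient level, the remainder of the argument is structurally identical to the one just carried out for Theorem~\ref{thm:as_convergence:-1}.
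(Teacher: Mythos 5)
Your route is genuinely different from the paper's: the paper does not run an upcrossing argument on $\|\nabla f(\theta_t)\|$ at all. Instead it verifies the three hypotheses of the ODE/asymptotic-pseudotrajectory machinery (Proposition~\ref{SA_p}): a.s.\ boundedness of the iterates (from Theorem~\ref{thm:as_convergence:-1} plus coercivity), the Kushner--Clark-type noise condition via \emph{Burkholder's inequality} applied to the martingale $\sum_i \I_{[\theta_i\in S_\delta]}\epsilon_i(\nabla f(\theta_i)-g_i)$ --- this is where Assumption~\ref{assump:stochastic_gradient}(d) enters, together with the fact that $f(\theta_t)$ eventually lies within $\delta$ of a critical value so the indicator is eventually $\mathbf{1}$ --- and nowhere-density of $f(\mathrm{Crit}(f))$ via Sard's theorem. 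Your reading of the role of Assumption~\ref{assump:stochastic_gradient}(d) is therefore off: it is not used to feed the recursion of Lemma~\ref{iteration} (which only consumes the weak growth condition and the $p$-th moment bound through Lemma~\ref{lem_1123}); it is used to make the $(p-1)$-th power of the quadratic variation in Burkholder's bound summable against $\sum_t\epsilon_t^p$.

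Beyond the mismatch with the paper, your plan has a concrete gap that I do not see how to close. After telescoping $\|\nabla f\|^2$ over a gradient-norm ascent window you are left with $\sum_k\Expect\bigl[\sum_{t}\epsilon_t^2\|g_t\|^2\bigr]$ and a cross term $\sum_k\Expect\bigl[\sum_t\epsilon_t\|g_t\|\,\|\nabla f(\theta_t)\|\bigr]$ summed over all windows. Under Setting~\ref{assump:learning_rate} only $\sum_t\epsilon_t^p<+\infty$ for some $p>2$ is available, so $\sum_t\epsilon_t^2\Expect[\|g_t\|^2]$ is not finite by itself, and the cross term is first order in $\epsilon_t$, hence controlled only if $\sum_{t\in\text{windows}}\epsilon_t<+\infty$ --- which is essentially the statement you are trying to prove. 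The paper escapes this in Theorem~\ref{thm:as_convergence:-1} precisely because its windows are defined by \emph{function-value} levels $[h_1,h_2]$ chosen (via Sard) so that $f^{-1}(H_x)$ contains no critical points, giving the uniform lower bound $\delta_0$ that converts the noise terms into $[\nabla f]^2_{T,h_1,h_2}$ and lets Lemmas~\ref{iteration}--\ref{iteration1} descend from the summable order $\lceil p\rceil$. That recursion cannot be ``rerun at the gradient level'': the scenario your argument must exclude is exactly the one in which $f(\theta_t)$ converges to a critical value $f^\dagger$ while $\|\nabla f(\theta_t)\|$ keeps upcrossing $(a,b)$, and there the relevant windows live in $f^{-1}\bigl((f^\dagger-\delta,f^\dagger+\delta)\bigr)$, whose closure contains critical points, so the hypothesis $\inf_{f^{-1}([a,c))}\|\nabla f\|\ge\delta_{a,b}>0$ of Lemma~\ref{iteration} fails and no analogue of $\delta_0$ exists. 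Without a replacement for that mechanism the expected-upcrossing bound does not close, which is presumably why the authors switch to the ODE method for this theorem.
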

This theorem shows that the gradients evaluated at the iterates converge to zero almost surely, indicating that the algorithm approaches a critical point of the loss function along almost every trajectory. It is worth mentioning that our proof only assume the local boundedness of the \(2p{-}2\)-th moment within a region whose image under the mapping \(f\) has Lebesgue measure controlled by a constant \(\delta > 0\), which can be made arbitrarily small. In contrast, previous works typically require global boundedness of the $2p-2$-th moment.

The proof of this theorem requires the use of the ODE method, which is commonly applied in stochastic approximation. First, we present the fundamental convergence theorem of the ODE method. Specifically, we have:
\begin{proposition}\label{SA_p}
Let $F$ be a continuous globally integrable vector field. Assume that
\begin{enumerate}[label=\textnormal{(A.\arabic*)},leftmargin=*]
    \item Suppose $\sup_{t\ge 1} \|\theta_t\|< \infty\ \ \text{a.s.}$ 
    \item For all $T > 0$
    \[
    \lim_{t \to \infty} \sup \left\lbrace \left\lVert \sum_{i=t}^{k} \epsilon_{i}\left(\nabla f(\theta_{t})-g_t\right) \right\lVert : k = t, \dots, m(\Sigma_{\epsilon}(t) + T) \right\rbrace = 0,
    \]
    where 
    \[\Sigma_{\epsilon}(t):=\sum_{k=1}^{t}\epsilon_{k}\ \ \text{and}\ \ m(t):=\max\{j\ge 0:  \Sigma_{\epsilon}(j)\le t\}.\] 
    \item \(f(V)\) is nowhere dense on \(\mathbb{R}\), where \(V\) is the fixed point set of the ODE: \(\dot{\theta} = -f(\theta)\).

\end{enumerate}
Then all limit points of the sequence \(\{\theta_{t}\}_{t\ge 1}\) are fixed points of the ODE: \(\dot{\theta} = -f(\theta)\).
\end{proposition}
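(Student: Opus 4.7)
\begin{hproof}
The plan is to follow the classical ODE method of stochastic approximation due to Benaïm and Hirsch. The strategy is to embed the discrete iterates $\{\theta_t\}$ into a continuous-time interpolated process, show that this process is an asymptotic pseudotrajectory of the flow generated by the ODE $\dot{\theta}=-F(\theta)$ (with $F=\nabla f$ in the present setting), and then apply dynamical systems theory to pin down its $\omega$-limit set.

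First I would construct the piecewise linear interpolation $\bar\theta(s)$ defined by $\bar\theta(\Sigma_\epsilon(t))=\theta_t$ and extended linearly between consecutive nodes. Rewriting the SGD recursion as $\theta_{t+1}=\theta_t-\epsilon_t F(\theta_t)+\epsilon_t(F(\theta_t)-g_t)$ separates a deterministic Euler step for the ODE from a noise increment. Assumption (A.1) makes the orbit $\{\bar\theta(s)\}$ precompact and lets us restrict $F$ to a compact set, on which it is uniformly continuous and (by Assumption \ref{assump:loss_function}(b)) Lipschitz.

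Next I would prove that $\bar\theta$ is an asymptotic pseudotrajectory (APT) of the flow $\Phi$ of $\dot\theta=-F(\theta)$: for every fixed $T>0$,
\[
\lim_{s\to\infty}\sup_{0\le h\le T}\bigl\|\bar\theta(s+h)-\Phi_h(\bar\theta(s))\bigr\|=0.
\]
This is a Gronwall-type argument on each window $[\Sigma_\epsilon(t),\Sigma_\epsilon(t)+T]$, i.e.\ indices $k=t,\dots,m(\Sigma_\epsilon(t)+T)$: the difference between the Euler polygon and the flow is controlled by (i) the mesh size $\epsilon_t\to 0$, which kills the Riemann-sum error coming from the continuity of $F$, and (ii) the noise bound in (A.2), which makes $\bigl\|\sum_{i=t}^{k}\epsilon_i(F(\theta_i)-g_i)\bigr\|$ uniformly small over all partial sums in the window. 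Gronwall's lemma on the compact closure of the orbit then propagates these errors to a uniform estimate over $[0,T]$. By Benaïm's theorem on APTs, the $\omega$-limit set $L(\bar\theta)$ is internally chain transitive (ICT) for $\Phi$.

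Finally, because $F=\nabla f$, the loss $f$ is a strict Lyapunov function for $\Phi$: $\frac{d}{dt}f(\Phi_t(\theta))=-\|\nabla f(\Phi_t(\theta))\|^2\le 0$ with equality only on $V$. A standard result then shows that $f$ must be constant on any ICT set of $\Phi$. Combining this with (A.3)---that $f(V)$ is nowhere dense in $\mathbb{R}$---the common value of $f$ on any ICT set must lie in $f(V)$, and the ICT set is forced into $V$. Hence every limit point of $\{\theta_t\}$ is a fixed point of the ODE.

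The main obstacle is establishing the APT property cleanly: one must carefully synchronize the discrete clock $\Sigma_\epsilon(t)$ with continuous time, uniformly control the Euler-to-flow discrepancy via Gronwall on the compact closure of the orbit, and absorb the noise by invoking (A.2) simultaneously over all partial sums in each window. Once APT is in hand, the passage to ICT via Benaïm's theorem and the use of the Lyapunov/Sard-type condition (A.3) to confine limit points to $V$ are essentially routine.
\end{hproof}
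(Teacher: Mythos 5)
Your proposal is correct and follows essentially the same route as the paper: the paper does not prove Proposition \ref{SA_p} from scratch but justifies it (in the accompanying remark) by citing Proposition 4.1, Theorem 5.7, and Proposition 6.4 of Bena\"im's theory, which is exactly the chain you reconstruct --- interpolated process is a precompact asymptotic pseudotrajectory under (A.1)--(A.2), its limit set is internally chain transitive, and the Lyapunov function $f$ together with the nowhere-dense condition (A.3) forces that limit set into $V$. Your sketch simply fills in the content of those cited results rather than departing from them.
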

\begin{rem}
Property \ref{SA_p} synthesizes results from Proposition 4.1, Theorem 5.7, and Proposition 6.4 in \cite{benaim2006dynamics}. Proposition 4.1 shows that the trajectory of an algorithm satisfying Property \ref{SA_p} Item (A.1) and (A.2) forms a precompact asymptotic pseudotrajectory of the corresponding ODE system. Meanwhile, Theorem 5.7 and Proposition 6.4 demonstrate that all limit points of this precompact asymptotic pseudotrajectory are fixed points of the ODE system. 
\end{rem}
Next, we proceed to the formal proof.
\begin{proof}
The proof strategy is very clear. We only need to demonstrate that SGD, under our assumptions (i.e., Assumption \ref{assump:loss_function}$\sim$\ref{assump:stochastic_gradient}), satisfies the three items in Property \ref{SA_p}. Next, we proceed with the verification.

\noindent{\bf Phase I:} (Proving Item (A.1))
According to Theorem \ref{thm:as_convergence:-1}, we can easily obtain
\[
\sup_{t \ge 1} f(\theta_{t}) < +\infty \quad \text{a.s.}
\]
Furthermore, by applying Item (b) of Assumption \ref{assump:loss_function}, i.e., coercivity, we can immediately derive the following boundedness result for the iterates, i.e.,
\[\sup_{t \ge 1} \|\theta_t\| < +\infty \quad \text{a.s.}\]
This confirms Item (A.1).

\noindent{\bf Phase II:} (Proving Item (A.2))
We consider the following expression:
\[\Theta_{t}:= \sup \left\lbrace \left\lVert \sum_{i=t}^{k}\I_{[\theta_{i}\in S_{\delta}]}\cdot \epsilon_{i}\left(\nabla f(\theta_{i})-g_i\right) \right\lVert^{2p-2} : k = t, \dots, m(\Sigma_{\epsilon}(t) + T) \right\rbrace.\]
We aim to prove that 
\[
\lim_{t \rightarrow +\infty} \Theta_t = 0 \quad \text{a.s.}
\]  
It is easy to see that, in order to prove this, it suffices to show that  
\[
\lim_{k \rightarrow +\infty} \Theta_{m(kT)} = 0 \quad \text{a.s.} \quad (k \in \mathbb{N}_+).
\]
We observe that the sequence \( \left\{\I_{[\theta_{t}\in S_{\delta}]}\cdot \epsilon_{t}\left(\nabla f(\theta_{t})-g_t\right), \mathscr{F}_{t}\right\}_{t \ge 1} \) forms a martingale difference sequence. Therefore, for the expression in Item (A.2), we can apply the following \emph{Burkholder's inequality}\footnote{\textbf{Burkholder's Inequality}: 
Let \( \{(M_n, \mathscr{F}_n)\}_{n \ge 1} \) be a martingale with \( M_1 = 0 \) almost surely. For any \( 1 \leq p < \infty \), there exist constants \( c_p > 0 \) and \( C_p > 0 \), depending only on \( p \), such that for all \( t \ge 1 \),
\[
c_p \, \mathbb{E}[(S(M_t))^p] \leq \mathbb{E}[(M_t^*)^p] \leq C_p \, \mathbb{E}[(S(M_t))^p],
\]
where \( M_t^* = \sup_{1 \le n \le t} |M_n| \) and  
\[
S(M_t) = \left( \sum_{i=1}^{t} (M_i - M_{i-1})^2 \right)^{1/2}.
\]}, that is,

\begin{align*}
\mathbb{E} \left[ \Theta_{m(kT)} \right]
&\overset{\text{\emph{Burkholder's inequality}}}{\le} C_{2p-2} 
\mathbb{E} \left[ \left( \sum_{i=m(kT)}^{m((k+1)T)} 
    \mathbb{I}_{[\theta_i \in S_{\delta}]} \cdot \epsilon_i^2 \| \nabla f(\theta_i) - g_i \|^2 \right)^{p-1} \right] \notag \\
\overset{\text{\emph{Holder's inequality}}}{\le}& C_p 
\mathbb{E} \left[ \left( \left( \sum_{i=m(kT)}^{m((k+1)T)} \epsilon_i \right)^{p-2} \right)
    \sum_{i=m(kT)}^{m((k+1)T)} \mathbb{I}_{[\theta_i \in S_{\delta}]} \cdot \epsilon_i^p \| \nabla f(\theta_i) - g_i \|^{2p-2} \right] \notag \\
\le C_p T^{p-2} \mathbb{E}& \left[ \sum_{i=m(kT)}^{m((k+1)T)} 
    \mathbb{I}_{[\theta_i \in S_{\delta}]} \epsilon_i^p \| \nabla f(\theta_i) - g_i \|^{2p-2} \right] \notag \\
\overset{(*)}{\le} 2^{2p-3} C_p &T^{p-2} \left( (2L(D_{\eta} + \delta))^{p-1} + M_1^{2p-2} \right) 
    \sum_{i=m(kT)}^{m((k+1)T)} \epsilon_i^p.
\end{align*}

In the above derivation, step \((*)\) follows from Assumption \ref{assump:stochastic_gradient} Item (d), which ensures that when \( \theta_t \in S_{\delta} \), the \( (2p - 2) \)th moment is bounded. Specifically, we have:
\begin{align*}
    &\quad\Expect\left[\sum_{i=m(kT)}^{m((k+1)T)}\I_{[\theta_{i}\in S_{\delta}]}\epsilon_{i}^{p}\left\|\nabla f(\theta_{i})-g_{i}\right\|^{2p-2}\right]\\&=\Expect\left[\sum_{i=m(kT)}^{m((k+1)T)}\I_{[\theta_{i}\in S_{\delta}]}\epsilon_{i}^{p}\Expect\left[\left\|\nabla f(\theta_{i})-g_{i}\right\|^{2p-2}|\mathscr{F}_{t-1}\right]\right]\\&\le 2^{2p-3}\left((2L(D_{\eta}+\delta))^{p-1}+M_1^{2p-2}\right)\sum_{i=m(kT)}^{m((k+1)T)}\epsilon_{i}^{p}.
\end{align*}
Therefore, we can easily obtain:
\begin{align*}
    \sum_{k=1}^{+\infty}\Expect\left[\Theta_{m(kT)}\right]&\le 2^{2p-3}\left((2L(D_{\eta}+\delta))^{p-1}+M_1^{2p-2}\right)\sum_{i=1}^{+\infty}\epsilon_{i}^{p}\\&\mathop{<}^{\text{Setting 1}}+\infty.
\end{align*}
Then, by the \emph{Lebesgue's Monotone Convergence Theorem}, we can readily obtain:
\begin{align*}
    \sum_{k=1}^{+\infty}\Theta_{m(kT)}<+\infty\ \ \text{a.s.},
\end{align*}
which implies,
\begin{align*}
   \lim_{k\rightarrow+\infty}\Theta_{m(kT)}=0\ \ \text{a.s.} 
\end{align*}
Furthermore, we can conclude that  
\begin{align}\label{Theta_t}
\lim_{t \rightarrow +\infty} \Theta_t = 0 \quad \text{a.s.}
\end{align}
Next, according to Theorem \ref{thm:as_convergence:-1}, we know that the sequence of loss function values \( \{f(\theta_t)\}_{t \ge 1} \) converges to some critical value almost surely. This implies that there exists a finite time \( T_0 < +\infty \ \ \text{a.s.} \) (note that \( T_0 \) is a random variable and may vary across different trajectory), such that for all \( t > T_0 \), we have \( f(\theta_t) \in S_{\delta} \).  

This, in turn, means that for all \( t > T_0 \),  
\[
\mathbb{I}_{[f(\theta_t) \in S_{\delta}]} = \mathbf{1}.
\]  

Consequently, for all \( t > T_0 \), we also have:
\begin{align*}
    \sup \left\lbrace \left\lVert \sum_{i=t}^{k}\epsilon_{i}\left(\nabla f(\theta_{i})-g_i\right) \right\lVert^{2p-2} : k = t, \dots, m(\Sigma_{\epsilon}(t) + T) \right\rbrace=\Theta_{t}.
\end{align*}
Then due to \[\lim_{t\rightarrow+\infty}\Theta_{t}=0\ \ \text{Eq. \ref{Theta_t}},\] we get that:
\begin{align*}
   \lim_{t\rightarrow+\infty} \sup \left\lbrace \left\lVert \sum_{i=t}^{k}\epsilon_{i}\left(\nabla f(\theta_{i})-g_i\right) \right\lVert^{2p-2} : k = t, \dots, m(\Sigma_{\epsilon}(t) + T) \right\rbrace=0\ \ \text{a.s.},
\end{align*}
that is,
\begin{align*}
   \lim_{t\rightarrow+\infty} \sup \left\lbrace \left\lVert \sum_{i=t}^{k}\epsilon_{i}\left(\nabla f(\theta_{i})-g_i\right) \right\lVert : k = t, \dots, m(\Sigma_{\epsilon}(t) + T) \right\rbrace=0\ \ \text{a.s.}
\end{align*}
Thus, we have completed the proof of Item (A.2) in Proposition \ref{SA_p}. 

\noindent{\bf Phase III:} (Proving Item (A.3))
We know that for the dynamical system \(\dot{\theta} = -f(\theta),\) its set of fixed points \( V \) coincides with the set of critical points \( \text{Cric}(f)\) of the loss function \( f.\)

Since \( f \) is \( d \)-times differentiable, by Sard's theorem \cite{sard1942measure, bates1993toward}, we know that the Lebesgue measure of \( f(\text{Crit}(f)) \) is zero, i.e., \( m(f(\text{Crit}(f))) = 0 \), where \( m(\cdot) \) denotes Lebesgue measure and \( f(\text{Crit}(f)) \) represents the image of \( \text{Crit}(f) \) under \( f .\) Furthermore, by the corecivity assumption (Assumption \ref{assump:loss_function} (c)), \( f(\text{Crit}(f)) \) is compact. Therefore, \( f(\text{Crit}(f)) \) is nowhere dense in $\mathbb{R}.$ This verifies Item (A.3) in Proposition \ref{SA_p}.

In summary, we have verified all three conditions in Proposition \ref{SA_p}. Therefore, by Proposition \ref{SA_p}, we can conclude that  
\[
\lim_{t \rightarrow +\infty} \| \nabla f(\theta_t) \| = 0 \quad \text{a.s.}
\]  
This completes the proof.
\end{proof}
Next, we introduce the result on asymptotic \( L_2 \) convergence.

\subsubsection{\texorpdfstring{Asymptotic \(L_2\) Convergence}{Asymptotic L2 Convergence}}
We need to point out that almost sure convergence does not imply \( L_2 \) convergence. Specifically, consider the following counterexample:
\begin{counter}\textbf{(Almost Sure vs. $L_{2}$ Convergence)}\label{as_vs_L_2}
Consider a sequence of random variables $\{\zeta_{t}\}_{t\ge 1}$ where $\Pro(\zeta_{t}=0)=1-1/n^{2}$ and $\Pro(\zeta_{t}=n)=1/n^{2}$. According to the \textit{Borel-Cantelli lemma}, we have $\lim_{t\rightarrow\infty}\zeta_{t}=0\ \ \text{a.s.}$ However, it can be shown that $\Expect[|\zeta_{t}|^{2}]=1$ for all $n > 0$.
\end{counter}

Next, we present the \( L_2 \) convergence result.
\begin{thm}\label{thm:as_convergence_1}
Let \( \{ \theta_t \}_{t \ge 1} \subset \mathbb{R}^d \) be the sequence generated by Algorithm \ref{alg:sgd} with the initial point \( \theta_1 \). Suppose the step sizes \( \{\epsilon_t\}_{t \ge 1} \) satisfy the conditions in Setting \ref{assump:learning_rate}, and Assumptions \ref{assump:loss_function} and \ref{assump:stochastic_gradient} hold. Then, \[\lim_{t\rightarrow+\infty}\Expect\left[\|\nabla f(\theta_{t})\|^{2}\right]= 0.\]
\end{thm}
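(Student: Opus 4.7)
The plan is to upgrade the almost-sure convergence $\|\nabla f(\theta_t)\| \to 0$ from Theorem \ref{thm:as_convergence} to $L_2$ convergence via Vitali's convergence theorem: since $\|\nabla f(\theta_t)\|^2 \to 0$ almost surely (hence in probability), $\mathbb{E}[\|\nabla f(\theta_t)\|^2] \to 0$ provided the family $\{\|\nabla f(\theta_t)\|^2\}_{t \ge 1}$ is uniformly integrable. The whole proof therefore reduces to establishing this uniform integrability.

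I would split the expectation according to whether $\theta_t$ lies in the critical-point neighborhood $S_\delta$ from Assumption \ref{assump:stochastic_gradient}(d):
\[
\mathbb{E}\bigl[\|\nabla f(\theta_t)\|^2\bigr] = \mathbb{E}\bigl[\|\nabla f(\theta_t)\|^2\, \mathbb{I}_{\theta_t \in S_\delta}\bigr] + \mathbb{E}\bigl[\|\nabla f(\theta_t)\|^2\, \mathbb{I}_{\theta_t \notin S_\delta}\bigr].
\]
For the first term, since $\theta_t$ is $\mathscr{F}_{t-1}$-measurable, conditional Jensen combined with Assumption \ref{assump:stochastic_gradient}(d) gives
\[
\mathbb{I}_{\theta_t \in S_\delta}\, \|\nabla f(\theta_t)\|^{2p-2} = \mathbb{I}_{\theta_t \in S_\delta}\, \bigl\|\mathbb{E}[g_t \mid \mathscr{F}_{t-1}]\bigr\|^{2p-2} \le \mathbb{I}_{\theta_t \in S_\delta}\, \mathbb{E}\bigl[\|g_t\|^{2p-2}\mid \mathscr{F}_{t-1}\bigr] \le M_1^{2p-2},
\]
so $\|\nabla f(\theta_t)\|^2 \mathbb{I}_{\theta_t \in S_\delta} \le M_1^2$ is a deterministic dominant. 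Together with the a.s.\ convergence of $\|\nabla f(\theta_t)\|$, the dominated convergence theorem yields $\mathbb{E}[\|\nabla f(\theta_t)\|^2 \mathbb{I}_{\theta_t \in S_\delta}] \to 0$.

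For the second term, Theorems \ref{thm:as_convergence:-1} and \ref{thm:as_convergence} together imply that $f(\theta_t)$ converges a.s.\ to a critical value, hence $\theta_t$ eventually lies in $S_\delta$ almost surely and $\mathbb{P}(\theta_t \notin S_\delta) \to 0$. H\"older's inequality with conjugate exponents $r$ and $r/(r-1)$ for some $r>1$ gives
\[
\mathbb{E}\bigl[\|\nabla f(\theta_t)\|^2\, \mathbb{I}_{\theta_t \notin S_\delta}\bigr] \le \bigl(\mathbb{E}[\|\nabla f(\theta_t)\|^{2r}]\bigr)^{1/r}\, \mathbb{P}(\theta_t \notin S_\delta)^{(r-1)/r},
\]
which reduces the remaining task to producing a uniform higher-moment bound $\sup_{t \ge 1} \mathbb{E}[\|\nabla f(\theta_t)\|^{2r}] < +\infty$ for some $r>1$.

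The hardest step is exactly this uniform higher-moment bound: $f$ is not globally Lipschitz and $\sum_t \epsilon_t^2$ may diverge under Setting \ref{assump:learning_rate}, so the usual $O(\epsilon_t^2)$ descent argument does not close. My plan is to use the standard smoothness consequence $\|\nabla f(\theta_t)\|^2 \le 2L(f(\theta_t) - f^*)$ (which follows from Assumption \ref{assump:loss_function}(a),(b)) to reduce to showing $\sup_{t \ge 1} \mathbb{E}[(f(\theta_t) - f^*)^{r}] < +\infty$ for some $r \in (1, p/2]$. I would obtain this by iterating a $p$-th-power analogue of the descent inequality inside the sublevel set $\{f(\theta_t) - f^* < D_\eta\}$, where Assumption \ref{assump:stochastic_gradient}(c) supplies the bounded $p$-th moment of $g_t$, and by handling excursions above $D_\eta$ through a stopping-time decomposition in the spirit of Lemma \ref{lem_1123}. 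The summability $\sum_t \epsilon_t^p < +\infty$ from Setting \ref{assump:learning_rate} is what replaces the missing $\sum_t \epsilon_t^2 < +\infty$ and makes the estimate close.
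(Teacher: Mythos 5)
Your overall scaffold (a.s.\ convergence plus uniform integrability $\Rightarrow$ $L_2$ convergence) is sound, and the first half of your decomposition is fine: on $\{\theta_t\in S_\delta\}$ conditional Jensen with Assumption \ref{assump:stochastic_gradient}(d) does give the deterministic bound $\|\nabla f(\theta_t)\|^2\,\mathbb{I}_{\theta_t\in S_\delta}\le M_1^2$, and $\mathbb{P}(\theta_t\notin S_\delta)\to 0$ follows from Theorem \ref{thm:as_convergence:-1}. The gap is in the step you yourself flag as hardest: the reduction to $\sup_{t\ge 1}\mathbb{E}[\|\nabla f(\theta_t)\|^{2r}]<+\infty$ for some $r>1$, equivalently (via Lemma \ref{loss_bound}) to $\sup_{t\ge 1}\mathbb{E}[(f(\theta_t)-f^*)^{r}]<+\infty$. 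Outside the bounded regions $\{f(\theta)-f^*<D_\eta\}$ and $S_\delta$, the only moment control on $g_t$ supplied by Assumption \ref{assump:stochastic_gradient} is the \emph{second} conditional moment (the weak growth condition); no moments of order greater than $2$ are assumed there. During an excursion of $f(\theta_t)-f^*$ above $D_\eta$, the telescoped increments contain $\sum_s \epsilon_s^2\|g_s\|^2$ and the martingale sums $\sum_s M_s$, and any bound on the $r$-th moment of these accumulated quantities with $r>1$ (e.g.\ via Jensen or Burkholder) requires conditional moments of $g_s$ of order $2r>2$ precisely where they are unavailable. So the proposed "$p$-th-power analogue of the descent inequality" cannot close on the excursions, and this is not a presentational issue but a mismatch with the hypotheses.

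The paper avoids this by proving a \emph{first}-moment bound with the supremum inside the expectation, $\mathbb{E}\bigl[\sup_{t\ge 1}(f(\theta_t)-f^*)\bigr]<+\infty$, which by Lemma \ref{loss_bound} dominates $\sup_{t\ge 1}\|\nabla f(\theta_t)\|^2$ by an integrable random variable and lets the Dominated Convergence Theorem finish the proof (an integrable envelope of course also gives uniform integrability, so it subsumes what Vitali would need). That first-moment bound is obtained by locating the time of the running maximum relative to the stopping times $\mu_{n,T}(M,+\infty)$, controlling the jump into an excursion via Lemma \ref{lem_1123}, and controlling the growth during an excursion using only $\mathbb{E}[\|g_t\|\mid\mathscr{F}_{t-1}]$ and $\mathbb{E}[\|g_t\|^2\mid\mathscr{F}_{t-1}]$ together with the gradient lower bound $\|\nabla f(\theta_t)\|\ge\eta$ from Assumption \ref{assump:loss_function}(d) and the quadratic-variation bound $\limsup_{T}[\nabla f(\theta_t)]^1_{T,M,+\infty}<+\infty$ from Lemma \ref{iteration1} --- all of which stay within second-moment information. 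If you want to rescue your route, you should replace the target $\sup_t\mathbb{E}[(f(\theta_t)-f^*)^r]<+\infty$ with the paper's target $\mathbb{E}[\sup_t(f(\theta_t)-f^*)]<+\infty$; as written, the uniform higher-moment bound is not supported by the assumptions.
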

This theorem differs in focus from the almost sure convergence result in Theorem \ref{thm:as_convergence}. It shows that the convergence of gradient norm across different trajectories is uniform in the sense of the \( L_2 \)-norm \footnote{For a random variable \( X \), the \( L_2 \)-norm is defined as \( \|X\|_{L_2} = \sqrt{\mathbb{E}[\|X\|^2] } \), provided \( \mathbb{E}[\|X\|^2] < \infty \).} with respect to the underlying randomness.

Next, we present the proof of this theorem.

\begin{proof}
Since we have already established almost sure convergence in Theorem~\ref{thm:as_convergence}, it follows from the \emph{Lebesgue's Dominated Convergence Theorem} that to establish \( L_2 \) convergence, it suffices to show
\begin{equation} \label{eq:sharp}
\mathbb{E} \left[ \sup_{t \ge 1} \|\nabla f(\theta_t)\|^2 \right] < +\infty \tag{\#}
\end{equation}
We now focus on proving inequality~(\ref{eq:sharp}).

We define the following constant upper bound:  
\[
M := \max\{f(\theta_1) - f^*, D_{\eta}\}.
\]  
Then, for any fixed time \( T \ge 1 \), we consider the quantity \(\Expect\left[ \sup_{1 \le t \le T} (f(\theta_t) - f^*)\right].\) We denote  
\[
t_T^* := \arg\max_{1 \le t \le T} \{f(\theta_t) - f^*\} \quad \text{(if the argmax is not unique, we take the smallest such } t).
\] 
Then we have the following inequality:
\begin{align*}
&\quad\Expect\left[\sup_{1 \le t \le T} (f(\theta_t) - f^*)\right]\\&=\underbrace{\Expect\left[\I\left[t_T^*\in \bigcup_{k\ge 1}[\mu_{2k,T}(M,+\infty),\mu_{2k+1,T}(M,+\infty))\right]\cdot\sup_{1 \le t \le T} (f(\theta_t) - f^*)\right]}_{\text{SUP}_{T,1}}\\&\quad+\underbrace{\Expect\left[\I\left[t_T^*\in \bigcup_{k\ge 1}[\mu_{2k-1,T}(M,+\infty),\mu_{2k,T}(M,+\infty))\right]\cdot\sup_{1 \le t \le T} (f(\theta_t) - f^*)\right]}_{\text{SUP}_{T,2}}.
\end{align*}
The definition of the stopping time sequence \( \{\mu_{n,T}(M, +\infty)\}_{n \ge 1} \) can be found in Eq.~\ref{qwerewqwq} (by substituting \( h_1 = M \) and \( h_2 = +\infty \) into Eq.~\ref{qwerewqwq}). Similarly, to keep the notation concise, we will simplify \( \mu_{n,T}(M, +\infty) \) and \( \mu_n(M, +\infty) \) to \( \mu_{n,T} \) and \( \mu_n \), respectively, for the remainder of this proof. For convenient, we assign
\begin{align*}
    &\I_{1}:=\I\left[t_T^*\in \bigcup_{k\ge 1}[\mu_{2k,T}(M,+\infty),\mu_{2k+1,T}(M,+\infty))\right],\\&\I_{2}:=\I\left[t_T^*\in \bigcup_{k\ge 1}[\mu_{2k-1,T}(M,+\infty),\mu_{2k,T}(M,+\infty))\right].
\end{align*}

We now proceed to bound \( \text{SUP}_{T,1} \) and \( \text{SUP}_{T,2} \) separately. For \( \text{SUP}_{T,1} \), it is clear from the definition of the stopping time that  
\[
\text{SUP}_{T,1} \le M,
\]  which means
\begin{align}\label{SUP_T,1}
\limsup_{T\rightarrow+\infty}\text{SUP}_{T,1} \le M.
\end{align}

We now focus on bounding \( \text{SUP}_{T,2} \). Since the indicator function 
\(\mathbb{I}_2\) ensures that \( t_T^* \) must lie in some interval of the form
\[
[\mu_{2k-1,T}, \mu_{2k,T}), \footnote{When \( \mu_{2k-1,T} = \mu_{2k,T} \), we define the interval 
as empty: \( [\mu_{2k-1,T}, \mu_{2k,T}) = \emptyset \).}\]
we denote this specific interval by
\[
[\mu_{2k^*-1,T}, \mu_{2k^*,T}).
\]

Next, we have:

\begin{align}\label{sup_0}
    \text{SUP}_{T,2}&\le \Expect\left[f(\theta_{\mu_{2k^*-1,T}-1})-f^*\right]+\Expect\left[\I_{1}(f(\theta_{t_T^*})-f(\theta_{\mu_{2k^*-1,T}-1}))\right]\notag\\&\le M+\Expect\left[\I_{1}(f(\theta_{t_T^*})-f(\theta_{\mu_{2k^*-1,T}-1}))\right]\notag\\&=M+\underbrace{\Expect\left[\I_{1}(f(\theta_{t_T^*})-f(\theta_{\mu_{2k^*-1,T}}))\right]}_{\text{SUP}_{T,2,1}}+\underbrace{\Expect\left[\I_{1}(f(\theta_{\mu_{2k^*-1,T}})-f(\theta_{\mu_{2k^*-1,T}-1}))\right]}_{\text{SUP}_{T,2,2}}.
\end{align}
For \( \text{SUP}_{T,2,1} \), we have:
\begin{align*}
\text{SUP}_{T,2,1}&  =  \Expect\left[\I_{1}(f(\theta_{t_T^*})-f(\theta_{\mu_{2k^*-1,T}}))\right]\notag\\&=\Expect\left[\I_{1}\sum_{t=\mu_{2k^*-1,T}}^{t_T^*-1}\left(f(\theta_{t+1})-f(\theta_{t})\right)\right]\notag\\&\mathop{\le}^{\text{Lemma \ref{descent_lemma}}}\Expect\left[\I_{1}\sum_{t=\mu_{2k^*-1,T}}^{t_T^*-1}\left(-\epsilon_{t}\nabla f(\theta_{t})^{\top}g_{t}+\frac{L\epsilon_{t}^{2}}{2}\|g_{t}\|^{2}\right)\right]\notag\\&\le \Expect\left[\I_{1}\sum_{t=\mu_{2k^*-1,T}}^{\mu_{2k^*,T}-1}\epsilon_{t}\|\nabla f(\theta_{t})\|\|g_{t}\|\right]+\frac{L}{2}\Expect\left[\I_{1}\sum_{t=\mu_{2k^*-1,T}}^{\mu_{2k^*,T}-1}\epsilon_{t}^{2}\|g_{t}\|^{2}\right]\notag\\&\le \Expect\left[\sum_{t=\mu_{2k^*-1,T}}^{\mu_{2k^*,T}-1}\epsilon_{t}\|\nabla f(\theta_{t})\|\|g_{t}\|\right]+\frac{L}{2}\Expect\left[\sum_{t=\mu_{2k^*-1,T}}^{\mu_{2k^*,T}-1}\epsilon_{t}^{2}\|g_{t}\|^{2}\right]\notag\\&\le \Expect\left[\sum_{k=1}^{+\infty}\sum_{t=\mu_{2k-1,T}}^{\mu_{2k,T}-1}\epsilon_{t}\|\nabla f(\theta_{t})\|\|g_{t}\|\right]+\frac{L}{2}\Expect\left[\sum_{k=1}^{+\infty}\sum_{t=\mu_{2k-1,T}}^{\mu_{2k,T}-1}\epsilon_{t}^{2}\|g_{t}\|^{2}\right]\notag\\&\mathop{=}^{\text{\emph{Doob's Stopped Theorem}}}\Expect\left[\sum_{k=1}^{+\infty}\sum_{t=\mu_{2k-1,T}}^{\mu_{2k,T}-1}\epsilon_{t}\|\nabla f(\theta_{t})\|\Expect\left[\|g_{t}\||\mathscr{F}_{t-1}\right]\right]\notag\\&\quad+\frac{L}{2}\Expect\left[\sum_{k=1}^{+\infty}\sum_{t=\mu_{2k-1,T}}^{\mu_{2k,T}-1}\epsilon_{t}^{2}\Expect\left[\|g_{t}\|^{2}|\mathscr{F}_{t-1}\right]\right]\notag\\&\mathop{\le}^{(**)}\left(\sqrt{G}\left(1+\frac{1}{\eta}\right)+\epsilon_{1}G\left(1+\frac{1}{\eta^{2}}\right)\right)\left[\nabla f(\theta_{t})\right]_{T,M,+\infty}^{1}.
\end{align*}
In step \((**)\), we need to note that when \( \mu_{2k-1,T} < \mu_{2k,T} \), it must hold that \( \mu_{2k-1} < T \). According to the definition of the stopping time, this implies \( f(\theta_t) - f^* \ge M \ge D_{\eta}\ (\forall \ t\in[\mu_{2k-1,T},\mu_{2k,T}))\) By Assumption \ref{assump:loss_function} Item (d), it then follows that \( \|\nabla f(\theta_t)\| \ge \eta\ (\forall \ t\in[\mu_{2k-1,T},\mu_{2k,T})) \). Then we have:
\begin{align}\label{tuidao}
&\Expect\left[\|g_{t}\||\mathscr{F}_{t-1}\right]\mathop{\le}^{\text{Assumption \ref{assump:stochastic_gradient} Item (b)}} \sqrt{G}\left(\|\nabla f(\theta_{t})\|+1\right)\le \sqrt{G}\left(1+\frac{1}{\eta}\right),\notag
\\&\Expect\left[\|g_{t}\|^{2}|\mathscr{F}_{t-1}\right]\mathop{\le}^{\text{Assumption \ref{assump:stochastic_gradient} Item (b)}} {G}\left(\|\nabla f(\theta_{t})\|^{2}+1\right)\le {G}\left(1+\frac{1}{\eta^{2}}\right).
\end{align}
Therefore, based on Lemma \ref{iteration1}, we can readily obtain:  
\begin{align} \label{SUP_T,2,1}
\limsup_{T \rightarrow +\infty} \text{SUP}_{T,2,1}&\le\left(\sqrt{G}\left(1+\frac{1}{\eta}\right)+\epsilon_{1}G\left(1+\frac{1}{\eta^{2}}\right)\right)\limsup_{T\rightarrow+\infty}\left[\nabla f(\theta_{t})\right]_{T,M,+\infty}^{1}\notag\\&\mathop{<}^{\text{Lemma \ref{iteration1}}} +\infty.
\end{align}
For \( \text{SUP}_{T,2,2} \), we further decompose it into \( \text{SUP}_{T,2,2,1} \) and \( \text{SUP}_{T,2,2,2} \) for detailed analysis, as follows:
\begin{align*}
\text{SUP}_{T,2,2}&=\underbrace{\Expect\left[\I_{1}\I_{[f(\theta_{\mu_{2k^*-1,T}-1})-f^< D_{\eta} ]}(f(\theta_{\mu_{2k^*-1,T}})-f(\theta_{\mu_{2k^*-1,T}-1}))\right]}_{\text{SUP}_{T,2,2,1}}\\&\quad+\underbrace{\Expect\left[\I_{1}\I_{[f(\theta_{\mu_{2k^*-1,T}-1})-f^]\ge D_{\eta} ]}(f(\theta_{\mu_{2k^*-1,T}})-f(\theta_{\mu_{2k^*-1,T}-1}))\right]}_{\text{SUP}_{T,2,2,2}}.
\end{align*}
For $\text{SUP}_{T,2,2,1},$ we have:
\begin{align*}
\text{SUP}_{T,2,2,1}&=\Expect\left[\I_{1}\I_{[f(\theta_{\mu_{2k^*-1,T}-1})-f^< D_{\eta} ]}(f(\theta_{\mu_{2k^*-1,T}})-f(\theta_{\mu_{2k^*-1,T}-1})-1)\right]  \\&\quad+\Expect\left[\I_{1}\I_{[f(\theta_{\mu_{2k^*-1,T}-1})-f^< D_{\eta} ]}\right]\\&\le   \Expect\left[\I_{1}\I_{[f(\theta_{\mu_{2k^*-1,T}-1})-f^< D_{\eta} ]}(f(\theta_{\mu_{2k^*-1,T}})-f(\theta_{\mu_{2k^*-1,T}-1})-1)_{+}\right]  +1\\&\le \sum_{t=1}^{+\infty}\Expect\left[\overline{\Delta}_{t,1}\right]+1\\&\mathop{\le}^{\text{Lemma \ref{lem_1123}}} C_1 +1,
\end{align*}
which means
\begin{align}\label{SUP_T,2,2,1}
\limsup_{T\rightarrow+\infty}\text{SUP}_{T,2,2,1}\le C_1 +1.
\end{align}
For $\text{SUP}_{T,2,2,2},$ we have:

\begin{align}\label{SUP_T,2,2,2}
&\limsup_{T\rightarrow+\infty}\text{SUP}_{T,2,2,2}
\mathop{\le}^{\text{Lemma \ref{descent_lemma}}}
\limsup_{T\rightarrow+\infty}
\mathbb{E}\Bigg[
    \mathbb{I}_{1}
    \mathbb{I}_{[f(\theta_{\mu_{2k^*-1,T}-1}) - f^* \ge D_{\eta}]}
    \Bigg(
        \frac{L \epsilon_{\mu_{2k^*-1,T}-1}^{2}}{2}
        \|g_{\mu_{2k^*-1,T}-1}\|^{2} \notag\\
&\quad +
        \epsilon_{\mu_{2k^*-1,T}-1}
        \|\nabla f(\theta_{\mu_{2k^*-1,T}-1})\|
        \|g_{\mu_{2k^*-1,T}-1}\|
    \Bigg)
\Bigg] \notag\\
&\le \frac{L}{2} \limsup_{T\rightarrow+\infty} \sum_{t=1}^{T}
\mathbb{E}\left[
    \mathbb{I}_{[f(\theta_t) - f^* \ge D_{\eta}]} \epsilon_t^2 \|g_t\|^2
\right] \notag\\
&\quad +
\limsup_{T\rightarrow+\infty} \sum_{t=1}^{T}
\mathbb{E}\left[
    \mathbb{I}_{[f(\theta_t) - f^* \ge D_{\eta}]} \epsilon_t
    \|\nabla f(\theta_t)\| \|g_t\|
\right] \notag\\
&\mathop{\le}^{(**)}
\left(
    \frac{L G \epsilon_1}{2}\left(1 + \frac{1}{\eta^2}\right)
    + \sqrt{G}\left(1 + \frac{1}{\eta}\right)
\right)
\limsup_{T\rightarrow+\infty}
\sum_{t=1}^{T}
\mathbb{E}\left[
    \mathbb{I}_{[f(\theta_t) - f^* \ge D_{\eta}]}
    \epsilon_t \|\nabla f(\theta_t)\|^2
\right] \notag\\
&=
\left(
    \frac{L G \epsilon_1}{2}\left(1 + \frac{1}{\eta^2}\right)
    + \sqrt{G}\left(1 + \frac{1}{\eta}\right)
\right)
\limsup_{T\rightarrow+\infty}
\sum_{k=1}^{+\infty}
\mathbb{E}\left[
    \sum_{t=\mu_{2k-1,T(D_{\eta},+\infty)}}^{\mu_{2k,T}(D_{\eta},+\infty)}
    \epsilon_t \|\nabla f(\theta_t)\|^2
\right] \notag\\
&=
\left(
    \frac{L G \epsilon_1}{2}\left(1 + \frac{1}{\eta^2}\right)
    + \sqrt{G}\left(1 + \frac{1}{\eta}\right)
\right)
\limsup_{T\rightarrow+\infty}
\left[\nabla f(\theta_t)\right]_{T, D_{\eta}, +\infty}^{1} \notag\\
&< +\infty.
\end{align}

In the above derivation, for step \((**)\), we need to note that when the indicator function \( \mathbb{I}_{\{f(\theta_t) - f^* \ge D_{\eta}\}} = 1 \), it follows from Assumption \ref{assump:loss_function}, Item (d), that \( \|\nabla f(\theta_t)\| \ge \eta \). Therefore, we can carry out this step using an argument similar to that in Eq.~\ref{tuidao}.

By combining the bounds established in Eq.~\ref{SUP_T,2,2,1} and Eq.~\ref{SUP_T,2,2,2} for \( \limsup_{T \rightarrow +\infty} \text{SUP}_{T,2,2,1} \) and \( \limsup_{T \rightarrow +\infty} \text{SUP}_{T,2,2,2} \), respectively, we deduce the following:  
\[
\limsup_{T \rightarrow +\infty} \text{SUP}_{T,2,2} < +\infty.
\]  
Subsequently, invoking the result on \( \text{SUP}_{T,2,1} \) derived in Eq.~\ref{SUP_T,2,1}, we conclude that  
\[
\limsup_{T \rightarrow +\infty} \text{SUP}_{T,2} < +\infty.
\]  
Finally, in conjunction with the upper bound on \( \text{SUP}_{T,1} \) provided in Eq.~\ref{SUP_T,1}, we arrive at  
\[
\limsup_{T \rightarrow +\infty} \Expect\left[\sup_{1\le t\le T}\left(f(\theta_{t})-f^*\right)\right] < +\infty.
\]
Note that the sequence \( \left\{ \sup_{1 \le t \le T} \left( f(\theta_t) - f^* \right) \right\}_{T \ge 1} \) is monotonically increasing. Therefore, by the \emph{Lebesgue's Monotone Convergence Theorem}, we can exchange the limit superior with the expectation. Specifically, we have:  
\begin{align*}
\mathbb{E} \left[ \sup_{ t \ge 1} \left( f(\theta_t) - f^* \right) \right] &=
\mathbb{E} \left[ \limsup_{T \rightarrow +\infty} \sup_{1 \le t \le T} \left( f(\theta_t) - f^* \right) \right] 
= \limsup_{T \rightarrow +\infty} \mathbb{E} \left[ \sup_{1 \le t \le T} \left( f(\theta_t) - f^* \right) \right]\\&<+\infty.
\end{align*}
Then, applying Lemma \ref{loss_bound}, we immediately obtain:  
\[
 \mathbb{E} \left[ \sup_{t \ge 1} \|\nabla f(\theta_{t})\|^{2} \right]\mathop{\le}^{\text{Lemma \ref{loss_bound}}} 2L\mathbb{E} \left[ \sup_{ t \ge 1} \left( f(\theta_t) - f^* \right) \right] < +\infty.
\]
This establishes inequality~(\ref{eq:sharp}). Then, by the \emph{Lebesgue's Dominated Convergence Theorem}, we can immediately deduce the \( L_2 \) convergence result from the almost sure convergence established in Theorem~\ref{thm:as_convergence}, namely,  
\[
\lim_{t \rightarrow +\infty} \mathbb{E} \left[ \|\nabla f(\theta_t)\|^2 \right] = 0.
\]
With this, we complete the proof.

\end{proof}

\section{Overall conclusions}

In this article, we employ a novel analytical method, called stopping time method, to explore the asymptotic convergence of the SGD algorithm under more relaxed step-size conditions, providing more step-size options with convergence guarantees for practical applications. This work is distinguished by its minimal set of required assumptions, thereby broadening the scope of SGD applications to practical scenarios where traditional assumptions may not apply. The underlying philosophy of the stopping time method could potentially serve as a template for proving the convergence of other related stochastic optimization algorithms, such as Adaptive Moment Estimation (ADAM) \cite{kingma2014adam} and Stochastic Gradient Descent with Momentum (SGDM) \cite{gitman2019understanding}, etc.
\\

\bibliography{ref}
\vskip 0.2in

\appendix
\section{Supporting Lemmas}
\begin{lem}\label{loss_bound} 
Suppose that $f(x)$ is differentiable and lower bounded $ f^{\ast} = \inf_{x\in \ \mathbb{R}^{d}}f(x) >-\infty$ and $\nabla f(x)$ is Lipschitz continuous with parameter $\mathcal{L} > 0$, then $\forall \ x\in \ \mathbb{R}^{d}$, we have
\begin{align*}
\big\|\nabla f(x)\big\|^{2}\le {2\mathcal{L}}\big(f(x)-f^{*}\big).
\end{align*}
\end{lem}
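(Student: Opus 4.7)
The plan is to prove this standard consequence of Lipschitz smoothness via the descent lemma followed by a one-step minimization trick. The key observation is that the quadratic upper bound furnished by Lipschitz continuity of $\nabla f$ gives an explicit upper envelope for $f$ at every point, and minimizing that envelope produces a candidate value that still dominates $f^*$. This chain of inequalities squeezes $\|\nabla f(x)\|^2$ between the envelope gap and $2\mathcal{L}(f(x)-f^*)$.

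First I would invoke the descent lemma: from Lipschitz continuity of $\nabla f$ with constant $\mathcal{L}$, for any $x,y\in\mathbb{R}^d$,
\begin{align*}
f(y) \le f(x) + \nabla f(x)^\top (y-x) + \frac{\mathcal{L}}{2}\|y-x\|^2.
\end{align*}
Next I would specialize by choosing $y = x - \frac{1}{\mathcal{L}}\nabla f(x)$, which is the exact minimizer of the right-hand side in $y$. Substituting this choice yields
\begin{align*}
f(y) \le f(x) - \frac{1}{\mathcal{L}}\|\nabla f(x)\|^2 + \frac{1}{2\mathcal{L}}\|\nabla f(x)\|^2 = f(x) - \frac{1}{2\mathcal{L}}\|\nabla f(x)\|^2.
\end{align*}
Finally, using the lower bound $f(y)\ge f^*$ and rearranging gives
\begin{align*}
\|\nabla f(x)\|^2 \le 2\mathcal{L}\bigl(f(x)-f^*\bigr),
\end{align*}
which is the desired inequality.

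There is essentially no genuine obstacle here: the result is a textbook consequence of $\mathcal{L}$-smoothness, and the only ingredients used are the quadratic upper bound and the definition of $f^*$. The one subtlety worth mentioning is that the descent lemma itself requires a short justification from Lipschitz continuity of $\nabla f$ (for instance, by writing $f(y)-f(x) = \int_0^1 \nabla f(x+t(y-x))^\top (y-x)\,dt$ and bounding $\|\nabla f(x+t(y-x)) - \nabla f(x)\| \le \mathcal{L}t\|y-x\|$ by Cauchy--Schwarz), but this is standard and can be cited rather than rederived.
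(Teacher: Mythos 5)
Your proof is correct and is in substance the same argument as the paper's: both evaluate the quadratic upper bound implied by Lipschitz continuity of $\nabla f$ at the point obtained by stepping $-\tfrac{1}{\mathcal{L}}\nabla f(x)$ from $x$, and then invoke $f \ge f^*$. The only cosmetic difference is that the paper carries out this computation on the one-dimensional restriction of $f$ along the gradient direction (via a function $g(t)$ and the Newton--Leibniz formula), whereas you apply the multivariate descent lemma directly, which is cleaner.
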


\begin{lem}[Descent Lemma]\label{descent_lemma}
Let \(\{\theta_t\}\) be the sequence generated by the SGD. Under Assumption \ref{assump:loss_function}.1, the following inequality holds for all \(t\ge 1\):
\begin{align*}
    f(\theta_{t+1})-f(\theta_{t})\le-\epsilon_{t}\|\nabla f(\theta_{t})\|^{2}+\frac{L\epsilon_{t}^{2}}{2}\|g_t\|^{2}+M_{t},
\end{align*}
where $M_{t}:=\epsilon_{t}\nabla f(\theta_{t})^{\top}(\nabla f(\theta_{t})-g_{t}).$
\end{lem}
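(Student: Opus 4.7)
The plan is to deduce this inequality directly from the Lipschitz-smoothness assumption on $\nabla f$ (Assumption \ref{assump:loss_function}(b)) combined with the SGD update rule $\theta_{t+1}=\theta_t-\epsilon_t g_t$. The workhorse is the standard quadratic upper bound (often called the \emph{smoothness inequality} or \emph{descent inequality for smooth functions}): for any $L$-smooth $f$ and any $x,y\in\mathbb{R}^d$,
\[
f(y)\le f(x)+\nabla f(x)^{\top}(y-x)+\tfrac{L}{2}\|y-x\|^{2}.
\]
This is a standard consequence of $L$-Lipschitz continuity of $\nabla f$, obtained by writing $f(y)-f(x)=\int_{0}^{1}\nabla f(x+s(y-x))^{\top}(y-x)\,ds$, subtracting $\nabla f(x)^{\top}(y-x)$, and bounding the resulting integrand via Cauchy--Schwarz and the Lipschitz condition. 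I would either cite this as a well-known fact or include the one-line integral argument.

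Next I would instantiate the inequality with $x=\theta_t$ and $y=\theta_{t+1}=\theta_t-\epsilon_t g_t$. The quadratic remainder becomes $\tfrac{L}{2}\|\epsilon_t g_t\|^{2}=\tfrac{L\epsilon_t^{2}}{2}\|g_t\|^{2}$, matching the last term in the target bound. The linear term is $\nabla f(\theta_t)^{\top}(\theta_{t+1}-\theta_t)=-\epsilon_t\nabla f(\theta_t)^{\top}g_t$, which I would rewrite by adding and subtracting $\epsilon_t\|\nabla f(\theta_t)\|^{2}$:
\[
-\epsilon_t\nabla f(\theta_t)^{\top}g_t
=-\epsilon_t\|\nabla f(\theta_t)\|^{2}+\epsilon_t\nabla f(\theta_t)^{\top}\bigl(\nabla f(\theta_t)-g_t\bigr)
=-\epsilon_t\|\nabla f(\theta_t)\|^{2}+M_t,
\]
where the last equality uses the definition $M_t:=\epsilon_t\nabla f(\theta_t)^{\top}(\nabla f(\theta_t)-g_t)$. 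Substituting these two identities into the smoothness bound yields the claimed inequality.

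Because this is a textbook manipulation, there is no real obstacle: the only substantive ingredient is the smoothness inequality itself, and the rest is a single algebraic splitting of the cross term into a deterministic descent piece and the noise piece $M_t$. The proof would therefore be essentially two lines once the smoothness inequality is stated.
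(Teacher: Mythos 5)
Your proposal is correct and follows essentially the same route as the paper: both invoke the quadratic upper bound from the $L$-smoothness of $\nabla f$, substitute the update $\theta_{t+1}-\theta_t=-\epsilon_t g_t$, and split $-\epsilon_t\nabla f(\theta_t)^{\top}g_t$ into $-\epsilon_t\|\nabla f(\theta_t)\|^{2}+M_t$. No discrepancies to report.
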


\begin{lem}
\label{lem:descent_prime}
Let \( \{ \theta_t \}_{t \geq 1} \subset \mathbb{R}^d \) be the sequence generated by SGD with step sizes \( \{ \epsilon_t \}_{t \geq 1} \). Under Assumption \ref{assump:loss_function}\ref{assump:loss_function_lipschitz} (Lipschitz Continuous Gradient), and the assumptions on the stochastic gradient (Assumption \ref{assump:stochastic_gradient}), the following inequality holds for all integers \( n \geq 1 \), all time indices \( t \geq 0 \), and for all \( y > 0 \):
\begin{align}\label{eq:sgd_2}
    \mathbb{I}_{ \| \nabla f(\theta_t) \|^2 \ge  y } \cdot \epsilon_t^{m} \| \nabla f(\theta_t) \|^2 &\leq \mathbb{I}_{ \| \nabla f(\theta_t) \|^2 \ge  y } \cdot \epsilon_t^{m-1} \Delta_{f_t} + \mathbb{I}_{ \| \nabla f(\theta_t) \|^2 \ge  y } \cdot \epsilon_t^{m-1} \left( M_{t,1} + M_{t,2} \right) \notag\\
    &\quad + \frac{L G}{2} \left( 1 + \frac{1}{y} \right) \mathbb{I}_{ \| \nabla f(\theta_t) \|^2 \ge  y } \cdot \epsilon_t^{m+1} \| \nabla f(\theta_t) \|^2,
\end{align}
where \( \Delta_{f_t} := f(\theta_t) - f(\theta_{t+1}).\) 
 
\end{lem}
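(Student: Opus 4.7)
The plan is to derive the inequality by starting from the Descent Lemma (Lemma \ref{descent_lemma}) and then performing a careful decomposition and rescaling. First I would rearrange the Descent Lemma's conclusion into the one-step estimate
\[
\epsilon_t\|\nabla f(\theta_t)\|^2 \le \Delta_{f_t} + M_t + \frac{L\epsilon_t^2}{2}\|g_t\|^2,
\]
with $\Delta_{f_t}=f(\theta_t)-f(\theta_{t+1})$ and $M_t=\epsilon_t\nabla f(\theta_t)^\top(\nabla f(\theta_t)-g_t)$, and then multiply through by $\epsilon_t^{m-1}$ to introduce the desired powers of the step size.

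The key structural step is to split the noisy term $\|g_t\|^2$ into its conditional expectation plus a martingale difference, that is
\[
\tfrac{L\epsilon_t^2}{2}\|g_t\|^2 = \tfrac{L\epsilon_t^2}{2}\bigl(\|g_t\|^2-\mathbb{E}[\|g_t\|^2\mid\mathscr{F}_{t-1}]\bigr) + \tfrac{L\epsilon_t^2}{2}\mathbb{E}[\|g_t\|^2\mid\mathscr{F}_{t-1}].
\]
I would identify $M_{t,1}:=M_t=\epsilon_t\nabla f(\theta_t)^\top(\nabla f(\theta_t)-g_t)$ and $M_{t,2}:=\tfrac{L\epsilon_t^2}{2}(\|g_t\|^2-\mathbb{E}[\|g_t\|^2\mid\mathscr{F}_{t-1}])$ as the two martingale difference terms appearing on the right-hand side; both have zero conditional expectation, the first by Assumption \ref{assump:stochastic_gradient}(a) and the second by construction.

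For the remaining deterministic (conditional-expectation) piece, I would invoke the weak growth condition from Assumption \ref{assump:stochastic_gradient}(b) to obtain
\[
\tfrac{L\epsilon_t^2}{2}\mathbb{E}[\|g_t\|^2\mid\mathscr{F}_{t-1}] \le \tfrac{LG\epsilon_t^2}{2}\bigl(\|\nabla f(\theta_t)\|^2+1\bigr).
\]
Combining everything and multiplying both sides by the indicator $\mathbb{I}_{\|\nabla f(\theta_t)\|^2\ge y}$, the only step that needs attention is absorbing the stray ``$+1$'' into the gradient term; on the event $\{\|\nabla f(\theta_t)\|^2\ge y\}$ one has $1\le \|\nabla f(\theta_t)\|^2/y$, so $\mathbb{I}_{\|\nabla f(\theta_t)\|^2\ge y}(\|\nabla f(\theta_t)\|^2+1)\le \mathbb{I}_{\|\nabla f(\theta_t)\|^2\ge y}(1+1/y)\|\nabla f(\theta_t)\|^2$, which yields the factor $\tfrac{LG}{2}(1+\tfrac{1}{y})$ in the final bound.

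None of these steps is genuinely hard; the only thing requiring care is the bookkeeping of the $\epsilon_t$-powers (to ensure $\epsilon_t^{m-1}$ multiplies the descent and martingale terms while $\epsilon_t^{m+1}$ ends up multiplying the $\|\nabla f(\theta_t)\|^2$ term on the right) and making sure the definitions of $M_{t,1},M_{t,2}$ align with how they will later be used in the stopping-time arguments (e.g., so that Doob's optional stopping can kill them inside an expectation). The indicator-based trick of converting the additive ``$+1$'' into a factor $1/y$ is what ties the inequality to a given threshold $y>0$ and is the only mildly nonroutine manipulation.
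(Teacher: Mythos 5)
Your proposal is correct and follows the same basic route as the paper: start from the descent inequality of Lemma \ref{descent_lemma}, rearrange, multiply by $\epsilon_t^{m-1}$ and by the indicator $\mathbb{I}_{\|\nabla f(\theta_t)\|^2\ge y}$, and absorb the additive $+1$ from the weak growth condition into the gradient term via $1\le \|\nabla f(\theta_t)\|^2/y$ on the event $\{\|\nabla f(\theta_t)\|^2\ge y\}$. The one place where you genuinely add something is the centering step $\|g_t\|^2=\bigl(\|g_t\|^2-\mathbb{E}[\|g_t\|^2\mid\mathscr{F}_{t-1}]\bigr)+\mathbb{E}[\|g_t\|^2\mid\mathscr{F}_{t-1}]$, which is what produces $M_{t,2}$ and makes the application of Assumption \ref{assump:stochastic_gradient}(b) legitimate: that assumption bounds only the \emph{conditional expectation} of $\|g_t\|^2$, whereas the paper's written proof applies it pathwise to $\|g_t\|^2$ itself and never actually defines the $M_{t,2}$ that appears in the lemma statement. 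Your decomposition is therefore the more rigorous reading of what the lemma intends (and it keeps $M_{t,2}$ a martingale difference after multiplication by the $\mathscr{F}_{t-1}$-measurable indicator, which is exactly what the later optional-stopping arguments require). Your bookkeeping of the $\epsilon_t$ powers is consistent with the statement.
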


\section{Proofs of Lemmas}
\subsection{The Proof of Lemma \ref{loss_bound}}
\begin{proof}
	{For $\forall x\in \mathbb{R}^{N}$,	we define function
	\begin{equation}\nonumber
	\begin{aligned}
	g(t)=f\bigg(x+t\frac{x'-x}{\|x'-x\|}\bigg),
	\end{aligned}
	\end{equation}where $x'$ is a constant point such that   $x'-x$ is parallel to $\nabla f(x)$. By taking the derivative, we obtain
	\begin{equation}\label{qcxzd}
	\begin{aligned}
	g'(t)=\nabla_{x+t\frac{x'-x}{\|x'-x\|}}f\bigg(x+t\frac{x'-x}{\|x'-x\|}\bigg)^{T}\frac{x'-x}{\|x'-x\|}.
	\end{aligned}
	\end{equation}Through the Lipschitz condition of $\nabla f(x)$, we get $\forall t_{1}, \ t_{2}$
    \begin{equation}\nonumber
        \begin{aligned}
        &\big|g'(t_1) - g'(t_2)\big| \\
        &= \Bigg| \Bigg( \nabla_{x + t \frac{x' - x}{\|x' - x\|}} f \left( x + t_1 \frac{x' - x}{\|x' - x\|} \right) \\
        &\quad - \nabla_{x + t \frac{x' - x}{\|x' - x\|}} f \left( x + t_2 \frac{x' - x}{\|x' - x\|} \right) \Bigg)^T \frac{x' - x}{\|x' - x\|} \Bigg| \\
        &\le \Bigg\| \nabla_{x + t \frac{x' - x}{\|x' - x\|}} f \left( x + t_1 \frac{x' - x}{\|x' - x\|} \right) \\
        &\quad - \nabla_{x + t \frac{x' - x}{\|x' - x\|}} f \left( x + t_2 \frac{x' - x}{\|x' - x\|} \right) \Bigg\| \Bigg\| \frac{x' - x}{\|x' - x\|} \Bigg\| \\
        &\le \mathcal{L} |t_1 - t_2|.
        \end{aligned}
    \end{equation}

    So $g'(t)$ satisfies the Lipschitz  condition, and we have $\inf_{t\in \mathbb{R}}g(t)\geq\inf_{x\in\mathbb{R}^{N}}f(x)>-\infty$. Let $g^{*}=\inf{x\in_{\mathbb{R}}}g(x)$, then it holds that for $\forall \ t_{0}\in \ \mathbb{R},$
	\begin{equation}\label{qwcdfs}
	\begin{aligned}
	g(0)-g^{*}\geq g(0)-g(t_{0}).
	\end{aligned}
	\end{equation}By using the \emph{Newton-Leibniz's} formula, we get that
	\begin{equation}\nonumber
	\begin{aligned}
	g(0)-g(t_{0})=\int_{t_{0}}^{0}g'(\alpha)d\alpha=\int_{t_{0}}^{0}\big(g'(\alpha)-g'(0)\big)d\alpha+\int_{t_{0}}^{0}g'(0)d\alpha.
	\end{aligned}
	\end{equation}Through the Lipschitz condition of $g'$, we get that
	\begin{equation}\nonumber
	\begin{aligned}
	g(0)-g(t_{0})\geq\int_{t_{0}}^{0}-\mathcal{L}|\alpha-0|d\alpha+\int_{t_{0}}^{0}g'(0)d\alpha=\frac{1}{2\mathcal{L}}\big(g'(0)\big)^{2}.
	\end{aligned}
	\end{equation}
{	Then we take a special value of $t_{0}$. Let $t_{0}=-g'(0)/\mathcal{L}$, then we get}
	\begin{equation}\label{8unii}
	\begin{aligned}
	&g(0)-g(t_{0})\geq-\int_{t_{0}}^{0}\mathcal{L}|\alpha|d\alpha+\int_{t_{0}}^{0}g(0)dt=-\frac{\mathcal{L}}{2}(0-t_{0})^{2}+g'(0)(-t_{0})\\&=-\frac{1}{2\mathcal{L}}\big(g'(0)\big)^{2}+\frac{1}{\mathcal{L}}\big(g'(0)\big)^{2}=\frac{1}{2\mathcal{L}}\big(g'(0)\big)^{2}.
	\end{aligned}
	\end{equation}Substituting Eq. \ref{8unii} into Eq. \ref{qwcdfs}, we get   
	\begin{equation}\nonumber
	\begin{aligned}
	g(0)-g^{*}\geq\frac{1}{2\mathcal{L}}\big(g'(0)\big)^{2}.
	\end{aligned}
	\end{equation}Due to $g^{*}\geq f^{*}$ and $\big(g'(0)\big)^{2}=\|\nabla f(x)\|^{2}$, it follows that
	\begin{equation}\nonumber
	\begin{aligned}
	\big\|\nabla f(x)\big\|^{2}\le 2\mathcal{L}\big(f(x)-f^{*}\big).
	\end{aligned}
	\end{equation}}

\end{proof}

\subsection{The Proof of Lemma \ref{descent_lemma}}
\begin{proof}
We compute $f(\theta_{t+1})-f(\theta_{t})$. According to the $L$-smooth condition, we obtain the following estimate:
\begin{align*}
f(\theta_{t+1})-f(\theta_{t})&\le \nabla f(\theta_{t})^{\top}(\theta_{t+1}-\theta_{t})+\frac{L}{2}\|\theta_{t+1}-\theta_{t}\|^{2}\notag\\&=-\epsilon_{t}\nabla f(\theta_{t})^{\top}g_{t}+\frac{L\epsilon_{t}^{2}}{2}\|g_t\|^{2}\notag\\&=-\epsilon_{t}\|\nabla f(\theta_{t})\|^{2}+\underbrace{\epsilon_{t}\nabla f(\theta_{t})^{\top}(\nabla f(\theta_{t})-g_{t})}_{M_{t}}+\frac{L\epsilon_{t}^{2}}{2}\|g_t\|^{2}.
\end{align*}
We complete the proof.
\end{proof}
\subsection{The Proof of Lemma \ref{thm:as_convergence:-1.0.0}}\label{dsfadsfdsafdasdfasdfasdfaadas}
\begin{proof}
For any \( \upsilon > 0,\ t_0\ge 1 \), we define the following first hitting time:  
\[
\tau_{t_0,\upsilon} := \min \{ t\ge t_0 \mid \|\nabla f(\theta_{t})\| \leq \upsilon \}.
\]
We multiply the indicator function \( \mathbb{I}_{\tau_{t_0,\upsilon} > t} \) on both sides of the descent inequality in Lemma \ref{descent_lemma} to obtain:
\begin{align*}
    \mathbb{I}_{\tau_{t_0,\upsilon} > t}\left(f(\theta_{t+1})-f^{*}\right)-\mathbb{I}_{\tau_{t_0,\upsilon} > t}\left(f(\theta_{t})-f^*\right)&\le-\mathbb{I}_{\tau_{t_0,\upsilon} > t}\epsilon_{t}\|\nabla f(\theta_{t})\|^{2}\\&\quad+\mathbb{I}_{\tau_{t_0,\upsilon} > t}\frac{L\epsilon_{t}^{2}}{2}\|g_t\|^{2}+\mathbb{I}_{\tau_{t_0,\upsilon} > t}M_{t},
\end{align*}
Noting that \(\mathbb{I}_{\tau_{t_0,\upsilon} > t+1}<\mathbb{I}_{\tau_{t_0,\upsilon} > t}\), we can further obtain:
\begin{align*}
    \mathbb{I}_{\tau_{t_0,\upsilon} > t+1}\left(f(\theta_{t+1})-f^{*}\right)-\mathbb{I}_{\tau_{t_0,\upsilon} > t}\left(f(\theta_{t})-f^*\right)&\le-\mathbb{I}_{\tau_{t_0,\upsilon} > t}\epsilon_{t}\|\nabla f(\theta_{t})\|^{2}+\mathbb{I}_{\tau_{t_0,\upsilon} > t}\frac{L\epsilon_{t}^{2}}{2}\|g_t\|^{2}\\&\quad+\mathbb{I}_{\tau_{t_0,\upsilon} > t}M_{t}.
\end{align*}
Taking the expectation on both sides of the above inequality, we obtain:
\begin{align*}
    &\quad\Expect\left[\mathbb{I}_{\tau_{t_0,\upsilon} > t+1}\left(f(\theta_{t+1})-f^{*}\right)\right]-\Expect\left[\mathbb{I}_{\tau_{t_0,\upsilon} > t}\left(f(\theta_{t})-f^*\right)\right]\\&\le -\Expect\left[\mathbb{I}_{\tau_{t_0,\upsilon} > t}\epsilon_{t}\|\nabla f(\theta_{t})\|^{2}\right]+\Expect\left[\mathbb{I}_{\tau_{t_0,\upsilon} > t}\frac{L\epsilon_{t}^{2}}{2}\|g_t\|^{2}\right]+\Expect\left[\mathbb{I}_{\tau_{t_0,\upsilon} > t}M_{t}\right]\\&\mathop{\le}^{(\text{i})}-\Expect\left[\mathbb{I}_{\tau_{t_0,\upsilon} > t}\epsilon_{t}\|\nabla f(\theta_{t})\|^{2}\right]+\Expect\left[\mathbb{I}_{\tau_{t_0,\upsilon} > t}\frac{L\epsilon_{t}^{2}}{2}\Expect\left[\|g_t\|^{2}|\mathscr{F}_{t-1}\right]\right]+0\\&\mathop{\le}^{\mathop{(\text{ii})}}-\epsilon_{t}\left(1-\frac{L\epsilon_{t}}{2}\left(G+\frac{1}{\upsilon^{2}}\right)\right)\Expect\left[\mathbb{I}_{\tau_{t_0,\upsilon} > t}\|\nabla f(\theta_{t})\|^{2}\right].
\end{align*}
In the above derivation, at step \( (\text{i}) \), we need to note that \( [\tau_{t_0,\upsilon} > t] \in \mathscr{F}_{t-1} \)~\footnote{Generally, for a stopping time \( \tau \), we can only ensure that \( [\tau > t] \in \mathscr{F}_{t} \). However, due to the specific construction of the sequence indices in this paper, we have the property: \[
[\tau_{t_0,\upsilon} > t] \in \mathscr{F}_{t-1}.
\]}. Step \( (\text{ii}) \) follows from the fact that when the indicator function \( \mathbb{I}_{[\tau_{\upsilon} > t]} \) takes the value \( 1 \), it implies that \( \|\nabla f(\theta_{t})\| > \upsilon \). By applying the weak growth condition (Assumption \ref{assump:stochastic_gradient}$\sim$Item (b)), this step is completed. 

Since Setting 1 ensures that \( \lim_{t \to +\infty} \epsilon_{t} = 0 \), we know that there exists a sufficiently large \( t_{\upsilon} \) such that for any \( t \geq t_{\upsilon} \), the following inequality holds:  
\[
1 - \frac{L \epsilon_{t}}{2} \left( G + \frac{1}{\upsilon^{2}} \right) \geq \frac{1}{2}.
\]
This implies that for any \( t \geq t_\upsilon \), we have  
\begin{align*}
  \Expect\left[\mathbb{I}_{\tau_{t_0,\upsilon} > t+1}\left(f(\theta_{t+1})-f^{*}\right)\right]-\Expect\left[\mathbb{I}_{\tau_{t_0,\upsilon} > t}\left(f(\theta_{t})-f^*\right)\right]\le-\frac{\epsilon_{t}}{2}\Expect\left[\mathbb{I}_{\tau_{t_0,\upsilon} > t}\|\nabla f(\theta_{t})\|^{2}\right].
\end{align*}
Next, we sum the above inequality over the index \( t \) from \( t_\upsilon \) to \( +\infty \). This yields:  
\begin{align}\label{sgd____1}
  \Expect\left[\mathbb{I}_{\tau_{t_0,\upsilon} > t+1}\left(f(\theta_{t+1})-f^{*}\right)\right]-\Expect\left[\mathbb{I}_{\tau_{t_0,\upsilon} > t}\left(f(\theta_{t})-f^*\right)\right]&\le-\frac{\epsilon_{t}}{2}\Expect\left[\mathbb{I}_{\tau_{t_0,\upsilon} > t}\|\nabla f(\theta_{t})\|^{2}\right].
\end{align}
Next, we sum Eq. \ref{sgd____1} over the index \( t \) from \( \max\{t_0,t_\upsilon\} \) to \(T \). This gives:
\begin{align*}
    \sum_{t=\max\{t_0,t_\upsilon\}}^{T}\frac{\epsilon_{t}}{2}\Expect\left[\mathbb{I}_{\tau_{t_0,\upsilon} > t}\|\nabla f(\theta_{t})\|^{2}\right]<\Expect\left[\mathbb{I}_{\tau_{t_0,\upsilon} > \max\{t_0,t_\upsilon\}}\left(f(\theta_{\max\{t_0,t_\upsilon\}})-f^*\right)\right].
\end{align*}
Noting
\begin{align*}
\Expect\left[\mathbb{I}_{\tau_{t_0,\upsilon} > t}\|\nabla f(\theta_{t})\|^{2}\right]\ge \Expect\left[\mathbb{I}_{\tau_{t_0,\upsilon} > t}\inf_{t_0\le k\le t}\|\nabla f(\theta_{k})\|^{2}\right],
\end{align*}
we acquire:
\begin{align*}
    \sum_{t=\max\{t_0,t_\upsilon\}}^{T}\frac{\epsilon_{t}}{2}\Expect\left[\mathbb{I}_{\tau_{t_0,\upsilon} > t}\inf_{t_0\le k\le t}\|\nabla f(\theta_{k})\|^{2}\right]<\Expect\left[\mathbb{I}_{\tau_{t_0,\upsilon} > t_{\upsilon}}\left(f(\theta_{\max\{t_0,t_\upsilon\}})-f^*\right)\right].
\end{align*}
Since the sequence \( \{\mathbb{E}\left[\mathbb{I}_{\tau_{t_0,\upsilon} > t}\inf_{t_0\le k\le t}\|\nabla f(\theta_{k})\|^{2}\right]\}_{t \geq t_0} \) is clearly monotonically decreasing, we readily obtain:  
\[
 \frac{1}{2}\left(\sum_{t=\max\{t_0,t_\upsilon\}}^{T}\epsilon_{t}\right)\Expect\left[\mathbb{I}_{\tau_{t_0,\upsilon} > T}\inf_{t_0\le k\le T}\|\nabla f(\theta_{k})\|^{2}\right]<\Expect\left[\mathbb{I}_{\tau_{t_0,\upsilon} > t_{\upsilon}}\left(f(\theta_{\max\{t_0,t_\upsilon\}})-f^*\right)\right],
\]
which means
\begin{align*}
 \Expect\left[\mathbb{I}_{\tau_{t_0,\upsilon} > T}\inf_{t_0\le k\le T}\|\nabla f(\theta_{k})\|^{2}\right]<\frac{2}{\sum_{t=1}^{T} \epsilon_{t}} \Expect\left[\mathbb{I}_{\tau_{t_0,\upsilon} > t_{\upsilon}}\left(f(\theta_{\max\{t_0,t_\upsilon\}})-f^*\right)\right].
\end{align*}
Since Setting \ref{assump:learning_rate} requires that  
\[
\sum_{t=1}^{+\infty} \epsilon_{t} = +\infty,
\]  
we can immediately obtain  

\begin{align}
&\lim_{T \rightarrow +\infty} 
\mathbb{E} \left[ \mathbb{I}_{\tau_{t_0,\upsilon} > T} 
\inf_{t_0\le k\le T}\|\nabla f(\theta_{k})\|^{2} \right] \notag \\
&\quad \leq \mathbb{E} \left[ \mathbb{I}_{\tau_{t_0,\upsilon} > t_{\upsilon}} 
\left( f(\theta_{\max\{t_0,t_\upsilon\}}) - f^* \right) \right] 
\lim_{T \rightarrow +\infty} \left( \sum_{t=\max\{t_0,t_\upsilon\}}^{T} \epsilon_t \right)^{-1} 
= 0.
\end{align}
The last equality holds because \( \sum_{t=t_\upsilon}^{\infty} \epsilon_{t} = +\infty \) ensures that the inverse sum tends to zero as \( T \to +\infty.\) We know that
\[\mathbb{I}_{\tau_{t_0,\upsilon} > T} =\I_{\inf_{t_0\le k\le T}\|\nabla f(\theta_{k})\|^{2}>\upsilon},\] which means
\[
\lim_{T \rightarrow +\infty} \mathbb{E} \left[\I_{\inf_{t_0\le k\le T}\|\nabla f(\theta_{k})\|^{2}>\upsilon}\inf_{t_0\le k\le T}\|\nabla f(\theta_{k})\|^{2}\right]  = 0.
\]
On the other hand, we clearly have:
\[\limsup_{T \rightarrow +\infty} \mathbb{E} \left[\I_{\inf_{t_0\le k\le T}\|\nabla f(\theta_{k})\|^{2}\le \upsilon}\inf_{t_0\le k\le T}\|\nabla f(\theta_{k})\|^{2}\right]\le \upsilon.\]
Combining above two inequalities, we get that
\begin{align*}
 \limsup_{T \rightarrow +\infty}   \mathbb{E} \left[\inf_{t_0\le k\le T}\|\nabla f(\theta_{k})\|^{2}\right]&\le  \limsup_{T \rightarrow +\infty} \mathbb{E} \left[\I_{\inf_{t_0\le k\le T}\|\nabla f(\theta_{k})\|^{2}\le \upsilon}\inf_{t_0\le k\le T}\|\nabla f(\theta_{k})\|^{2}\right]\\&\quad+\lim_{T \rightarrow +\infty} \mathbb{E} \left[\I_{\inf_{t_0\le k\le T}\|\nabla f(\theta_{k})\|^{2}>\upsilon}\inf_{t_0\le k\le T}\|\nabla f(\theta_{k})\|^{2}\right]\\&\le \upsilon.
\end{align*}
Due to the arbitrariness of $\upsilon,$ we can immediately obtain
\begin{align*}
 \lim_{T \rightarrow +\infty}   \mathbb{E} \left[\inf_{t_0\le k\le T}\|\nabla f(\theta_{k})\|^{2}\right]=0.
\end{align*}
Since $\{\inf_{t_0\le k\le T}\|\nabla f(\theta_{k})\|^{2}\}_{T\ge t_0}$ is a monotonically decreasing sequence, according to the \emph{Lebesgue's Monotone Convergence Theorem}, we know that the limit of the expectation can be exchanged. Specifically, we obtain the following expression:
\begin{align*}
\mathbb{E} \left[  \inf_{ k\ge t_0 }\|\nabla f(\theta_{k})\|^{2}\right]=\mathbb{E} \left[ \lim_{T \rightarrow +\infty}   \inf_{t_0\le k\le T}\|\nabla f(\theta_{k})\|^{2}\right]=\lim_{T \rightarrow +\infty}   \mathbb{E} \left[\inf_{t_0\le k\le T}\|\nabla f(\theta_{k})\|^{2}\right]=0.
\end{align*}
Since we obviously have $ \inf_{ k\ge t_0 }\|\nabla f(\theta_{k})\|^{2}\ge 0,$ the expectation being equal to $0$ implies that  $\inf_{ k\ge t_0 }\|\nabla f(\theta_{k})\|^{2} = 0\ \ \text{a.s.}$ Finally, due to the arbitrariness of $t_0,$ we have proven this lemma.
\end{proof}

\subsection{The Proof of Lemma \ref{lem:descent_prime}}
\begin{proof}
We first compute $f(\theta_{t+1}) - f(\theta_{t})$. According to the $L$-smooth condition, we obtain the following estimate:
\begin{align}\label{sgd_0}
    f(\theta_{t+1})-f(\theta_{t})&\le \nabla f(\theta_{t})^{\top}(\theta_{t+1}-\theta_{t})+\frac{L}{2}\|\theta_{t+1}-\theta_{t}\|^{2}\notag\\&=-\epsilon_{t}\nabla f(\theta_{t})^{\top}g_{t}+\frac{L\epsilon_{t}^{2}}{2}\|g_t\|^{2}\notag\\&=-\epsilon_{t}\|\nabla f(\theta_{t})\|^{2}+\underbrace{\epsilon_{t}\nabla f(\theta_{t})^{\top}(\nabla f(\theta_{t})-g_{t})}_{M_{t,1}}+\frac{L\epsilon_{t}^{2}}{2}\|g_t\|^{2}.
 \end{align}
 For any $y > 0$, we multiply both sides of Eq. \ref{sgd_0} by the indicator function $\I_{\|\nabla f(\theta_{t})\|^{2} \ge y}$. This represents considering the derived properties of Eq. \ref{sgd_0} when the event $\left[\|\nabla f(\theta_{t})\|^{2} \ge y\right]$ occurs. Specifically, we have:
 \begin{align}\label{Delta}
    \I_{\|\nabla f(\theta_{t})\|^{2} \ge y}\big( \underbrace{f(\theta_{t+1})-f(\theta_{t})}_{-\Delta_{f_t}}\big)&\le-\I_{\|\nabla f(\theta_{t})\|^{2} \ge y}\cdot\epsilon_{t}\|\nabla f(\theta_{t})\|^{2}+\I_{\|\nabla f(\theta_{t})\|^{2} \ge y}M_{t}\notag\\&+\I_{\|\nabla f(\theta_{t})\|^{2} \ge y}\frac{L\epsilon_{t}^{2}}{2}\|g_{t}\|^{2}.
 \end{align}
 After simplification, we obtain:
 \begin{align}\label{sgd_1}
    \I_{\|\nabla f(\theta_{t})\|^{2} \ge y}\cdot\epsilon_{t}\|\nabla f(\theta_{t})\|^{2}&\le \I_{\|\nabla f(\theta_{t})\|^{2} \ge y}\Delta_{f_t}+\I_{\|\nabla f(\theta_{t})\|^{2} \ge y}M_{t}\notag\\&+\I_{\|\nabla f(\theta_{t})\|^{2} \ge y}\frac{L\epsilon_{t}^{2}}{2}\|g_{t}\|^{2}.
 \end{align}
 For any $n \ge 1$, we multiply both sides of the above inequality by $\epsilon_{t}^{m-1}$, and we obtain:
 \begin{align}\label{sgd_01}
    \I_{\|\nabla f(\theta_{t})\|^{2} \ge y}\cdot\epsilon^{m}_{t}\|\nabla f(\theta_{t})\|^{2}&\le \I_{\|\nabla f(\theta_{t})\|^{2} \ge y}\epsilon_{t}^{m-1}\Delta_{f_t}+\I_{\|\nabla f(\theta_{t})\|^{2} \ge y}\epsilon_{t}^{m-1}M_{t}\notag\\&+\I_{\|\nabla f(\theta_{t})\|^{2} \ge y}\frac{L\epsilon_{t}^{m+1}}{2}\|g_{t}\|^{2}.
 \end{align}
 Neyt, using the \emph{weak growth condition} (Item (b) from Assumption \ref{assump:stochastic_gradient}), we perform the following estimation for the quadratic error term $\I_{\|\nabla f(\theta_{t})\|^{2} \ge y}\frac{L\epsilon_{t}^{m+1}}{2}\|g_{t}\|^{2}$. We have:
 \begin{align*}
 \I_{\|\nabla f(\theta_{t})\|^{2} \ge y}\|g_{t}\|^{2}&\le \I_{\|\nabla f(\theta_{t})\|^{2} \ge y}G(\|\nabla f(\theta_{t})\|^{2}+1)\\&=\I_{\|\nabla f(\theta_{t})\|^{2} \ge y}G\bigg(1+\frac{1}{\|\nabla f(\theta_{t})\|^{2}}\bigg)\|\nabla f(\theta_{t})\|^{2}\\&<\I_{\|\nabla f(\theta_{t})\|^{2} \ge y}G\Big(1+\frac{1}{y}\Big)\|\nabla f(\theta_{t})\|^{2}.
 \end{align*}
 Substituting the above estimate back into Eq. \ref{sgd_1}, we obtain:
 \begin{align}\label{sgd_2}
    \I_{\|\nabla f(\theta_{t})\|^{2} \ge y}\cdot\epsilon^{m}_{t}\|\nabla f(\theta_{t})\|^{2}&\le \I_{\|\nabla f(\theta_{t})\|^{2} \ge y}\epsilon_{t}^{m-1}\Delta_{f_t}+\I_{\|\nabla f(\theta_{t})\|^{2} \ge y}\epsilon_{t}^{m-1}M_{t}\notag\\&+\frac{LG}{2}\Big(1+\frac{1}{y}\Big)\I_{\|\nabla f(\theta_{t})\|^{2} \ge y}\cdot\epsilon_{t}^{m+1}\|\nabla f(\theta_{t})\|^{2}.
 \end{align}
 With this, we complete the proof.

 \end{proof}


\subsection{The Proof of Lemma \ref{lem_1123}}\label{lem+1123}
\begin{proof}
We have
\begin{align*}
\I_{f(\theta_{t}) - f^* < D_{\eta}} \left( f(\theta_{t+1}) - f(\theta_{t}) \right)&=\I_{f(\theta_{t}) - f^* < D_{\eta}}\nabla f(\theta_{\xi_{t}})^{\top}(\theta_{t+1}-\theta_{t})\\&=\I_{f(\theta_{t}) - f^* < D_{\eta}}\nabla f(\theta_{{t}})^{\top}(\theta_{t+1}-\theta_{t})\\&+\I_{f(\theta_{t}) - f^* < D_{\eta}}(\nabla f(\theta_{\xi_{t}})-\nabla f(\theta_{{t}}))^{\top}(\theta_{t+1}-\theta_{t}).
\end{align*}
Taking absolute values on both sides, we obtain:
\begin{align}\label{sgd_i2}
\I_{f(\theta_{t}) - f^* < D_{\eta}} \left| f(\theta_{t+1}) - f(\theta_{t}) \right| &\leq \I_{f(\theta_{t}) - f^* < D_{\eta}} \|\nabla f(\theta_{t})\|\cdot\|\theta_{t+1}-\theta_{t}\|\notag\\&+\I_{f(\theta_{t}) - f^* < D_{\eta}}\|\nabla f(\theta_{\xi_{t}})-\nabla f(\theta_{{t}})\|\cdot\|\theta_{t+1}-\theta_{t}\|\notag\\&\mathop{\le}^{\text{L-Smooth}}\I_{f(\theta_{t}) - f^* < D_{\eta}} \epsilon_{t}\|\nabla f(\theta_{t})\|\|g_{t}\|+\I_{f(\theta_{t}) - f^* < D_{\eta}} L\epsilon_{t}^{2}\|g_{t}\|^{2}\notag\\&\mathop{\le}^{\text{Lemma \ref{loss_bound}}}\I_{f(\theta_{t}) - f^* < D_{\eta}} \epsilon_{t}\sqrt{2LD_{\eta}}\|g_{t}\|+\I_{f(\theta_{t}) - f^* < D_{\eta}} L\epsilon_{t}^{2}\|g_{t}\|^{2}\notag\\&\mathop{\le}^{\text{\emph{AM-GM} inequality}} \frac{\nu}{2}+\left(1+\frac{D_{\eta}}{\nu}\right)L\I_{f(\theta_{t}) - f^* < D_{\eta}} \epsilon_{t}^{2}\|g_{t}\|^{2}.
\end{align} 
Next, we apply \emph{Young's} inequality to continue the expansion, which gives us:
\begin{align*}
 \I_{f(\theta_{t}) - f^* < D_{\eta}} \left| f(\theta_{t+1}) - f(\theta_{t}) \right| &\leq \frac{3}{4}\nu+\overline{C}_{\nu}\I_{f(\theta_{t}) - f^* < D_{\eta}} \epsilon_{t}^{p}\|g_{t}\|^{p}\\&<\nu+\overline{C}_{\nu}\I_{f(\theta_{t}) - f^* < D_{\eta}} \epsilon_{t}^{p}\|g_{t}\|^{p},
\end{align*}
where\[\overline{C}_{\nu}:=\left(1+\frac{D_{\eta}}{\nu}\right)L\cdot\frac{2}{p}\left(\frac{4\left(1+\frac{D_{\eta}}{\nu}\right)L\left(1-\frac{2}{p}\right)}{\nu}\right)^{\frac{p-2}{2}},\]
that is 
\begin{align}\label{fdasfdas}
\left(\I_{f(\theta_{t}) - f^* < D_{\eta}} \left| f(\theta_{t+1}) - f(\theta_{t}) \right|-\nu\right)_{+} &\leq \overline{C}_{\nu}\I_{f(\theta_{t}) - f^* < D_{\eta}} \epsilon_{t}^{p}\|g_{t}\|^{p}.
\end{align}
Based on Setting \ref{assump:learning_rate}, we can conclude that
$$\sum_{t=1}^{+\infty}\Expect\left[\left(\I_{f(\theta_{t}) - f^* < D_{\eta}} \left| f(\theta_{t+1}) - f(\theta_{t}) \right|-\nu\right)_{+}\right]\le \overline{C}_{\nu}M_{p} \sum_{t=1}^{+\infty}\epsilon_{t}^{p}:=C_{\nu}.$$
With this, we complete the proof.

\end{proof}

\subsection{The Proof of Lemma \ref{iteration}}\label{P_iteration}
\begin{proof}
We define the following sequence of stopping times \( \{\tau_n\}_{n \ge 1} \):
\begin{align*}
&\tau_{1}:=\min\{t\ge 1:f(\theta_{t})-f^*\ge a\},\ \tau_{2}:=\min\{t\ge \tau_{1}:f(\theta_{t})-f^*\ge b\ \ \text{or}\ \ f(\theta_{t})-f^*< a\},\\& \tau_{3}:=\min\{t\ge \tau_{2}:f(\theta_{t})-f^*\ge  c\ \ \text{or}\ \ f(\theta_{t})-f^*< a\},\\& \tau_{4}:=\min\{t\ge \tau_{3}:f(\theta_{t})-f^*< a\}...,\\& \tau_{4i-3}:=\min\{t\ge \tau_{4i-4}:f(\theta_{t})-f^*\ge a\},\\& \tau_{4i-2}:=\min\{t\ge \tau_{4i-3}:f(\theta_{t})-f^*\ge b\ \ \text{or}\ \ f(\theta_{t})-f^*< a\},\\& \tau_{4i-1}:=\min\{t\ge \tau_{4i-2}:f(\theta_{t})-f^*\ge c\ \ \text{or}\ \ f(\theta_{t})-f^*< a\}\\& \tau_{4i}:=\min\{t\ge \tau_{4i-1}:f(\theta_{t})-f^*< a\}.
\end{align*}

Next, for any $T \geq 1$, we define the truncated stopping time as $\tau_{n,T} := \tau_{n} \wedge T$. Then, by applying Lemma \ref{lem:descent_prime} with $y = \delta^{2}_{a,b}$ on the interval $[\tau_{4i-2,T}, \tau_{4i,T})$ when $\tau_{4i-2,T}<\tau_{4i-1,T},$ we obtain that $\forall\ t\in[\tau_{4i-2,T}, \tau_{4i-1,T})$ when $\tau_{4i-2,T}<\tau_{4i-1,T},$ there is:
\begin{align*}
\I^{(i)}\I_{\|\nabla f(\theta_{t})\|^{2} \ge \delta^{2}_{a,b}}\cdot\epsilon^{m}_{t}\|\nabla f(\theta_{t})\|^{2}&\le \I^{(i)}\I_{\|\nabla f(\theta_{t})\|^{2} \ge \delta^{2}_{a,b}}\epsilon_{t}^{m-1}\Delta_{f_t}+\I^{(i)}\I_{\|\nabla f(\theta_{t})\|^{2} \ge \delta^{2}_{a,b}}\epsilon_{t}^{m-1}M_{t}\notag\\&+\I^{(i)}\I_{\|\nabla f(\theta_{t})\|^{2} \ge \delta^{2}_{a,b}}\frac{L\epsilon_{t}^{m+1}}{2}\|g_{t}\|^{2},
\end{align*}
where $\I^{(i)}:=\I_{[\tau_{4i-2,T}<\tau_{4i,T}]}.$ Summing the indices $t$ in the above inequality from $\tau_{4i-2,T}$ to $\tau_{4i-1,T}-1$ under the event $[\tau_{4i-2,T}<\tau_{4i-1,T}],$ we obtain:
\begin{align*}
\I^{(i)}\sum_{t=\tau_{4i-2,T}}^{\tau_{4i-1,T}-1}\I_{\|\nabla f(\theta_{t})\|^{2} \ge \delta^{2}_{a,b}}\cdot\epsilon^{m}_{t}\|\nabla f(\theta_{t})\|^{2}&\le \I^{(i)}\sum_{t=\tau_{4i-2,T}}^{\tau_{4i-1,T}-1}\I_{\|\nabla f(\theta_{t})\|^{2} \ge \delta^{2}_{a,b}}\epsilon_{t}^{m-1}\Delta_{f_t}\\&+\I^{(i)}\sum_{t=\tau_{4i-2,T}}^{\tau_{4i-1,T}-1}\I_{\|\nabla f(\theta_{t})\|^{2} \ge \delta^{2}_{a,b}}\epsilon_{t}^{m-1}M_{t}\notag\\&+\frac{L}{2}\I^{(i)}\sum_{t=\tau_{4i-2,T}}^{\tau_{4i-1,T}-1}\I_{\|\nabla f(\theta_{t})\|^{2} \ge \delta^{2}_{a,b}}\cdot\epsilon_{t}^{m+1}\|g_t\|^{2}.
\end{align*}
It is easy to see that the event $[\tau_{4i-2,T} < \tau_{4i-1,T}]$ is equivalent to $[\tau_{4i-1} < T] \cap [\tau_{4i-1} < \tau_{4i}]$. Therefore, when $\I_{[\tau_{4i-2,T} < \tau_{4i-1,T}]} = 1$, we have $\I_{\|\nabla f(\theta_{t})\|^2 \ge \delta^{2}_{a,b}} = 1$ for all $t \in [\tau_{4i-2,T}, \tau_{4i-1,T})$. As a result, we can remove the indicator function $\I_{\|\nabla f(\theta_{t})\|^2 \ge \delta^{2}_{a,b}}$ from both sides of the above inequality, yielding:
\begin{align}\label{sgd_-100}
\I^{(i)}\sum_{t=\tau_{4i-2,T}}^{\tau_{4i-1,T}-1}\epsilon^{m}_{t}\|\nabla f(\theta_{t})\|^{2}&\le \I^{(i)}\sum_{t=\tau_{4i-2,T}}^{\tau_{4i-1,T}-1}\epsilon_{t}^{m-1}\Delta_{f_t}+\I^{(i)}\sum_{t=\tau_{4i-2,T}}^{\tau_{4i-1,T}-1}\epsilon_{t}^{m-1}M_{t}\notag\\&+\frac{L}{2}\I^{(i)}\sum_{t=\tau_{4i-2,T}}^{\tau_{4i-1,T}-1}\epsilon_{t}^{m+1}\|g_{t}\|^{2}\frac{L}{2}\I^{(i)}\sum_{t=\tau_{4i-2,T}}^{\tau_{4i-1,T}-1}\epsilon_{t}^{m+1}\|g_{t}\|^{2}\notag\\&\mathop{\le}^{(\text{i})}\I^{(i)}\epsilon^{m-1}_{\tau_{4i-2,T}}(f(\theta_{\tau_{4i-2,T}})-f^*)+\I^{(i)}\sum_{t=\tau_{4i-2,T}}^{\tau_{4i-1,T}-1}\epsilon_{t}^{m-1}M_{t}\notag\\&+\frac{L}{2}\I^{(i)}\sum_{t=\tau_{4i-2,T}}^{\tau_{4i-1,T}-1}\epsilon_{t}^{m+1}\|g_{t}\|^{2}.
\end{align}
In step $(\text{i}),$ the transformation is mainly applied to the first term on the left-hand side of the corresponding inequality. We have:
\begin{align*}
\I^{(i)}\sum_{t=\tau_{4i-2,T}}^{\tau_{4i-1,T}-1}\epsilon_{t}^{m-1}\Delta_{f_t}&\le \I^{(i)}\sum_{t=\tau_{4i-2,T}}^{\tau_{4i-1,T}-1}\left(\epsilon_{t}^{m-1}(f(\theta_{t})-f^*)-\epsilon_{t+1}^{m-1}(f(\theta_{t+1})-f^*)\right)\\&<\I^{(i)}\epsilon^{m-1}_{\tau_{4i-2,T}}(f(\theta_{\tau_{4i-2,T}})-f^*).
\end{align*}
Taking the expectation on both sides of Eq. \ref{sgd_-100}, we obtain:
\begin{align}\label{sgd_-1000}
\Expect\Bigg[\I^{(i)}\sum_{t=\tau_{4i-2,T}}^{\tau_{4i-1,T}-1}\epsilon^{m}_{t}\|\nabla f(\theta_{t})\|^{2}\Bigg]&{\le}\underbrace{\Expect\left[\I^{(i)}\epsilon^{m-1}_{\tau_{4i-2,T}}(f(\theta_{\tau_{4i-2,T}})-f^*)\right]}_{\Theta_{i,T,1}}\notag\\&+\underbrace{\Expect\left[\I^{(i)}\sum_{t=\tau_{4i-2,T}}^{\tau_{4i-1,T}-1}\epsilon_{t}^{m-1}M_{t}\right]}_{\Theta_{i,T,2}}\notag\\&+\frac{L}{2}\left[\I^{(i)}\sum_{t=\tau_{4i-2,T}}^{\tau_{4i-1,T}-1}\epsilon_{t}^{m+1}\|g_{t}\|^{2}\right].
\end{align}
First, let's handle $\Theta_{t,T,1}$. According to the definition of stopping times, when $\I^{(i)} = 1$, that is, when $\tau_{4i-2,T} < \tau_{4i,T}\le T,$ we have $f(\theta_{\tau_{4i-2,T}-1})-f^*=f(\theta_{\tau_{4i-2}-1})-f^*\le b,$ that means:
\begin{align}\label{sgd_-400}
   \Theta_{i,T,1}&\le \Expect\left[\I^{(i)}\epsilon^{m-1}_{\tau_{4i-2,T}}(f(\theta_{\tau_{4i-2,T}-1})-f^*)\right]\notag\\&\quad+\Expect\left[\I^{(i)}\epsilon^{m-1}_{\tau_{4i-2,T}}(f(\theta_{\tau_{4i-2,T}})-f(\theta_{\tau_{4i-2,T}-1}))\right]\notag\\&\le b\Expect\left[\I^{(i)}\epsilon_{\tau_{4i-2,T}}^{m-1}\right]+\Expect\left[\I^{(i)}\epsilon^{m-1}_{\tau_{4i-2,T}}(f(\theta_{\tau_{4i-2,T}})-f(\theta_{\tau_{4i-2,T}-1}))\right]\notag\\&=\frac{3b}{2}\Expect\left[\I^{(i)}\epsilon_{\tau_{4i-2,T}}^{m-1}\right]-\frac{b}{2}\Expect\left[\I^{(i)}\epsilon_{\tau_{4i-2,T}}^{m-1}\right]\notag\\&\quad+\Expect\left[\I^{(i)}\epsilon^{m-1}_{\tau_{4i-2,T}}(f(\theta_{\tau_{4i-2,T}})-f(\theta_{\tau_{4i-2,T}-1}))\right]\notag\\&\le\frac{3b}{2}\Expect\left[\I^{(i)}\epsilon_{\tau_{4i-2,T}}^{m-1}\right]+ \Expect\left[\I^{(i)}\epsilon^{m-1}_{\tau_{4i-2,T}}\overline{\Delta}_{\tau_{4i-2,T}-1,b/2}\right].
\end{align}
Then we aim to address $\Theta_{i,T,2}$. Upon observation, it is evident that for any $n, k$, the stopping time $\tau_{k}$ satisfies the following additional property: $[\tau_{k} = n] \in \mathscr{F}_{n-1}$. This implies that the preceding time, $\tau_{k} - 1$, is also a stopping time. Therefore, for $\Theta_{i,T,2}.$ 
\begin{align*}
\Theta_{i,T,2}&=\Expect\left[\Expect\left[\I^{(i)}\sum_{t=\tau_{4i-2,T}}^{\tau_{4i-1,T}-1}\epsilon_{t}^{m-1}M_{t}\Bigg|\mathscr{F}_{\tau_{4i-2,T}-1}\right]\right] \\&=\Expect\left[\I^{(i)}\Expect\left[\sum_{t=\tau_{4i-2,T}}^{\tau_{4i-1,T}-1}\epsilon_{t}^{m-1}M_{t}\Bigg|\mathscr{F}_{\tau_{4i-2,T}-1}\right]\right]\\& \mathop{=}^{\text{\emph{Doob's Stopped} theorem}}\Expect\left[\I^{(i)}\Expect\left[\sum_{t=\tau_{4i-2,T}}^{\tau_{4i-1,T}-1}\epsilon_{t}^{m-1}\Expect[M_{t}|\mathscr{F}_{t-1}]\Bigg|\mathscr{F}_{\tau_{4i-2,T}-1}\right]\right]\\&=0
\end{align*}
We combine the estimates related to $\Theta_{i,T,2}$ from the above expression with those related to $\Theta_{i,T,1}$ in Eq. \ref{sgd_-400}, and substitute both back into Eq. \ref{sgd_-1000}, we obtain:
\begin{align}\label{sgd_-101}
\Expect\Bigg[\I^{(i)}\sum_{t=\tau_{4i-2,T}}^{\tau_{4i-1,T}-1}\epsilon^{m}_{t}\|\nabla f(\theta_{t})\|^{2}\Bigg]&{\le}\frac{3b}{2}\Expect\left[\I^{(i)}\epsilon_{\tau_{4i-2,T}}^{m-1}\right]+ \Expect\left[\I^{(i)}\epsilon^{m-1}_{\tau_{4i-2,T}}\overline{\Delta}_{\tau_{4i-2,T}-1,b/2}\right]\notag\\&\quad+\frac{L}{2}\left[\I^{(i)}\sum_{t=\tau_{4i-2,T}}^{\tau_{4i-1,T}-1}\epsilon_{t}^{m+1}\|g_{t}\|^{2}\right]\notag\\&\le \frac{3b}{2}\Expect\left[\I^{(i)}\epsilon_{\tau_{4i-2,T}}^{m-1}\right]+ \Expect\left[\I^{(i)}\epsilon^{m-1}_{\tau_{4i-2,T}}\overline{\Delta}_{\tau_{4i-2,T}-1,b/2}\right]\notag\\&\quad+\frac{L}{2}\Expect\left[\I^{(i)}\sum_{t=\tau_{4i-2,T}}^{\tau_{4i-1,T}-1}\epsilon_{t}^{m+1}\|g_{t}\|^{2}\right].
\end{align}
Next, we will address the first term in the above inequality. 
Here, we examine the properties of \(\I^{(i)}\). When \(\I^{(i)} = 1\), we have the event \([\tau_{4i-2,T} < \tau_{4i,T}]\). This implies that both \([\tau_{4i-2} < \tau_{4i}]\) and \([\tau_{4i} < T]\) hold simultaneously. Consequently, we obtain:
\[\I^{(i)}\big(f(\theta_{\tau_{4i-2,T}}) - f^* \big)>\I^{(i)} b.\]
This implies that
\begin{align}\label{st}\I^{(i)}\big(f(\theta_{\tau_{4i-2,T}})-f(\theta_{\tau_{4i-3,T}-1})\big)>\I^{(i)}(b-a).\end{align}
That is to say,
\[\I^{(i)}\big(\epsilon_{\tau_{4i-2,T}}^{\frac{m-1}{2}}(f(\theta_{\tau_{4i-2,T}})-f^*)-\epsilon_{\tau_{4i-2,T}}^{\frac{m-1}{2}}(f(\theta_{\tau_{4i-3,T}-1})-f^*)\big)>\I^{(i)}\epsilon_{\tau_{4i-2,T}}^{m-1}(b-a).\]
Since $f(\theta_{\tau_{4i-3,T}-1}) - f^* \le a$, we easily obtain:
\begin{align*}
&\I^{(i)}\big(\epsilon_{\tau_{4i-2,T}}^{\frac{m-1}{2}}(f(\theta_{\tau_{4i-2,T}})-f^*)-\epsilon_{\tau_{4i-3,T}-1}^{\frac{m-1}{2}}(f(\theta_{\tau_{4i-3,T}-1})-f^*)+a(\epsilon_{\tau_{4i-3,T}-1}^{\frac{m-1}{2}}-\epsilon_{\tau_{4i-2,T}}^{\frac{m-1}{2}})\big)\\&\ge\I^{(i)}\big(\epsilon_{\tau_{4i-2,T}}^{\frac{m-1}{2}}(f(\theta_{\tau_{4i-2,T}})-f^*)-\epsilon_{\tau_{4i-2,T}}^{\frac{m-1}{2}}(f(\theta_{\tau_{4i-3,T}-1})-f^*)\big) \\&>\I^{(i)}\epsilon_{\tau_{4i-2,T}}^{\frac{m-1}{2}}(b-a).
\end{align*}
We use the \emph{Descent Lemma} (Lemma~\ref{descent_lemma}) to bound
\[
\epsilon_{\tau_{4i-2,T}}^{\frac{m-1}{2}} (f(\theta_{\tau_{4i-2,T}}) - f^*)
-
\epsilon_{\tau_{4i-3,T}-1}^{\frac{m-1}{2}} (f(\theta_{\tau_{4i-3,T}-1}) - f^*),
\]
yielding:
\begin{align*}
    \I^{(i)}\epsilon_{\tau_{4i-2,T}}^{\frac{m-1}{2}}(b-a)&<\I^{(i)}\big(\epsilon_{\tau_{4i-2,T}}^{\frac{m-1}{2}}(f(\theta_{\tau_{4i-2,T}})-f^*)-\epsilon_{\tau_{4i-3,T}-1}^{\frac{m-1}{2}}(f(\theta_{\tau_{4i-3,T}-1})-f^*)\notag\\&+a(\epsilon_{\tau_{4i-3,T}-1}^{\frac{m-1}{2}}-\epsilon_{\tau_{4i-2,T}}^{\frac{m-1}{2}})\big)\notag\\&< \I^{(i)}\sum_{t=\tau_{4i-3,T}}^{\tau_{4i-2,T}-1}\epsilon_{t}^{\frac{m-1}{2}}\big(f(\theta_{t+1})-f(\theta_{t})\big)+a(\epsilon_{\tau_{4i-3,T}-1}^{\frac{m-1}{2}}-\epsilon_{\tau_{4i-2,T}}^{\frac{m-1}{2}})\\&+ \I^{(i)}\epsilon_{\tau_{4i-3,T}-1}^{\frac{m-1}{2}}\overline{\Delta}_{\tau_{4i-3,T}-1,(b-a)/2}+\I^{(i)}\epsilon_{\tau_{4i-2,T}}^{\frac{m-1}{2}}\frac{b-a}{2}.
\end{align*}
We multiply both sides of the above inequality by $\epsilon_{\tau_{4i-2,T}}^{\frac{m-1}{2}},$ and noting $\epsilon_{\tau_{4i-2,T}}^{\frac{m-1}{2}}<\epsilon_{1}^{\frac{m-1}{2}},$ we get:
 \begin{align}\label{adam_-201}
    \frac{\I^{(i)}\epsilon_{\tau_{4i-2,T}}^{m-1}(b-a)}{2}&<\I^{(i)}\epsilon_{\tau_{4i-2,T}}^{\frac{m-1}{2}}\sum_{t=\tau_{4i-3,T}}^{\tau_{4i-2,T}-1}\epsilon_{t}^{\frac{m-1}{2}}\big(f(\theta_{t+1})-f(\theta_{t})\big)+\I^{(i)}\epsilon_{1}^{\frac{m-1}{2}}\overline{\Delta}_{\tau_{4i-3,T}-1,(b-a)/2}\notag\\&+\epsilon_{1}^{\frac{m-1}{2}}\I^{(i)}a(\epsilon_{\tau_{4i-3,T}-1}^{\frac{m-1}{2}}-\epsilon_{\tau_{4i-2,T}}^{\frac{m-1}{2}})\notag\\&\mathop{<}^{\text{Lemma \ref{descent_lemma}}} \I^{(i)}\frac{L}{2}\sum_{t=\tau_{4i-3,T}}^{\tau_{4i-2,T}-1}\epsilon_{t}^{m+1}\|g_{t}\|^{2}+ \I^{(i)}\epsilon_{\tau_{4i-2,T}}^{\frac{m-1}{2}}\sum_{t=\tau_{4i-3,T}-1}^{\tau_{4i-2,T}-1}\epsilon_{t}^{\frac{m-1}{2}}M_{t}\notag\\&+\epsilon_{1}^{\frac{m-1}{2}}\I^{(i)}a(\epsilon_{\tau_{4i-3,T}-1}^{\frac{m-1}{2}}-\epsilon_{\tau_{4i-2,T}}^{\frac{m-1}{2}})+\I^{(i)}\epsilon_{1}^{\frac{m-1}{2}}\overline{\Delta}_{\tau_{4i-3,T}-1,(b-a)/2}\notag\\&\mathop{\le}^{\text{\emph{AM-GM} inequality}}  \I^{(i)}\frac{L}{2}\sum_{t=\tau_{4i-3,T}}^{\tau_{4i-2,T}-1}\epsilon_{t}^{m+1}\|g_{t}\|^{2}+ \I^{(i)}\frac{(b-a)\epsilon_{\tau_{4i-2,T}}^{m-1}}{4}\notag\\&+ \I^{(i)}\frac{1}{b-a}\left(\sum_{t=\tau_{4i-3,T}}^{\tau_{4i-2,T}-1}\epsilon_{t}^{\frac{m-1}{2}}M_{t}\right)^{2}\notag\\&+\I^{(i)}a(\epsilon_{\tau_{4i-3,T}-1}^{\frac{m-1}{2}}-\epsilon_{\tau_{4i-2,T}}^{\frac{m-1}{2}})+\I^{(i)}\epsilon_{1}^{\frac{m-1}{2}}\overline{\Delta}_{\tau_{4i-3,T}-1,(b-a)/2}.
\end{align}
This implies that the following equation holds:
 \begin{align*}
    \frac{\I^{(i)}\epsilon_{\tau_{4i-2,T}}^{m-1}(b-a)}{4}&<\I^{(i)}\frac{L}{2}\sum_{t=\tau_{4i-3,T}}^{\tau_{4i-2,T}-1}\epsilon_{t}^{m+1}\|g_{t}\|^{2}+ \I^{(i)}\frac{1}{b-a}\left(\sum_{t=\tau_{4i-3,T}}^{\tau_{4i-2,T}-1}\epsilon_{t}^{\frac{m-1}{2}}M_{t}\right)^{2}\notag\\&+\I^{(i)}a(\epsilon_{\tau_{4i-3,T}-1}^{\frac{m-1}{2}}-\epsilon_{\tau_{4i-2,T}}^{\frac{m-1}{2}})+\I^{(i)}\epsilon_{1}^{\frac{m-1}{2}}\overline{\Delta}_{\tau_{4i-3,T}-1,(b-a)/2}.
\end{align*}
Taking the expectation on both sides of the above inequality and scaling the indicator function $\I^{(i)}$ on the right side to $1$, we obtain:
\begin{align}\label{sgd_-500}
\frac{b-a}{4}\Expect\left[{\I^{(i)}\epsilon_{\tau_{4i-2,T}}^{m-1}}\right]&<\frac{L}{2}\Expect\left[\sum_{t=\tau_{4i-3,T}}^{\tau_{4i-2,T}-1}\epsilon_{t}^{m+1}\|g_{t}\|^{2}\right]+ \frac{1}{b-a}\Expect\left[\sum_{t=\tau_{4i-3,T}-1}^{\tau_{4i-2,T}-1}\epsilon_{t}^{\frac{m-1}{2}}M_{t}\right]^{2}\notag\\&+a\Expect[\epsilon_{\tau_{4i-3,T}-1}^{\frac{m-1}{2}}-\epsilon_{\tau_{4i-2,T}}^{\frac{m-1}{2}}]+\epsilon_{1}^{\frac{m-1}{2}}\Expect\left[\overline{\Delta}_{\tau_{4i-3,T}-1,(b-a)/2}\right]\notag\\&{\le}\frac{L}{2}\Expect\left[\sum_{t=\tau_{4i-3,T}}^{\tau_{4i-2,T}-1}\epsilon_{t}^{m+1}\|g_{t}\|^{2}\right]\notag\\&+ \frac{1}{b-a}\Expect\left[\sum_{t=1}^{\tau_{4i-2,T}-1}\epsilon_{t}^{\frac{m-1}{2}}M_{t}-\sum_{t=1}^{\tau_{4i-3,T}-1}\epsilon_{t}^{\frac{m-1}{2}}M_{t}\right]^{2}\notag\\&+a\Expect[\epsilon_{\tau_{4i-3,T}}^{\frac{m-1}{2}}-\epsilon_{\tau_{4i-2,T}}^{\frac{m-1}{2}}]+\epsilon_{1}^{\frac{m-1}{2}}\Expect\left[\overline{\Delta}_{\tau_{4i-3,T}-1,(b-a)/2}\right]\notag\\&\mathop{\le}^{(\text{i})}\frac{L}{2}\Expect\left[\sum_{t=\tau_{4i-3,T}}^{\tau_{4i-2,T}-1}\epsilon_{t}^{m+1}\|g_{t}\|^{2}\right]\notag\\&+ \frac{1}{b-a}\Expect\left[\sum_{t=1}^{\tau_{4i-2,T}-1}\epsilon_{t}^{{m-1}}M^{2}_{t}-\sum_{t=1}^{\tau_{4i-3,T}-1}\epsilon_{t}^{{m-1}}M^{2}_{t}\right]\notag\\&+a\Expect[\epsilon_{\tau_{4i-3,T}-1}^{\frac{m-1}{2}}-\epsilon_{\tau_{4i-2,T}}^{\frac{m-1}{2}}]+\epsilon_{1}^{\frac{m-1}{2}}\Expect\left[\overline{\Delta}_{\tau_{4i-3,T}-1,(b-a)/2}\right]\notag\\&=\frac{L}{2}\Expect\left[\sum_{t=\tau_{4i-3,T}}^{\tau_{4i-2,T}-1}\epsilon_{t}^{m+1}\|g_{t}\|^{2}\right]+ \frac{1}{b-a}\Expect\left[\sum_{t=\tau_{4i-3,T}}^{\tau_{4i-2,T}-1}\epsilon_{t}^{{m-1}}M^{2}_{t}\right]\notag\\&+a\Expect[\epsilon_{\tau_{4i-3,T}-1}^{\frac{m-1}{2}}-\epsilon_{\tau_{4i-2,T}}^{\frac{m-1}{2}}]+\epsilon_{1}^{\frac{m-1}{2}}\Expect\left[\overline{\Delta}_{\tau_{4i-3,T}-1,(b-a)/2}\right].
\end{align}
Now let us explain step $(\text{i}).$ Since it has been shown earlier that $\tau_{n,T}-1$ is also a stopping time, we know from \emph{Doob's stopping theorem} that the following stopped process
\[\left(\sum_{t=1}^{\tau_{n,T}-1}\epsilon_{t}^{\frac{m-1}{2}}M_{t}, \mathscr{F}_{\tau_{n,T}-1}\right),\]
is still a martingale. Thus, we can easily derive:
\begin{align*}
&\frac{1}{b-a}\Expect\left[\sum_{t=1}^{\tau_{4i-2,T}-1}\epsilon_{t}^{\frac{m-1}{2}}M_{t}-\sum_{t=1}^{\tau_{4i-3,T}-1}\epsilon_{t}^{\frac{m-1}{2}}M_{t}\right]^{2}\\&=\frac{1}{b-a}\Expect\left[\sum_{t=1}^{\tau_{4i-2,T}-1}\epsilon_{t}^{\frac{m-1}{2}}M_{t}\right]^{2}-\frac{1}{b-a}\Expect\left[\sum_{t=1}^{\tau_{4i-3,T}-1}\epsilon_{t}^{\frac{m-1}{2}}M_{t}\right]^{2}\\&=\frac{1}{b-a}\Expect\left[\sum_{t=1}^{\tau_{4i-2,T}-1}\epsilon_{t}^{{m-1}}M^{2}_{t}\right]-\frac{1}{b-a}\Expect\left[\sum_{t=1}^{\tau_{4i-3,T}-1}\epsilon_{t}^{{m-1}}M^{2}_{t}\right].
\end{align*}
Substituting Eq \ref{sgd_-500} back into Eq. \ref{sgd_-101}, we obtain:
\begin{align*}
\mathbb{E} \Bigg[
    \mathbb{I}^{(i)} \sum_{t=\tau_{4i-2,T}}^{\tau_{4i-1,T}-1}
    \epsilon_t^m \| \nabla f(\theta_t) \|^2
\Bigg]
&\le \frac{3bL}{b - a} 
\mathbb{E} \left[
    \sum_{t=\tau_{4i-3,T}}^{\tau_{4i-2,T}-1} 
    \epsilon_t^{m+1} \| g_t \|^2
\right] \\
&\quad + \frac{6b}{(b - a)^2}
\mathbb{E} \left[
    \sum_{t=\tau_{4i-3,T}}^{\tau_{4i-2,T}-1} 
    \epsilon_t^{m-1} M_t^2
\right] \\
&\quad + \frac{6ab}{b - a} 
\mathbb{E} \left[
    \epsilon_{\tau_{4i-3,T}-1}^{\frac{m-1}{2}} 
    - \epsilon_{\tau_{4i-2,T}}^{\frac{m-1}{2}}
\right] \\
&\quad + \epsilon_1^{\frac{m-1}{2}} 
\mathbb{E} \left[
    \overline{\Delta}_{\tau_{4i-3,T}-1, \frac{b - a}{2}}
\right] \\
&\quad + \epsilon_1^{m-1} 
\mathbb{E} \left[
    \overline{\Delta}_{\tau_{4i-2,T}-1, \frac{b}{2}}
\right] \\
&\quad + \frac{L}{2}
\mathbb{E} \left[
    \mathbb{I}^{(i)} \sum_{t=\tau_{4i-2,T}}^{\tau_{4i-1,T}-1} 
    \epsilon_t^{m+1} \| g_t \|^2
\right].
\end{align*}

We sum both sides of the above inequality over the index \(i\), yielding:
\begin{align}\label{sgd_0123123}
\sum_{i=1}^{+\infty}\Expect\Bigg[\I^{(i)}\sum_{t=\tau_{4i-2,T}}^{\tau_{4i-1,T}-1}\epsilon^{m}_{t}\|\nabla f(\theta_{t})\|^{2}\Bigg]&{\le}\left(\frac{3bL}{b-a}+\frac{L}{2}\right)\sum_{i=1}^{+\infty}\underbrace{\Expect\left[\sum_{t=\tau_{4i-3,T}}^{\tau_{4i-2,T}-1}\epsilon_{t}^{m+1}\|g_{t}\|^{2}\right]}_{\Gamma_{i,1}}\notag\\&+\frac{6b}{(b-a)^{2}}\sum_{i=1}^{+\infty}\underbrace{\Expect\left[\sum_{t=\tau_{4i-3,T}}^{\tau_{4i-2,T}-1}\epsilon_{t}^{{m-1}}M^{2}_{t}\right]}_{\Gamma_{i,2}}\notag\\&+\frac{6ab}{b-a}\epsilon_{1}^{\frac{m-1}{2}}+(\epsilon_{1}^{\frac{m-1}{2}}+\epsilon_{1}^{m-1})C_{\frac{b-a}{2}}.
\end{align}
In above inequality, the final term \((\epsilon_{1}^{\frac{m-1}{2}}+\epsilon_{1}^{m-1})C_{\frac{b-a}{2}}\) is derived as follows:
\begin{align*}
&\quad 
\epsilon_1^{\frac{m-1}{2}} 
\sum_{i=1}^{+\infty} 
\overline{\Delta}_{\tau_{4i-3,T}-1, \frac{b - a}{2}} 
+ \epsilon_1^{m-1} 
\sum_{i=1}^{+\infty} 
\overline{\Delta}_{\tau_{4i-2,T}-1, \frac{b}{2}} \\
&\le 
\left( \epsilon_1^{\frac{m-1}{2}} + \epsilon_1^{m-1} \right)
\sum_{i=1}^{+\infty} 
\overline{\Delta}_{\tau_{4i-3,T}-1, \frac{b - a}{2}} \\
&\overset{\text{Lemma~\ref{lem_1123}}}{\le} 
\left( \epsilon_1^{\frac{m-1}{2}} + \epsilon_1^{m-1} \right)
C_{\frac{b - a}{2}}.
\end{align*}
Next, we simplify \( \Gamma_{i,1} \) and \( \Gamma_{i,2} \) in Eq. \ref{sgd_0123123}.

For \( \Gamma_{i,1} \), we have:
\begin{align}\label{sgd__2}
 \Gamma_{i,1}&\mathop{=}^{\text{\emph{Doob's Stopped Theorem}}} \Expect\left[\sum_{t=\tau_{4i-3,T}}^{\tau_{4i-2,T}-1}\epsilon_{t}^{m+1}\Expect\left[\|g_{t}\|^{2}|\mathscr{F}_{t-1}\right]\right]\notag\\&\mathop{\le}^{\text{Assumption \ref{assump:stochastic_gradient} Item $(b)$}}  G\cdot\Expect\left[\sum_{t=\tau_{4i-3,T}}^{\tau_{4i-2,T}-1}\epsilon_{t}^{m+1}\left(\|\nabla f(\theta_{t})\|^{2}+1\right)\right]\notag\\&\mathop{\le}^{(\text{i})}\frac{2G}{\delta^{2}_{a,b}}\Expect\left[\sum_{t=\tau_{4i-3,T}}^{\tau_{4i-2,T}-1}\epsilon_{t}^{m+1}\|\nabla f(\theta_{t})\|^{2}\right].
\end{align}
The detailed derivation for step \((\text{i})\) is as follows:

\begin{align*}
\mathbb{E} \left[
    \sum_{t=\tau_{4i-3,T}}^{\tau_{4i-2,T}-1}
    \epsilon_t^{m+1} \left( \|\nabla f(\theta_t)\|^2 + 1 \right)
\right]&= \mathbb{E} \left[
    \mathbb{I}_{\tau_{4i-3,T} = \tau_{4i-2,T}}
    \sum_{t=\tau_{4i-3,T}}^{\tau_{4i-2,T}-1}
    \epsilon_t^{m+1} \left( \|\nabla f(\theta_t)\|^2 + 1 \right)
\right] \\
 +& \mathbb{E} \left[
    \mathbb{I}_{\tau_{4i-3,T} < \tau_{4i-2,T}}
    \sum_{t=\tau_{4i-3,T}}^{\tau_{4i-2,T}-1}
    \epsilon_t^{m+1} \left( \|\nabla f(\theta_t)\|^2 + 1 \right)
\right] \\
=& 0 + \mathbb{E} \left[
    \mathbb{I}_{\tau_{4i-3,T} < \tau_{4i-2,T}}
    \sum_{t=\tau_{4i-3,T}}^{\tau_{4i-2,T}-1}
    \epsilon_t^{m+1} \left( \|\nabla f(\theta_t)\|^2 + 1 \right)
\right] \\
\overset{(*)}{\le}&
\left(1 + \frac{1}{\delta_{a,b}^2} \right)
\mathbb{E} \left[
    \mathbb{I}_{\tau_{4i-3,T} < \tau_{4i-2,T}}
    \sum_{t=\tau_{4i-3,T}}^{\tau_{4i-2,T}-1}
    \epsilon_t^{m+1} \|\nabla f(\theta_t)\|^2
\right] \\
\le& \left(1 + \frac{1}{\delta_{a,b}^2} \right)
\mathbb{E} \left[
    \sum_{t=\tau_{4i-3,T}}^{\tau_{4i-2,T}-1}
    \epsilon_t^{m+1} \|\nabla f(\theta_t)\|^2
\right].
\end{align*}

In the above derivation, step \((*)\) requires noting that when \( \tau_{4i-3,T}<\tau_{4i-2,T}\), we simultaneously have \(\tau_{4i-3} < T \) and \( \tau_{4i-3} < \tau_{4i-2} \). This implies that for any \( t \in [\tau_{4i-3,T},\tau_{4i-2,T}-1] \), it holds that \( \|\nabla f(\theta_{t})\| \ge \delta_{a,b}.\)

For \( \Gamma_{i,2} \), we have:
\begin{align}\label{sgd__3}
 \Gamma_{i,2}&\mathop{=}^{\text{\emph{Doob's Stopped Theorem}}} \Expect\left[\sum_{t=\tau_{4i-3,T}}^{\tau_{4i-2,T}-1}\epsilon_{t}^{{m-1}}\Expect\left[M^{2}_{t}|\mathscr{F}_{t-1}\right]\right]\notag\\&= \Expect\left[\I_{\tau_{4i-3,T}=\tau_{4i-2,T}}\sum_{t=\tau_{4i-3,T}}^{\tau_{4i-2,T}-1}\epsilon_{t}^{{m-1}}\Expect\left[M^{2}_{t}|\mathscr{F}_{t-1}\right]\right]\notag\\&\quad+ \Expect\left[\I_{\tau_{4i-3,T}<\tau_{4i-2,T}}\sum_{t=\tau_{4i-3,T}}^{\tau_{4i-2,T}-1}\epsilon_{t}^{{m-1}}\Expect\left[M^{2}_{t}|\mathscr{F}_{t-1}\right]\right]\notag\\&=0+\Expect\left[\I_{\tau_{4i-3,T}<\tau_{4i-2,T}}\sum_{t=\tau_{4i-3,T}}^{\tau_{4i-2,T}-1}\epsilon_{t}^{{m-1}}\Expect\left[M^{2}_{t}|\mathscr{F}_{t-1}\right]\right]\notag\\&=\Expect\left[\I_{\tau_{4i-3,T}<\tau_{4i-2,T}}\sum_{t=\tau_{4i-3,T}}^{\tau_{4i-2,T}-1}\epsilon_{t}^{{m-1}}\epsilon^{2}_{t}\|\nabla f(\theta_{t})\|^{2}\Expect\left[\|\nabla f(\theta_{t})-g_{t}\|^{2}|\mathscr{F}_{t-1}\right]\right]\notag\\&\mathop{\le}^{(\text{i})}2Lc\Expect\left[\I_{\tau_{4i-3,T}<\tau_{4i-2,T}}\sum_{t=\tau_{4i-3,T}}^{\tau_{4i-2,T}-1}\epsilon_{t}^{{m+1}}\Expect\left[\|g_{t}\|^{2}|\mathscr{F}_{t-1}\right]\right]\notag\\&\mathop{\le}^{\text{Assumption \ref{assump:stochastic_gradient} Item $(b)$}}2LcG\Expect\left[\I_{\tau_{4i-3,T}<\tau_{4i-2,T}}\sum_{t=\tau_{4i-3,T}}^{\tau_{4i-2,T}-1}\epsilon_{t}^{{m+1}}\left(\|\nabla f(\theta_{t})\|^{2}+1\right)\right]\notag\\&\mathop{\le}^{(\text{ii})}2LcG\left(1+\frac{1}{\delta_{a,b}^{2}}\right)\Expect\left[\I_{\tau_{4i-3,T}<\tau_{4i-2,T}}\sum_{t=\tau_{4i-3,T}}^{\tau_{4i-2,T}-1}\epsilon_{t}^{{m+1}}\|\nabla f(\theta_{t})\|^{2}\right]\notag\\&\le 2LcG\left(1+\frac{1}{\delta_{a,b}^{2}}\right)\Expect\left[\sum_{t=\tau_{4i-3,T}}^{\tau_{4i-2,T}-1}\epsilon_{t}^{{m+1}}\|\nabla f(\theta_{t})\|^{2}\right] .
\end{align}
The detailed derivation of step \((\text{i})\) is as follows:
\begin{align*}
 &\quad\I_{\tau_{4i-3,T}<\tau_{4i-2,T}}\sum_{t=\tau_{4i-3,T}}^{\tau_{4i-2,T}-1}\epsilon_{t}^{{m-1}}\epsilon^{2}_{t}\|\nabla f(\theta_{t})\|^{2}\Expect\left[\|\nabla f(\theta_{t})-g_{t}\|^{2}|\mathscr{F}_{t-1}\right]\\&\le  \I_{\tau_{4i-3,T}<\tau_{4i-2,T}}2L\sum_{t=\tau_{4i-3,T}}^{\tau_{4i-2,T}-1}\epsilon_{t}^{{m+1}}(f(\theta_{t})-f^*)\Expect\left[\|\nabla f(\theta_{t})-g_{t}\|^{2}|\mathscr{F}_{t-1}\right] ,
\end{align*}
We need to note that  when \( \tau_{4i-3,T}<\tau_{4i-2,T}\), we simultaneously have \(\tau_{4i-3} < T \) and \( \tau_{4i-3} < \tau_{4i-2} \). This implies that for any \( t \in [\tau_{4i-3,T},\tau_{4i-2,T}-1] \), it holds that \( f(\theta_{t})-f^*\le b.\) Then we have:
\begin{align*}
 &\quad  \I_{\tau_{4i-3,T}<\tau_{4i-2,T}}2L\sum_{t=\tau_{4i-3,T}}^{\tau_{4i-2,T}-1}\epsilon_{t}^{{m+1}}(f(\theta_{t})-f^*)\Expect\left[\|\nabla f(\theta_{t})-g_{t}\|^{2}|\mathscr{F}_{t-1}\right]\notag\\&\le \I_{\tau_{4i-3,T}<\tau_{4i-2,T}}2Lb\sum_{t=\tau_{4i-3,T}}^{\tau_{4i-2,T}-1}\epsilon_{t}^{{m+1}}\Expect\left[\|\nabla f(\theta_{t})-g_{t}\|^{2}|\mathscr{F}_{t-1}\right]\notag\\&\le \I_{\tau_{4i-3,T}<\tau_{4i-2,T}}2Lb\sum_{t=\tau_{4i-3,T}}^{\tau_{4i-2,T}-1}\epsilon_{t}^{{m+1}}\Expect\left[\|g_{t}\|^{2}|\mathscr{F}_{t-1}\right] .
\end{align*}
For step \((\text{ii})\), we similarly need to note that when \( \tau_{4i-3,T}<\tau_{4i-2,T}\), we simultaneously have \(\tau_{4i-3} < T \) and \( \tau_{4i-3} < \tau_{4i-2} \). This implies that for any \( t \in [\tau_{4i-3,T},\tau_{4i-2,T}-1] \), it holds that \( \|\nabla f(\theta_{t})\|^{2} \ge \delta^{2}_{a,b}.\) 

Substituting the estimates of \( \Gamma_{i,1} \) and \( \Gamma_{i,2} \) from Eq. \ref{sgd__2} and Eq. \ref{sgd__3} back into Eq. \ref{sgd_0123123}, we obtain:
\begin{align}\label{dsadsak}
&\quad\sum_{i=1}^{+\infty}\Expect\Bigg[\I^{(i)}\sum_{t=\tau_{4i-2,T}}^{\tau_{4i-1,T}-1}\epsilon^{m}_{t}\|\nabla f(\theta_{t})\|^{2}\Bigg]\notag\\&{\le}\underbrace{\left(\left(\frac{3bL}{b-a}+\frac{L}{2}\right)\frac{2G}{\delta_{a,b}^{2}}+\frac{12Lb^{2}G}{(b-a)^{2}}\right)\left(1+\frac{1}{\delta_{a,b}^{2}}\right)}_{C_{1}(a,b)}\sum_{i=1}^{+\infty}\Expect\left[\sum_{t=\tau_{4i-3,T}}^{\tau_{4i-2,T}-1}\epsilon_{t}^{{m+1}}\|\nabla f(\theta_{t})\|^{2}\right] \notag\\&\quad+\underbrace{\frac{6ab}{b-a}\epsilon_{1}^{\frac{m-1}{2}}(\epsilon_{1}^{\frac{m-1}{2}}+\epsilon_{1}^{m-1})C_{\frac{b-a}{2}}}_{C_{2}(m,a,b)}.
\end{align}
On the other hand, we can easily obtain the following two equations:  
\[
\sum_{i=1}^{+\infty}\Expect\Bigg[\I^{(i)}\sum_{t=\tau_{4i-2,T}}^{\tau_{4i-1,T}-1}\epsilon^{m}_{t}\|\nabla f(\theta_{t})\|^{2}\Bigg]=\big[\nabla f(\theta_{t})\big]_{T,b,c}^{m}
\]  
and  
\[
\sum_{i=1}^{+\infty}\Expect\left[\sum_{t=\tau_{4i-3,T}}^{\tau_{4i-2,T}-1}\epsilon_{t}^{{m+1}}\|\nabla f(\theta_{t})\|^{2}\right]=\big[\nabla f(\theta_{t})\big]_{T,a,b}^{m+1}.
\]
Based on the two equalities above, we can finally obtain:
\begin{align*}
\big[\nabla f(\theta_{t})\big]_{T,b,c}^{m}&\le C_{1}(a,b) \big[\nabla f(\theta_{t})\big]_{T,a,b}^{m+1}+ C_{2}(m,a,b).
\end{align*}

\end{proof}

\end{document}